\newcommand{\submission}{}
\documentclass[11pt]{article}
\PassOptionsToPackage{compress}{natbib}

\bibliographystyle{unsrtnat}

\usepackage[preprint]{nips_2018}

\usepackage{amsmath}
\usepackage{amsthm}
\usepackage{amssymb}


\usepackage{times,wrapfig,amsfonts, bm,color,enumitem,algorithm,algpseudocode}
\usepackage{microtype}
\usepackage{booktabs} 
\usepackage{hyperref}
\usepackage{amsfonts}
\usepackage{mathtools}
\usepackage{bbm}
\usepackage{comment}
\usepackage{nicefrac}
\usepackage{xcolor} 
\usepackage{esint}
\usepackage{tikz}
\usetikzlibrary{spy,calc}
\usepackage{xparse}

\usepackage[font=small,justification=justified,
   format=plain]{caption}

%


\newtheorem{thm}{Theorem}[section]

\newtheorem{claim}{Claim}[section]
\newtheorem{cor}[thm]{Corollary}

\newtheorem{clemma}{CLemma}  
\newtheorem{cproof}{CProof}

\excludecomment{clemma}
\excludecomment{cproof}
\excludecomment{cclaim}

\newcommand{\st}{\text{ s.t.} }
\newcommand{\Th}{\text{th} }
\newcommand{\ie}{\textit{i.e.,} }


\newcommand{\sign}{\operatorname{sign}}

\newcommand{\norm}[1]{\left\lVert{#1}\right\rVert}
\newcommand{\abs}[1]{\left\lvert{#1}\right\rvert}
\newcommand{\ip}[2]{\left\langle {#1}, {#2} \right\rangle}

\DeclareMathOperator*{\argmin}{arg\,min}
\DeclareMathOperator*{\conv}{conv}

\newcommand{\iter}[1]{\left[{#1}\right]} 
\newcommand{\R}{\mathbb{R}}

\newcommand{\N}{\mathbb{N}}

\definecolor{cadmiumgreen}{rgb}{0.0, 0.5, 0.24}

\newcommand\notinsubmission[1]{\ifdefined\submission {} \else {{#1}} \fi}

\newcommand{\dnote}[1]{\notinsubmission{{\color{blue}[Daniel: #1]}}}
\newcommand{\inote}[1]{\notinsubmission{{\color{orange}[Itay: #1]}}}
\newcommand{\nnote}[1]{\notinsubmission{{\color{cadmiumgreen}[Nati: #1]}}}
\newcommand{\pnote}[1]{\notinsubmission{{\color{magenta}[Pedro: #1]}}}

\newcommand{\temp}[1]{\notinsubmission{{\color{purple}#1}}}
\newcommand{\todo}[1]{\notinsubmission{{\color{red}todo: #1}}}
\newcommand{\removed}[1]{}
\newcommand{\natinote}[1]{\notinsubmission{\nnote{#1}}}

\renewcommand{\eqref}[1]{(\ref{#1})}
\newcommand{\secref}[1]{Section~\ref{#1}}
\newcommand{\figref}[1]{Figure~\ref{#1}}

\newcommand{\thmref}[1]{Theorem~\ref{#1}}

\newcommand{\appref}[1]{Appendix~\ref{#1}}

\newcommand{\corref}[1]{Corollary~\ref{#1}}

\newcommand{\lemref}[1]{Lemma~\ref{#1}}

\newcommand{\func}[1]{h_{#1}}
\newcommand{\loss}{\ell}
\newcommand{\ThetaClass}{\Theta}
\newcommand{\AlphaClass}{\mathcal{A}}
\newcommand{\optf}{{f^{*}}}
\newcommand{\pwlf}{\tilde{f}}
\newcommand{\intbias}{c}
\def\relu#1{\left[{#1}\right]_+}

\newcommand{\Sd}{\mathbb{S}^{d-1}}  
\newcommand{\subF}{{\mathcal{F}}}
\newcommand{\oR}{\overline{R}}
\newcommand{\RP}{P}

\def\samples{N}
\def\smplIdx{n}
\newcommand{\Lp}[1]{\ell_{#1}} 
\newcommand{\featureSpace}{\mathbb{R}^d} 
\newcommand{\Eucost}{C}
\newcommand{\lyrIdx}{l}
\newcommand{\params}{\theta}
\newcommand{\msrDomain}{\temp{\Omega}}

\newcommand{\coef}[1][]{\wvec_{#1}^{\left(L\right)}}
\newcommand{\optcoef}[1][]{{\wvec^*_{#1}}^{\left(L\right)}}
\newcommand{\subnetNum}{k}
\newcommand{\matSize}{m}
\def\order{\nicefrac{2}{L}}


\def\prn#1{\left({#1}\right)} 
\newcommand{\half}{\frac{1}{2}}
\mathchardef\hyphen="2D

\newcommand*\diff{\mathop{}\!\mathrm{d}}

\newcommand{\myvec}[1]{{\mathbf{#1}}}
\newcommand{\xvec}{\myvec{x}}
\newcommand{\wvec}{\myvec{w}}
\newcommand{\bvec}{\myvec{b}}
\newcommand{\wmat}{W}
\newcommand{\tv}{\mathcal V^{(2)}}
\newtheorem{lemma}{Lemma}[section]
\def\subfunc#1#2{v\left({#1}, #2\right)}
\def\weights{\mathcal{W}}
\def\lweights#1#2{\wmat^{\left(#1\right)}_{#2}}
\def\ulweights#1#2{\bar{\wmat}^{\left(#1\right)}_{#2}}
\def\optlweights#1#2{\wmat^{*,\left(#1\right)}_{#2}}
\def\optulweights#1#2{\bar{\wmat}^{*,\left(#1\right)}_{#2}}
\def\unit{\bar{\weights}}
\def\sphere{\mathcal{S}}

\newcommand{\valpha}[1][]{\ifthenelse{\equal{#1}{}}{\boldsymbol{\alpha}}{\boldsymbol{\alpha}{(#1)}}}
\newcommand{\vbeta}[1][]{\ifthenelse{\equal{#1}{}}{\boldsymbol{\beta}}{\boldsymbol{\beta}{(#1)}}}
\newcommand{\vb}[1][]{\ifthenelse{\equal{#1}{}}{\boldsymbol{b}}{\boldsymbol{b}{(#1)}}}

\def\lifcost#1#2#3{\mathcal{\mathcal{B}}_{#1}^{#2}\prn{#3}} 

\def\ouralpha#1#2{\valpha^{#1}_{#2}}
\def\lspace{\temp{\mathcal{L}^\infty\prn{?}}}

\title{How do infinite width bounded norm networks look in function space?}

\usepackage{times}

\author{
\textbf{Pedro Savarese$^1$} \hfill \texttt{savarese@ttic.edu} \\
\AND
\textbf{Itay Evron$^2$} \hfill \texttt{evron.itay@gmail.com} \\
\AND
\textbf{Daniel Soudry$^2$}  \hfill \texttt{daniel.soudry@gmail.com} \\
\AND
\textbf{Nathan Srebro$^1$} \hfill \texttt{nati@ttic.edu} \\
\AND
\vskip 0.15in
$^1$ Toyota Technological Institute at Chicago, Chicago IL, USA \hfill $ $ \\
$^2$ Department  of  Electrical  Engineering,  Technion,  Haifa, Israel \hfill $ $ \\
}

\begin{document}

\maketitle

\begin{abstract}%
   We consider the question of what functions can be captured by ReLU networks with an unbounded number of units (infinite width), but where the overall network Euclidean norm (sum of squares of all weights in the system, except for an unregularized bias term for each unit) is bounded; 
   or equivalently what is the minimal norm required to approximate a given function.  
   For functions $f:\R\rightarrow\R$ and a single hidden layer, we show that the minimal network norm for representing $f$ is 
   $\max(\,\int\! \abs{f''(x)}\diff x\, ,\,\abs{f'(-\infty) + f'(+\infty)}\,)$, 
   and hence the minimal norm fit for a sample is given by a linear spline interpolation.  
   \notinsubmission{We also investigate deeper bounded-norm infinite-width architectures and show that they correspond to minimizing a non-convex bridge penalty.}
\end{abstract}

\removed{
\begin{keywords}%
  List of keywords%
\end{keywords}
}

\section{Introduction}

\label{sec-intro}

Empirical and theoretical results suggest that neural network models
used in practice are not constrained by their size (number of units,
or number of weights) but perhaps achieve complexity control by controlling the magnitude of the weights
\cite[e.g.][]{bartlett1997valid,neyshabur2014search,zhang2016understanding}, either explicitly or implicitly \citep{soudry2018journal,gunasekar2018implicit}.

  In fact, it is reasonable to think of networks as essentially having
  infinite size (having an infinite number of units), and controlled
  only through some norm of the weights.  Using infinite (or
  unbounded) size networks we can approximate virtually any
  function, and so in training such a model we are essentially
  searching over the space of all functions.  
  
  Learning can be thought
  of as searching over the entire function space for a function with
  small \textit{representation cost}, given by the minimal norm required to represent it in the chosen
  infinite size architecture.  Understanding learning with infinite
  (or unbounded) size networks thus relies crucially on understanding
  this ``representation cost'', which is the actual inductive bias of
  learning.

  Equivalently, we can think of this question as asking what functions
  can be represented, or approximated arbitrarily well, with an
  infinite-size bounded-norm network.  There has been considerable work for
  over three decades on the question of what functions can be
  approximated by neural networks, establishing that any\footnote{Any
    continuous or appropriately smooth function, depending on the
    precise model and topology.}  function can be approximated by a
  large enough network, and studying the {\em size} (number of units)
  required to achieve good approximation \cite[e.g.][]{hornik1989multilayer,cybenko1989approximations,barron1993universal,pinkus1999approximation}.  However, we are
  not aware of prior work establishing what {\em norm} is required for
  approximation, which, based on our current understanding of deep
  learning, is arguably the more important question (See Appendix \ref{app:barron} for a detailed comparison to \citet{barron1993universal}).
  
  There is a
  significant difference in studying approximation in terms of
  size (number of units) vs norm: most smooth functions require
  unbounded size in order to be approximated arbitrarily well (after
  all, functions that can be represented with a bounded number of
  weights form a finite dimensional class and thus occupy only zero
  measure in the infinite dimensional function space). Therefore,
  in classical approximation results the size goes to infinity as the
  approximation error goes to zero, and results focus on how quickly the size goes to infinity, or on approximation up to finite error.  In contrast, as we shall see here, broad classes
  of smooth functions can be approximated arbitrarily well, and even perfectly represented, with a
  bounded norm---the norm need not increase for the approximation to
  improve.  Can we characterize this class of functions?

  In this paper we consider infinite-width ReLU networks where the
  overall Euclidean norm (sum of squares of all weights in the system)
  is controlled.  More specifically, we consider networks with a single hidden
  layer, consisting of an unbounded number of rectified linear units
  (ReLUs)---see Section \ref{sec:reluNetworks} for a precise
  definition.  Such two-layer infinite networks were studied in the
  context of generalization \citep{neyshabur2015norm} and optimization
  \citep{bach2017convex,chizat2018global} and are were shown by
  \cite{neyshabur2014search} to be equivalent to ``convex neural
  nets'' as studied by \citet{bengio2006convex}.  

  In Theorem \ref{thm:main} we show that for univariate
  functions $f:\R\to\R$, i.e.~with a single real-valued input, optimizing a network's parameters while
  controlling the overall Euclidean norm is
  equivalent to fitting a function by controlling:
  \begin{equation*}
    \max\left( \int \abs{f''(x)} \diff x, \abs{f'(-\infty)+f'(+\infty)} \right).
  \end{equation*}
  We further show that fitting data while minimizing this complexity yields to linear spline interpolation.  Interestingly, such linear splines are
  exactly the predictors recently studied by \cite{belkin2018overfitting} as a model for understanding interpolation learning, though they fell short of
  connecting such models to neural networks.  Our work thus closes the loop and establishes a concrete connection between their analysis and neural network learning.
  
  We see then that even for univariate functions, Euclidean norm regularization on the weights already gives rise to very rich and natural induced bias in function space, that is not at all obvious and perhaps even surprising.  We of course view this as a starting point for studying multivariate functions, and in Section \ref{sec:highdim} discuss how our techniques can be extended, and conjecture an answer related to the integral of the nuclear norm of the Hessian.

  Our derivation of the induced complexity can be seen as an
  application of Green's function of the second derivative, and in
  Section \ref{sec:greens} we elaborate on this view, and describe how
  fitting a two-layer network can be viewed as a method for solving a
  variational problem using Green's functions.

\notinsubmission{
  Finally, we study the effects of depth and show that for a specific
  infinitely wide ReLU architecture of depth $L$, the induced
  complexity corresponds to an $(2/L)$-norm in a certain space,
  instead of $1$-norm for depth-2 networks.  This also agrees
  with prior work \citep{gunasekar2018implicit} on deep convolutional {\em linear} networks,
  suggesting different and ``sparser'' complexity control for deeper
  networks. 
}


\section{Infinite Width ReLU Networks}\label{sec:reluNetworks}

We consider 2-layer networks, with a single hidden layer consisting of an unbounded number of rectified linear units (ReLUs), defined by:
\begin{equation}
	\func{\params}(\xvec) = \sum_{i=1}^k w^{(2)}_i \relu{\ip{\wvec^{(1)}_i}{\xvec} + b^{(1)}_i} 
	+ 
	b^{(2)}
\label{eq-relu_net}
\end{equation}
over $\xvec\in\R^d$, where $\wvec^{(1)}_i$ are the rows of $\wmat^{(1)}$ and 
\begin{multline}
    \label{eq:ThetaClass2}
     \params \in \ThetaClass_2 = 
    \left\{ 
        \params=
        \left(
            k,\wmat^{(1)},\bvec^{(1)},
            \wvec^{(2)},b^{(2)}
        \right) 
        ~ \right| \\         
        \left.
        k\in\N, \wmat^{(1)}\in\R^{k \times d} , \bvec^{(1)}\in\R^{k}, \wvec^{(2)}\in\R^k, \bvec^{(2)}\in\R
    \right\}
\end{multline}
We associate with each network the squared Euclidean norm of the non-bias weights\footnote{Removing the output unit bias term $b^{(2)}$ will not change any of the definitions or results, since it can be simulated at no cost by a unit with infinitely large $b^{(1)}_i$, infinitesimally small $w^{(2)}_i$, and $\wvec^{(1)}_i=0$.  
The unregularized bias terms $b^{(1)}_i$ in the hidden layer are important to our analysis, and removing them or regularizing them would substantially change the definitions and results.}:
\begin{equation}
\begin{split}
    \Eucost(\params) &= 
	\half\left( 
    	\norm{\wvec^{(2)}}_2^2 + \norm{W^{(1)}}_F^2 
    \right)
	= 
    \half 
     \sum_{i=1}^k
    \left( 
        (w^{(2)}_i)^2 + \norm{\wvec^{(1)}_i} _2^2
	\right)
\end{split}
\end{equation}
and consider the minimum \dnote{infimum?} norm required to implement a given function $f:\R^d\to\R$,
\begin{align}\label{eq:Rf}
\begin{split}
R\prn{f} = {\inf_{\params \in \ThetaClass_2}}~
\Eucost(\params) 
 ~\text{\st}~
\func{\params} = f
\end{split}
\end{align}
Following\footnote{\citeauthor{neyshabur2014search} do not consider an unregularized bias, although this does not change the essence of their arguments.  In Appendix \ref{app:neyshaburreproof} we replicate their result, explicitly allowing for an unregularized bias.} \citet[Theorem 1]{neyshabur2014search}, minimizing $\Eucost(\params)$ is equivalent to constraining the norm of the weights $\wvec^{(1)}_i$ of each hidden unit and minimizing the $\ell_1$ norm of the top layer. That is, we can express \eqref{eq:Rf} as:
\begin{equation}\label{eq:finitel1}
    R(f) = {\inf_{\params \in \ThetaClass_2}}~\norm{\wvec^{(2)}}_1~\quad\text{s.t.}~h_\params = f~,~ 
    \forall{i}: \norm{\wvec^{(1)}_i}_2 = 1
\end{equation}
The above definitions of $R(f)$ require $f$ to be exactly implementable by some finite-size ReLU network, and so $R(f)$ is finite only for piece-wise linear functions with a finite number of pieces.  
However, any continuous function can be approximated by increasing the number of units arbitrarily.  Since we are not concerned with the number of units, only the norm, our central object of study is thus the norm required to capture a function as the number of units increases.  
This is captured by:
\begin{align}
\begin{split}
\oR\prn{f} = \lim_{\epsilon \to 0} \left( 
{\inf_{\params \in \ThetaClass_2}}~
\Eucost(\params) 
 ~\text{\st}~
\norm{\func{\params} - f}_\infty \leq \epsilon \right)
\end{split}
\label{eq:oR}
\end{align}

\removed{
\natinote{Move this as comment elsewhere, either is a short comment or footnote when presenting the networks, or to the discussion Section}
\begin{claim}[Claim on output bias]
\begin{align}
\begin{split}
R\prn{f} = {\min_{\params \in \ThetaClass_2, c \in \R}}~
\Eucost(\params) 
 ~~\text{\st}~
\func{\params} + c = f
\end{split}
\end{align}

Sketch: consider unit $\{w^{(2)}_k = \frac1{\gamma}, w^{(1)}_k = \vec 0, b^{(1)}_k = \gamma c \}$, as $\gamma \to \infty$.
\end{claim}
}
To understand $\oR(f)$ observe that by \eqref{eq:finitel1}, the sub-level set $\subF_B = \left\{ f \middle| R(f)\leq B\right\}$ is a scaling of the symmetric convex hull of ReLUs, plus arbitrary constant functions:
\begin{equation}
\subF_B = 
B \cdot \conv \left\{ 
    \xvec\mapsto \pm [\ip{\wvec}{\xvec}+b]_+ 
    \middle| \wvec \in\Sd, b\in\R 
    \right\} 
+ \left\{ \xvec \mapsto b_0 \middle| b_0\in\R \right\}\, ,
\end{equation}
where $\Sd=\{ \wvec\in\R^d | \norm{\wvec}_2=1 \}$.  
The sub-level sets of $\oR(f)$ are the closures $\overline{\subF_B}$ of the above convex hulls, 
or in other words the set obtained by taking all expectations w.r.t~all possible distributions, 
as opposed to only finite convex combinations (i.e.~only expectations w.r.t.~uniform discrete measures). 
We therefore have an infinite dimensional $L_1$ regularized problem: \natinote{Charlie/Daniel: is this rigorous and clear enough?  Do we need to cite something or say some other magic words or take care of some condition?  Is the $\norm{\cdot}_\infty$ norm above fine or do we want to work with some other topology?} \pnote{any other norm would imply that we'd have to discuss non-continuous functions, correct?}
\begin{align}\label{eq:infl1}
\overline R\prn{f} = {\inf_{\alpha \in \AlphaClass, \intbias \in \R}}~
\norm{\alpha}_1 
 ~~\text{\st}~
h_{\alpha, \intbias} = f
\end{align}
where $\AlphaClass$ is the set of all signed measures on $\Sd \times \R$, $\norm{\alpha}_1=\int\!\diff\!\abs{\alpha}\removed{=\int \abs{\alpha(\wvec,b)}\diff\wvec\diff b}$, and
\begin{align}\label{eq:alpha}
\begin{split}
h_{\alpha, \intbias} 
= 
\int_{\mathbb S^{d-1} \times \R} \!\!\!\!\! \relu{\ip{\wvec}{\xvec} + b} \diff \alpha(\wvec,b) +\intbias.
\end{split}
\end{align}
Learning an unbounded width ReLU network $h_\params$ by fitting some loss functional $L(\cdot)$ while controlling the norm $\Eucost(\params)$ by minimizing 
\begin{equation}\label{eq:minC}
    \min_{\params\in\ThetaClass_2} L(\func{\params}) + \lambda \cdot \Eucost(\params)
\end{equation}
is thus equivalent to learning a function $f$ while controlling $\oR(f)$:
\begin{equation}\label{eq:minR}
    \min_{f:\R\to\R \natinote{ \in \text{in what class?}}} L(f) + \lambda \cdot \oR(f)
\end{equation}
\pnote{can we even express the class here in a meaningful way without characterizing functions for which $\oR<\infty$ first, which only happens later on?}
From \eqref{eq:infl1}, it is clear that $\oR(f)$, and so also \eqref{eq:minR} are convex (although \eqref{eq:minC} is not!).  
Furthermore, for any lower semi-continuous\footnote{Lower semi-continuous in its first argument.  This is required only to ensure the minimum of \eqref{eq:minC} is attained.} scalar loss function 
$\loss:\R\times\R\to\R$, 
if we consider the empirical loss $L(f)=\sum_{i=1}^N \loss(f(x_i);y_i)$ over $\samples$ points, \eqref{eq:minC} has a minimizer where $\alpha$ is supported on at most $\samples$ weight vectors, 
\ie~uses at most $N+1$ units \citep{rosset2007l1}.  
Thus, as long as the number of units is at least as large as the number of data points, we already have that \eqref{eq:minC} is equivalent to \eqref{eq:minR} (strictly speaking: any global minimum of \eqref{eq:minC} also minimizes \eqref{eq:minR}).  Thus \eqref{eq:minR} not only tells us what happens in some infinite limit, but also precisely describes what we are minimizing with a finite, but sufficiently large, number of units.

Our goal is thus to calculate $\oR(f)$, for any function $f:\R^d\rightarrow\R$, and in particular characterize when it is finite, i.e.~understand which function can be approximated arbitrarily well with bounded norm but unbounded width ReLU networks. 

\natinote{Charlie: can you make sure the above is rigorous?  What norm over functions / topology / function space should we use?  Does it matter?}

\section{One-dimensional Functions}
\label{sec-1d}

\removed{

\begin{clemma}[Parameter alignment]
For any $2$-layer network $h_\Theta$ with $1$-dimensional inputs and weights $w^{(1)}, w^{(2)} \in \R^k$, if $\norm{h_\Theta } = R(h_\Theta)$, then:
\begin{equation*}
	\forall{i \in \iter{k}}: \quad |w^{(1)}_i| = |w^{(2)}_i|
\end{equation*}
\label{lemma-pal}
\end{clemma}
\begin{cproof}
    First note that we can rule out networks which, for some $i \in [k]$, exactly one of $w^{(2)}_i, w^{(1)}_i$ is zero. These networks can trivially have their norm decreased by having $w^{(2)}_i = w^{(1)}_i = 0$, which does not change its output.
    
    For the remaining networks, assume that the claim does not hold. For all units $i$ with non-zero weights, define $c_i = \sqrt{\frac{|w^{(2)}_i|}{|w^{(1)}_i|}}$, and consider parameters $\tilde \Theta$ given by, $\tilde w^{(1)}_i = c_i w^{(1)}_i $, $\tilde w^{(2)}_i = \frac{w^{(2)}_i}{c_i}$, $\tilde b^{(1)}_i = c_i b^{(1)}_i$. First, check that, for all such units:
    \begin{equation*}
    \begin{split}
	    \tilde w_i^{(2)}  
	    \relu{\tilde  w_i^{(1)} x + \tilde b_i^{(1)}} 
	    = \frac{w_i^{(2)}}{c_i} 
	    \relu{c_i(  w_i^{(1)} x +  b_i^{(1)})} 
	     = w_i^{(2)} \relu{w_i^{(1)} x + b_i^{(1)}}
	\end{split}
    \end{equation*}
    since $c_i > 0$ and $\relu{\cdot}$ is 
    $1$-positive-homogeneous. 
    Therefore $h_{\tilde \Theta} = h_\Theta$. 
    Moreover, we have, 
    for all units with non-zero weights:
    \begin{equation*}
	    (\tilde w^{(2)}_i)^2 +  (\tilde w^{(1)}_i)^2 = 2 |w^{(1)}_i w^{(2)}_i| \leq (w^{(2)}_i)^2 +  (w^{(1)}_i)^2
    \end{equation*}
    Since $|w^{(1)}_i| \neq |w^{(2)}_i|$ for some unit $i$ by assumption, the inequality will be strict for that unit. 
    Therefore $\norm{h_{\tilde \Theta}} < \norm{h_\Theta}$, contradicting $\norm{h_\Theta} = R(h_\Theta)$.
\end{cproof}
}

Our main result is an exact and complete characterization of $\oR(f)$ for univariate functions:

\begin{thm}\label{thm:main}
For any  $f: \R \to \R$, we have:
\begin{equation*}
	\oR(f) = \max \left( \, \int_{-\infty}^\infty \!\!\!\!\!\abs{f''(x)} \diff x  \, , \, \abs{f'(-\infty) + f'(\infty)} \, \right) 
	\, \leq  \int_{-\infty}^\infty \!\!\!\!\!\abs{f''(x)} \diff x + 2 \inf_x \abs{f'(x)}
\end{equation*}
\removed{where $\tv(f) = \int_{-\infty}^\infty \abs{f''(x)} \diff x$ if $f \in C^2$, and $\tv(f) = \sum_{j=1}^k \abs{l_j - l_{j-1}}$ if $f$ is PWL.}
\end{thm}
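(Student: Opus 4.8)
The plan is to work directly from the infinite-dimensional $\ell_1$ formulation \eqref{eq:infl1} and reduce the computation of $\oR(f)$ to a small optimization over how the ``curvature'' of $f$ is split between positive- and negative-slope ReLUs. Specializing \eqref{eq:infl1} to $d=1$, we have $\Sd = \{-1,+1\}$, so every admissible measure decomposes as $\alpha = \mu_+ \oplus \mu_-$, where $\mu_+$ (resp.\ $\mu_-$) is the restriction to $w=+1$ (resp.\ $w=-1$), viewed as a signed measure in the bias $b\in\R$; note $\norm{\alpha}_1 = \norm{\mu_+}_1 + \norm{\mu_-}_1$. First I would compute, for $h = h_{\alpha,\intbias}$, the quantities an admissible representation must match. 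Differentiating twice gives the distributional identity $h'' = \nu_+ + \nu_-$, where $\nu_+,\nu_-$ are the pushforwards of $\mu_+,\mu_-$ under $b\mapsto -b$ and $b\mapsto b$ respectively (the kink of each unit contributes a unit atom to $h''$ at its breakpoint); and taking $x\to\pm\infty$ gives $h'(+\infty) = \nu_+(\R)$ and $h'(-\infty) = -\nu_-(\R)$. Since $h''$ together with $h'(+\infty)$ determines $h'$, and the unpenalized constant $\intbias$ absorbs the remaining additive degree of freedom, matching $h_{\alpha,\intbias} = f$ is equivalent to the constraints $\nu_+ + \nu_- = f''$ and $\nu_+(\R) = f'(+\infty)$ (which forces $\nu_-(\R) = -f'(-\infty)$, since $\nu_+(\R)+\nu_-(\R)=f''(\R)=f'(+\infty)-f'(-\infty)$).

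This reduces the problem to
\begin{equation*}
\oR(f) = \min_{\nu_+,\nu_-}\ \norm{\nu_+}_1 + \norm{\nu_-}_1 \quad\text{s.t.}\quad \nu_+ + \nu_- = f'',\ \ \nu_+(\R) = f'(+\infty),
\end{equation*}
an optimization over a fixed target measure with a single scalar mass constraint. Throughout I would treat $f''$ as the distributional second derivative, a finite signed measure; if $f'$ is not of bounded variation or lacks limits at $\pm\infty$, no finite-norm measure qualifies and both sides of the theorem are $+\infty$.

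For the lower bound I would combine two elementary estimates: the triangle inequality $\norm{\nu_+}_1 + \norm{\nu_-}_1 \geq \norm{\nu_+ + \nu_-}_1 = \int\abs{f''(x)}\diff x$, and the mass bounds $\norm{\nu_\pm}_1 \geq \abs{\nu_\pm(\R)}$, giving $\norm{\nu_+}_1 + \norm{\nu_-}_1 \geq \abs{f'(+\infty)} + \abs{f'(-\infty)} \geq \abs{f'(+\infty)+f'(-\infty)}$; together these yield the ``$\geq\max(\cdots)$'' direction. For the matching upper bound I would use the Jordan decomposition $f'' = g^+ - g^-$ with $P = g^+(\R)$, $Q = g^-(\R)$. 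When $f'(+\infty) \in [-Q,P]$ one can split the positive and negative parts between $\nu_+$ and $\nu_-$ \emph{without overlap}, taking $\nu_+ = \lambda g^+ - \rho g^-$ with $\lambda,\rho \in [0,1]$ chosen so that $\lambda P - \rho Q = f'(+\infty)$; then $\norm{\nu_+}_1 + \norm{\nu_-}_1 = P+Q = \int\abs{f''(x)}\diff x$, and a short check shows this regime is exactly $\int\abs{f''(x)}\diff x \geq \abs{f'(+\infty)+f'(-\infty)}$. When $f'(+\infty)$ lies outside $[-Q,P]$, I would add a cancelling pair of atoms $\pm\beta\delta_{t_0}$ to $\nu_+$ and $\nu_-$ (leaving $f''$ unchanged) with $\beta$ chosen to meet the mass constraint; a direct computation gives objective value $\abs{f'(+\infty)} + \abs{f'(-\infty)} = \abs{f'(+\infty)+f'(-\infty)}$, which in this regime is the larger term.

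Finally, the stated inequality follows from $f'(+\infty) = f'(x) + \int_x^\infty f''$ and $f'(-\infty) = f'(x) - \int_{-\infty}^x f''$: bounding $\abs{f'(+\infty)} + \abs{f'(-\infty)} \leq 2\abs{f'(x)} + \int\abs{f''(x)}\diff x$ and taking the infimum over $x$ gives $\abs{f'(+\infty)+f'(-\infty)} \leq \int\abs{f''(x)}\diff x + 2\inf_x\abs{f'(x)}$, while $\int\abs{f''(x)}\diff x$ trivially respects the bound, so the maximum does too. I expect the main obstacle to be the measure-theoretic rigor of the correspondence in the first step: justifying $h'' = \nu_+ + \nu_-$ and the limits $h'(\pm\infty)$ for arbitrary signed measures $\alpha$ rather than finitely supported ones, and conversely showing that every $f$ with $f'$ of bounded variation is realizable, so that the reduction to the measure optimization is valid in both directions.
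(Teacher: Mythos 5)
Your proof is correct, and it shares the paper's reduction step but solves the key optimization by a genuinely different argument. Like the paper, you exploit the fact that the ReLU is a Green's function for $d^2/dx^2$: the distributional identity $h'' = \nu_+ + \nu_-$ together with the limits $h'(+\infty) = \nu_+(\R)$, $h'(-\infty) = -\nu_-(\R)$ reduces \eqref{eq:infl1} to minimizing $\norm{\nu_+}_1 + \norm{\nu_-}_1$ subject to $\nu_+ + \nu_- = f''$ and one scalar mass constraint. This is exactly the paper's program \eqref{eq:alphaminusproblem} written in the coordinates $\alpha_\pm = \nu_+ \pm \nu_-$: the paper fixes $\alpha_+ = f''$ and optimizes over $\alpha_-$ subject to $\int \alpha_- = f'(-\infty) + f'(+\infty)$, which is the same constraint as yours. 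The divergence is in how the program is solved. The paper introduces a Lagrange multiplier and reads the answer off the formal stationarity condition \eqref{eq:L0} via a case analysis on the sign of $\lambda$; differentiating a Lagrangian pointwise with respect to a signed measure is heuristic and requires interpretation when $f''$ has atoms or singular parts. You instead give matching primal certificates: the lower bound follows from the triangle inequality for total variation plus $\norm{\nu_\pm}_1 \geq \abs{\nu_\pm(\R)}$, and the upper bound from explicit feasible pairs built on the Jordan decomposition $f'' = g^+ - g^-$ --- a non-overlapping split of $g^\pm$ when $\abs{f'(-\infty)+f'(+\infty)} \leq \int\abs{f''}$, and cancelling atoms $\pm\beta\delta_{t_0}$ otherwise. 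I verified your regime identification ($f'(+\infty)\in[-Q,P]$ is precisely $\abs{f'(-\infty)+f'(+\infty)}\leq\int\abs{f''}$) and the arithmetic in both cases; they are right. Your route is more elementary and measure-theoretically cleaner; what the paper's KKT analysis buys in exchange is a description of the whole optimal set (e.g., for $\lambda=0$ any $\alpha_-$ with $\abs{\alpha_-}\leq\abs{f''}$ and the correct total mass is optimal), which underlies the non-uniqueness discussion around Figure \ref{fig:2layernets}, whereas your construction exhibits only particular optima. One caveat is common to both proofs: the identification of $\oR$ with the measure program \eqref{eq:infl1}, and the existence of the limits $f'(\pm\infty)$ when $\int\abs{f''}<\infty$, are inherited from Section \ref{sec:reluNetworks} rather than re-derived; your closing remark correctly flags this as the place where additional rigor would be needed, and it is no worse a gap in your argument than in the paper's.
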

where $f''$ is the weak (distributional) 2nd derivative \removed{\citep{evans1991measure}\natinote{Does this book define distributions and weak derivatives??}} and so the integral is well defined even if $f$ is not differentiable, and is equal to the total variation of $f'$.   We also denote $f'(\infty)=\lim_{x\to\infty}f'(x)$ and $f'(-\infty)=\lim_{x\to-\infty}f'(x)$, noting that if $\int_{-\infty}^\infty \!\!\abs{f''(x)} \diff x$ is finite, both limits must exist.   The inequality and final expression are interpretable when $f$ is differentiable (though this can be relaxed).
\natinote{Is this rigorous, even if $f$ is not differentiable?}
In \thmref{thm:main}, its proof, and throughout, we work with distributions, 
rather than only functions, and as is common, we slightly abuse notation and use the same symbols to refer to both a function or measure and its associated distribution.

\begin{proof}
     In one dimension, we have that $w \in \Sd = \{\pm 1\}$ and it will be convenient for us to reparametrize the ReLUs as $\relu{w(x-b)}$  instead of $\relu{wx + b}$ (by transforming $(w,b) \mapsto (w,-wb)$, which does not change $\oR(f)$). In this form, $b$ exactly captures the threshold where a unit with parameters $(w,b)$ activates ($w = +1$) or deactivates ($w = -1$). For any representation $f=h_{\alpha,c}$ of this form we have:
    \begin{align}\label{eq:conv}
    f(x) &= \int_{\R} \left( \alpha(1,b) \relu{x - b}  + \alpha(-1,b) \relu{b - x} \right) \diff b +\intbias
    \end{align}
     where we are treating the measures over $b$ as distributions.  Taking the derivative twice w.r.t. $x$:
    \begin{align}
    f'(x) &= \int_\R \left( \alpha(1,b) H(x-b) - \alpha(-1,b) H(b-x) \right) \diff b \label{eq:fprime}\\
    f''(x) &= \int_\R \left( \alpha(1,b) + \alpha(-1,b) \right) \delta_x(b) \diff b  \notag 
    \\
    &= \alpha(1,x) + \alpha(-1,x) = \alpha_+(x) \label{eq:key}
    \end{align}
    where $H(z)$ is the Heaviside step function ($H(z)=1$ when $z>0$ and zero otherwise), whose distributional derivative is the dirac distribution $\delta_a(z)$, which assigns point-mass to $z=a$, and we defined $\alpha_+(b)=\alpha(1,b)+\alpha(-1,b)$.
    
    We see that the measure $\alpha$ that represents a function $f$ is \textit{almost} unique: the component $\alpha_+$ corresponding to the total (signed) mass 
    is unique and determined precisely the second derivative of $f$. 
    The only flexibility is in shifting mass between the forward and backward sloping ReLUs with threshold $b$, \ie~between $\alpha(1,b)$ and $\alpha(-1,b)$.  
    We denote this component by $\alpha_-(b)=\alpha(1,b)-\alpha(-1,b)$, which together with $\alpha_+$ defines $\alpha$ as $\alpha(w,b)=\half(\alpha_+(b)+w\cdot\alpha_-(b))$.  
    As the following calculation shows the component $\alpha_-$ only contributes an affine component to $f=h_{\alpha,c}$:
    \begin{align}
    f(x) &= \int_{\R} \left( \alpha(1,b) \relu{x - b}  + \alpha(-1,b) \relu{b - x} \right) \diff b + \intbias  \\*
    \removed{ & = \int_{\R} \frac{\abs{x - b} + (x-b)}2 \diff \alpha(1,b) + \int_{\R} \frac{\abs{b-x} + (b-x)}2 \diff \alpha(-1,b) \diff b + \intbias \\}
    &= \half \int_{\R} \alpha_+(b) \abs{x - b} \diff b + \half \int_{\R} \alpha_-(b) (x-b)\diff b + \intbias \\*
    &= \half  \int_\R f''(b) \abs{x-b} \diff b + \left( \half \int_{\R} \alpha_-(b)\diff b\right) x  + \left( -\int_{\R} b \alpha_-(b) \diff b + \intbias \right) \label{eq:affine}
    \end{align}
    \removed{
    where we used the fact that $\relu{x} = \frac{\abs{x} + x}2$. Now, let $\diff \alpha_+(b) = \diff \alpha(1,b) + \diff \alpha(-1,b)$ and $\diff \alpha_-(b) = \diff \alpha(1,b) - \diff \alpha(-1,b)$. We get:
    \begin{align}
    \begin{split}
    h_{\alpha, \intbias}(x) &= \intbias + \frac12 \int_{\R} \abs{x - b} \diff \alpha_+(b) + (x-b) \diff \alpha_-(b) \\
    & = \intbias + \frac12 \left( \int_{\R} \abs{x - b} \diff \alpha_+(b) + x \int_{\R} \diff \alpha_-(b) - \int_{\R} b \diff \alpha_-(b) \right)
    \end{split}
    \label{eq-alphap-alpham}
    \end{align}
    First, we will show that, if $f = h_{\alpha, \intbias}$, then $\alpha_+$ is unique and given by $\alpha_+(b) = f''(b)$. By differentiating Equation~\eqref{eq-alphap-alpham} twice w.r.t. $x$:
    \begin{align}
    \begin{split}
    h'_{\alpha, \intbias}(x) &= \frac12 \int_{\R} \sign\left({x - b}\right) \diff \alpha_+(b) + \diff \alpha_-(b)
    \end{split}
    \label{eq-int_diff1}
    \end{align}
    \begin{align}
    \begin{split}
    h''_{\alpha, \intbias}(x) &= \int_{\R} \delta(x - b) \diff \alpha_+(b) = \alpha_+(x)
    \end{split}
    \label{eq-int_diff2}
    \end{align}
    where $\delta(x-b)$ assigns point-mass to $b=x$, and zero-mass to $b\neq x$. Hence, if $f = h_{\alpha, \intbias}$, we have that $f''(x) = \alpha_+(x)$. 
    On the other hand, $\alpha_-$ captures a linear term of $h_{\alpha, \intbias}$ and is not unique (more specifically, 
    the term $x \int_{\R} \diff \alpha_-(b)$ in Equation~\eqref{eq-alphap-alpham}, 
    since $\int_{\R} b \diff \alpha_-(b)$ can always be compensated by an appropriate choice of $\intbias$). To see this, consider:
    }
    Our only constraint in choosing $\alpha_-$ is thus in getting the correct linear term in \eqref{eq:affine}, as we can always adjust the constant term using the bias $\intbias$ without affecting $\oR(f)$.  
    To understand what this linear correction must be, 
    we can use \eqref{eq:fprime} to evaluate $f'(-\infty)$ and $f'(+\infty)$ (note that if $\int \abs{f''} \diff x = \int \abs{\alpha_+(b)} \diff b \leq \int \abs{\diff \alpha}$ is finite, then $f'$ must converge at $\pm\infty$ \natinote{Charlie---is this statement fine as is?}) and we get:
    \begin{align}
    f'(-\infty) + f'(+\infty) 
    = \int_\R (0-\alpha(-1,b))\diff b + \int_\R (\alpha(1,b)-0)\diff b
    = \int_\R \alpha_-(b) \diff b
\removed{    
    \frac12 \left( \int_{\R} \left( \diff \alpha_-(b) -\diff \alpha_+(b) \right) + \int_{\R} \left(\diff \alpha_-(b) + \diff \alpha_+(b) \right) \right) = \int_{\R} \diff \alpha_-(b)              }
    \label{eq-const_alpham}
    \end{align}
    Any measure $\alpha_-$ that integrates as \eqref{eq-const_alpham}, in conjunction with $\alpha_+=f''$ and an appropriate $\intbias$, will yield $f=h_{\alpha,\intbias}$ with
    \begin{equation}\label{eq:objalphaminus}
        \norm{\alpha}_1 = \frac12 \int_\R \left( \abs{f''(b) + \alpha_-(b)}  +  \abs{f''(b) - \alpha_-(b)}\right) \diff b.
    \end{equation}
    To minimize $\norm{\alpha}_1$ we must therefore solve the following convex program:
    \begin{align}\label{eq:alphaminusproblem}
    \begin{split}
        \min_{\alpha_-} &\frac12 \int_\R \left( \abs{f''(b) + \alpha_-(b)}  +  \abs{f''(b) - \alpha_-(b)}\right) \diff b\\
        \textrm{s.t.} &\quad \int_{\R} \alpha_-(b) \diff b = f'(-\infty) + f'(\infty)
    \end{split}
    \end{align}
    Introducing the Lagrange multiplier $\lambda\in\R$, and setting the derivative of the Lagrangian $\mathcal L$ w.r.t.~$\alpha_-$ to zero we have:
    \begin{align}\label{eq:L0}
    \begin{split}
        0 \in \frac{\partial \mathcal L}{\partial \alpha_-} = \half\left( \sign\prn{f'' + \alpha_-} - \sign\prn{f'' - \alpha_-} + 2\lambda\right)
    \end{split}
    \end{align}
    Consider the possible values $\lambda$ might take:
    \begin{description}
        \item[$\lambda = 0$]:
            We have $\sign\prn{f'' + \alpha_-} 
            = 
            \sign\prn{f'' - \alpha_-}$, and hence $\abs{\alpha_-} \leq \abs{f''}$ pointwise,  
            and so from \eqref{eq:objalphaminus} we can calculate $\norm{\alpha}_1=\int_\R \abs{f''(b)} \diff b$.  
            For the constraint in \eqref{eq:alphaminusproblem} to hold, we have:
            \begin{align}
    \begin{split}
        \abs{f'(-\infty) + f'(\infty)} = \abs{\int_{\R} \alpha_-(b) \diff b} \leq \int_\R \abs{\alpha_-(b)}\diff b \leq \int_{\R} \abs{f''(b)} \diff b. 
    \end{split}
    \end{align}
        \item[$\lambda<0$]:
        For \eqref{eq:L0} to hold with $\lambda<0$ we must have $f''+ \alpha_- \geq 0$ and $f'' - \alpha_- \leq 0$ pointwise, hence  $\alpha_- \geq \abs{f''}$ and from \eqref{eq:objalphaminus} and the constraint in \eqref{eq:alphaminusproblem}:  $\norm{\alpha}_1 = \int_\R \alpha_-(b) \diff b = f'(-\infty) + f'(\infty)$.  This case is possible if we can make the constraint hold, \ie~if and only if $f'(-\infty) + f'(\infty) \geq \int_{\R} \abs{f''(b)} \diff b$
        
        \item [$\lambda >0$]:
        Symmetric to $\lambda <0$. We get $\norm{\alpha}_1 = \int_{\R}(- \alpha_-)\diff b = -(f'(-\infty) + f'(\infty))$ and this happens when $f'(-\infty) + f'(\infty) \leq -\int_{\R} \abs{f''(b)} \diff b$.
    \end{description}
    Combining the cases above, we have $\norm{\alpha}_1=\max( \int \abs{f''}\diff b , \abs{f'(-\infty)+f'(+\infty)} )$.  To get the inequality in the Theorem statement, note that for any $x$: 
    \begin{align}
    \abs{f'(-\infty)+f'(+\infty)}&=\abs{\left(f'(x)-\int_{-\infty}^x\!\!\!\!\! f''(b)\diff b \right) + \left( f'(x)+\int_{x}^\infty \!\!\!\!\! f''(b) \diff b \right) } \\
    &\leq 2 \abs{f'(x)}+\int_{\infty}^\infty \abs{f''}\diff b. \notag   \qedhere
    \end{align}    
\end{proof}
\begin{cor}
For any loss function $L(f)$ over $f:\R\to\R$, 
fitting a regularized infinite width ReLU network by minimizing $\argmin_{\theta\in\ThetaClass_2}L(h_\theta)+\lambda \cdot C(\theta)$ is equivalent to\dnote{f should be in some space here, no?}\natinote{Presumably, yes.  Not sure how to state this.  Its over "any" distribution, so I guess the space is defined by the test functions?  Or in terms of functions, its essentially those where the TV of the $f'$ is bounded... Do you know what the correct space $f$ should live in?}\natinote{On second thought, the Theorem holds for $f:R\rightarrow\R$.  The proof goes through distributions, but we still need only consider actual functions}:
\begin{equation}\label{eq:cor}
    \argmin_{f:\R\rightarrow\R \removed{: \oR(f) < \infty\natinote{This is already implied by the constraint and doesn't actually specify the space we work in}}} L(f) + \lambda \cdot \max \left( \, \int_{-\infty}^\infty \!\!\!\!\!\abs{f''(x)} \diff x  \, , \, \abs{f'(-\infty) + f'(\infty)} \, \right)
\end{equation}
\label{cor:main}
\end{cor}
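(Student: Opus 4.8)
The plan is to obtain the corollary by combining the closed form for $\oR(f)$ from \thmref{thm:main} with the equivalence between the parameter-space objective \eqref{eq:minC} and the function-space objective \eqref{eq:minR} already set up in \secref{sec:reluNetworks}. Indeed, the right-hand side of \eqref{eq:cor} is exactly \eqref{eq:minR} after substituting $\oR(f) = \max\!\left(\int|f''|\diff x,\ |f'(-\infty)+f'(\infty)|\right)$, so the only thing left to verify is that the $\argmin$ over parameters $\theta\in\ThetaClass_2$ on the left agrees with the $\argmin$ over functions $f$ on the right.

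First I would note that minimizing $C(\theta)$ subject to $h_\theta = f$ returns $R(f)$ by definition \eqref{eq:Rf}, so $\min_{\theta\in\ThetaClass_2} L(h_\theta) + \lambda C(\theta) = \inf_{f} L(f) + \lambda R(f)$, where $f$ ranges over the piecewise-linear functions (with finitely many pieces) for which $R(f)<\infty$. I would then invoke the two facts recorded in \secref{sec:reluNetworks}: that $\oR$ is the lower-semicontinuous relaxation of $R$, whose sub-level sets are the closures $\overline{\subF_B}$, so $\oR(f) \leq R(f)$ everywhere and hence $\inf_f L(f) + \lambda\oR(f) \leq \inf_f L(f) + \lambda R(f)$; and that by the representer-style result of \citet{rosset2007l1} a minimizer of the infinite $\ell_1$ problem \eqref{eq:infl1} defining $\oR$ can be taken with $\alpha$ supported on finitely many atoms. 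Such a finitely supported $\alpha$ is realized exactly by a finite network, so the optimal $f$ is piecewise linear and satisfies $R(f) = \oR(f)$; this forces the two optimal values, and therefore the two $\argmin$ sets, to coincide. Substituting \thmref{thm:main} then yields \eqref{eq:cor}.

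The remaining care is in naming the function class on the right-hand side, which the statement leaves as ``$f:\R\to\R$''. By \thmref{thm:main} the penalty is finite precisely when $f'$ has bounded variation (equivalently $\int|f''|\diff x<\infty$), in which case the limits $f'(\pm\infty)$ necessarily exist and both terms of the $\max$ are well defined; every other $f$ incurs an infinite penalty and is thus harmless in the $\argmin$. I expect the main obstacle to be this measure-theoretic bookkeeping rather than any fresh computation: one must confirm the representer argument applies under the stated lower semi-continuity of $\loss$ (which guarantees that a minimizer of \eqref{eq:minC} exists), and that the passage in the proof of \thmref{thm:main} between the distributional identity $f'' = \alpha_+$ and genuine functions $f$ introduces no gap. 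Everything past these points is a direct substitution.
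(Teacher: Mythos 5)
Your route is the same as the paper's: the paper offers no separate proof of Corollary \ref{cor:main}, because it is exactly the combination you describe --- the equivalence of the parameter-space problem \eqref{eq:minC} with the function-space problem \eqref{eq:minR} established in Section \ref{sec:reluNetworks} (via the measure formulation \eqref{eq:infl1} and the representer theorem of \citet{rosset2007l1} for empirical, lower semi-continuous losses), followed by substituting the formula for $\oR(f)$ from Theorem \ref{thm:main}. The one inaccuracy is your final inference that equal optimal values force the two $\argmin$ sets to ``coincide.'' That implication is not valid, and the conclusion is false in general: what holds (and what the paper claims, ``strictly speaking'') is one inclusion --- any global minimizer $\theta^*$ of \eqref{eq:minC} yields $h_{\theta^*}$ minimizing \eqref{eq:cor}, since $\oR(h_{\theta^*}) \le R(h_{\theta^*}) \le C(\theta^*)$ while the optimal values agree. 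The reverse inclusion fails because \eqref{eq:cor} admits minimizers that are not piecewise linear (one can smooth a kink of an optimal solution without changing $\int\abs{f''}\diff x$ or the loss, cf.\ the caption of Figure \ref{fig:2layernets}), and such functions are not exactly representable by any finite-width network, hence are not $h_{\theta^*}$ for any $\theta^*\in\ThetaClass_2$. Your remaining caveats --- that the Rosset step needs $L$ to be an empirical loss over finitely many points with lower semi-continuous $\loss$, and that the penalty is finite precisely when $f'$ has bounded variation --- are accurate and match the paper's own hedging.
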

\corref{cor:main} shows the learning with norm-regularized ReLU networks is essentially equivalent to learning by minimizing the total variation of the derivative.  How does fitting data while minimizing this total variation look like?  What kind of functions would we get? Consider first  perfectly fitting (interpolating) a finite set of points $S = (x_{\smplIdx},y_{\smplIdx})_{\smplIdx=1}^\samples,\,\, x_{\smplIdx},y_{\smplIdx} \in \R$, given by:
\begin{multline}
     \argmin_{h_\params\,:\,\params \in \ThetaClass_2} C(\theta) \,\, s.t. \,\left( \forall_{\smplIdx \in [\samples]}, \func{\params}(x_\smplIdx) = y_\smplIdx  \right) \quad = \quad 
     \argmin_{f : \R \to \R} \oR(f) \,\, s.t. \, \left(\forall_{\smplIdx \in [\samples]}, f(x_\smplIdx) = y_\smplIdx\right)
    \label{eq:finiteobj}
\end{multline}

\begin{thm}
For any dataset $S = (x_{\smplIdx},y_{\smplIdx})_{\smplIdx=1}^\samples,\,\, x_{\smplIdx},y_{\smplIdx} \in \R$, 
where w.l.o.g.~$x_1 < x_2 < \cdots < x_{\samples}$, \eqref{eq:finiteobj} is minimized by the linear spline interpolation:
\begin{equation}
    \pwlf(x) = 
    \begin{cases}
y_1 + l_0(x - x_1) & x \leq x_1
\\
 y_\smplIdx + l_\smplIdx (x - x_\smplIdx) & x_\smplIdx \leq x \leq x_{\smplIdx+1} \quad \text{ for some } \smplIdx \in [\samples]
 \\
y_\samples + l_\samples(x - x_\samples) & x \geq x_{\samples}
\end{cases}
\label{eq:spline}
\end{equation}
where $l_\smplIdx = \frac{y_{\smplIdx+1} - y_\smplIdx}{x_{\smplIdx+1} - x_\smplIdx}$ for $n \in \{1, \cdots, \samples-1\}$, and $l_0,l_N$ are chosen so as to minimize
\begin{equation}
\begin{split}
 \max \left( \sum_{\smplIdx=0}^{\samples-1} \abs{l_{\smplIdx+1} - l_\smplIdx} , \abs{l_0 + l_N}  \right)
\label{eq:l0ln}
\end{split}
\end{equation}
\label{thm:spline}
\end{thm}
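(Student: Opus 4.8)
The plan is to reduce the infinite-dimensional problem in \eqref{eq:finiteobj} to a finite-dimensional minimization over the two free boundary slopes, using the closed form for $\oR$ from \thmref{thm:main}. Throughout I only consider interpolants $f$ with $\oR(f)<\infty$, since otherwise there is nothing to prove; by \thmref{thm:main} this forces $f'$ to be of bounded variation with well-defined limits $f'(-\infty)$ and $f'(\infty)$. The argument has three parts: compute $\oR$ of the candidate spline; prove a matching lower bound $\oR(f)\ge \oR(\pwlf)$ for a suitable spline $\pwlf$; and optimize over the remaining free parameters.

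First I would evaluate $\oR(\pwlf)$ for the spline \eqref{eq:spline} with arbitrary boundary slopes $l_0,l_N$. Its derivative is the piecewise-constant function taking the values $l_0,l_1,\dots,l_N$ on consecutive intervals, so $\pwlf''=\sum_{n=1}^{N}(l_n-l_{n-1})\,\delta_{x_n}$ as a distribution, giving $\int\abs{\pwlf''}\diff x=\sum_{n=0}^{N-1}\abs{l_{n+1}-l_n}$, while $\pwlf'(-\infty)=l_0$ and $\pwlf'(\infty)=l_N$. \thmref{thm:main} then yields $\oR(\pwlf)=\max(\sum_{n=0}^{N-1}\abs{l_{n+1}-l_n},\,\abs{l_0+l_N})$, which is exactly the objective \eqref{eq:l0ln}.

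The heart of the proof is the lower bound. For any interpolant $f$, the fundamental theorem of calculus gives $\int_{x_n}^{x_{n+1}} f'(t)\diff t=y_{n+1}-y_n$, so the average of $f'$ over each data interval $[x_n,x_{n+1}]$ equals the secant slope $l_n$; moreover $f'(-\infty)$ and $f'(\infty)$ are its asymptotic values. Setting $l_0:=f'(-\infty)$ and $l_N:=f'(\infty)$, I claim that $\int\abs{f''}\diff x=\mathrm{TV}(f')\ge\sum_{n=0}^{N-1}\abs{l_{n+1}-l_n}$. Intuitively, as we move left to right $f'$ must successively pass through the representative slopes $l_0,l_1,\dots,l_N$, and the total variation of $f'$ is at least the total variation of this finite sequence. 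Since $\abs{f'(-\infty)+f'(\infty)}=\abs{l_0+l_N}$ coincides with the spline's second term, combining with the paragraph above gives $\oR(f)\ge\max(\sum_{n=0}^{N-1}\abs{l_{n+1}-l_n},\,\abs{l_0+l_N})=\oR(\pwlf)$, where $\pwlf$ is the spline whose boundary slopes match those of $f$.

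Making the claimed total-variation inequality rigorous is the main obstacle, because $f'$ is only a function of bounded variation: it need not be continuous and need not attain the value $l_n$ pointwise (it may jump across it). I would handle this by noting that the average constraint forces $l_n$ to lie between the essential infimum and supremum of $f'$ on $[x_n,x_{n+1}]$, so each representative value lies in the closure of the essential range; equivalently, averaging $f'$ over the data intervals does not increase its total variation. Concretely, it suffices to select continuity points of $f'$ near each local extremum of the sequence $(l_0,\dots,l_N)$ — approaching $l_n$ from above at local maxima and from below at local minima — and let the approximation tolerance tend to zero, while interior (monotone) indices telescope and need no control. Finally, the map $(l_0,l_N)\mapsto\max(\sum_{n=0}^{N-1}\abs{l_{n+1}-l_n},\,\abs{l_0+l_N})$ is continuous and coercive, since its first term contains $\abs{l_1-l_0}+\abs{l_N-l_{N-1}}$, so it attains its minimum at some $(l_0^\ast,l_N^\ast)$. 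The corresponding spline interpolates the data and, by the computation above, achieves $\oR$ equal to this minimum, while the lower bound shows no interpolant can do strictly better; hence it is a global minimizer of \eqref{eq:finiteobj}, as claimed.
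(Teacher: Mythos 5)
Your proposal is correct, and it shares the paper's overall skeleton---first evaluate $\oR(\pwlf)$ via \thmref{thm:main}, then lower-bound every competitor by comparing the total variation of its derivative against the total variation of the secant-slope sequence $(l_0,l_1,\dots,l_\samples)$, finally using that $\oR(\pwlf)$ is defined as the minimum of \eqref{eq:l0ln} over $(l_0,l_\samples)$---but your execution of the lower bound takes a genuinely different route. The paper first invokes the representer-type theorem of \citet{rosset2007l1} to argue that \eqref{eq:finiteobj} is attained by a piece-wise linear minimizer $\optf$, then smooths $\optf$ by convolution into twice continuously differentiable functions $f_r$ with $\oR(f_r)\to\oR(\optf)$, and applies the mean value theorem to each $f_r$ to produce midpoints $z_\smplIdx$ with $f_r'(z_\smplIdx)\approx l_\smplIdx$. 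You instead bound $\oR(f)$ for \emph{every} interpolant $f$ with $\oR(f)<\infty$ directly, replacing the mean value theorem by the observation that the interval average $l_\smplIdx$ must lie between the essential infimum and supremum of the BV function $f'$, and you extract the total-variation inequality by sampling continuity points of $f'$ near the local extrema of the slope sequence (maxima approached from above, minima from below, monotone runs telescoping). What each approach buys: yours never needs existence of a minimizer of \eqref{eq:finiteobj}---it falls out as a conclusion once the spline attains your uniform lower bound---and it sidesteps the smoothing step whose convergence $\oR(f_r)\to\oR(\optf)$ the paper itself leaves unproven (flagged in a footnote); the cost is that your key step requires measure-theoretic care with BV representatives (essential versus pointwise variation, jumps across $l_\smplIdx$), which you correctly identify as the main obstacle and resolve, whereas the paper's smooth route needs only the elementary mean value theorem. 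Your closing coercivity argument, showing the minimum over $(l_0,l_\samples)$ in \eqref{eq:l0ln} is actually attained, supplies a detail the paper uses implicitly but never verifies.
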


\begin{proof}
\removed{Let w.l.o.g. $\optf$ be a continuously differentiable function 
\footnote{For any minimizer $\optf$ of \eqref{eq:cor}, $\optf'$ must have bounded variation (otherwise $\int \abs{\optf''(x)} \diff x = \infty$), and hence $\optf$ can be approximated by a continuously differentiable function $g$ such that $\int \abs{g''(x)} \diff x \leq \int \abs{\optf''(x)} \diff x$ \citep{evans1991measure} {\color{red}quite handwavy, go back to this later}}
that minimizes \eqref{eq:variational-obj}. It is easy to see that $\pwlf$ satisfies all its constraints, i.e. $\pwlf(x_\smplIdx) = y_\smplIdx$ for all $\smplIdx \in [\samples]$. Moreover, we will show that $\oR(\pwlf) \leq \oR(\optf)$, hence $\pwlf$ is also a minimizer of \eqref{eq:variational-obj}.

First, note that $\pwlf''(x) = 0$ for all $x \in \R$ except for its $\samples$ breakpoints $\{x_\smplIdx\}_{\smplIdx=1}^\samples$. Additionally, for any interval $[a,b]$ that contains a single breakpoint $x_\smplIdx$, we get, for $x \in [a,b]$, $\pwlf''(x) = (l_\smplIdx - l_{\smplIdx-1}) \delta_{x_\smplIdx}(x)$  (to see this, take the second derivative of $g(x) = \pwlf(x_\smplIdx) + l_\smplIdx \relu{x-x_\smplIdx} - l_{\smplIdx-1} \relu{x_\smplIdx-x}$, which matches $\pwlf(x)$ in the interval $[a,b]$). Therefore, we get:}
First, verify that $\int_{\R} \abs{\pwlf''(x)} \diff x = \sum_{\smplIdx=0}^{\samples-1} \abs{l_{\smplIdx+1} - l_\smplIdx}$ and so $\oR(\pwlf)$ is given by \eqref{eq:l0ln}.  We now need to show that \eqref{eq:l0ln} is a lower bound on the value in \eqref{eq:finiteobj}.  Following \cite{rosset2007l1}, and when we consider \eqref{eq:finiteobj} as minimization w.r.t.~the measure $\alpha$ and $c\in\R$, it is always minimized by $\optf=h_{\alpha,c}$ where $\alpha$ is discrete with support of size at most $\samples+1$, corresponding to a piece-wise linear $\optf$ with at most $\samples+1$ breakpoints.  But instead of working with piece-wise linear functions, we will rely on the fact that by smoothing $\optf$ we can always approach it with a sequence of twice continuously differentiable functions{\footnote{Take $f_r$ to be a convolution of $f^*$ with $\exp(-rx^2)$, so that $\norm{\optf-f}_\infty\rightarrow 0$ and $\norm{f'_*-f_r'}_1\rightarrow 0$.\natinote{Would be good to add a sentence about why $R$ converges.}}} $f_r\rightarrow \optf$, with $\oR(f_r)\rightarrow \oR(\optf)$.  It is thus sufficient to prove that $\lim\inf \oR(f_r) \geq \oR(\pwlf)$.  For $f_r$ s.t. $\norm{\optf-f_r}<\epsilon$, since $f_r$ is continuously differentiable, for all $n=1,\ldots,N-1$, there exists a midpoint $x_\smplIdx \leq z_\smplIdx \leq x_{\smplIdx+1}$ such that $\abs{f'_r(z_\smplIdx) - l_\smplIdx } \leq 2\epsilon / \delta $ where $\delta = \min_n (x_{n+1}-x_n)$.  Recalling that $\oR(\pwlf)$ is the minimum of \eqref{eq:l0ln} over $l_0,l_N$, and plugging in $l_0=f'_r(-\infty)$ and $l_N=f'_r(+\infty)$, we have that $\oR(f_r)> \oR(\pwlf)-4 N \epsilon/\delta$.  Taking $r\rightarrow\infty$ and so $\epsilon\rightarrow 0$, we have $\oR(\optf) = \lim_{r\rightarrow\infty} \oR(f_r) \geq \oR(\pwlf)$, which establishes that $\pwlf$ minimizes \eqref{eq:finiteobj}.
\end{proof}

\begin{figure}[t]
\centering 
    \begin{tikzpicture}[spy using outlines={rectangle,yellow,magnification=3.5,width=4.5cm, height=1.5cm, connect spies}]
    \node {\pgfimage[width=5cm]{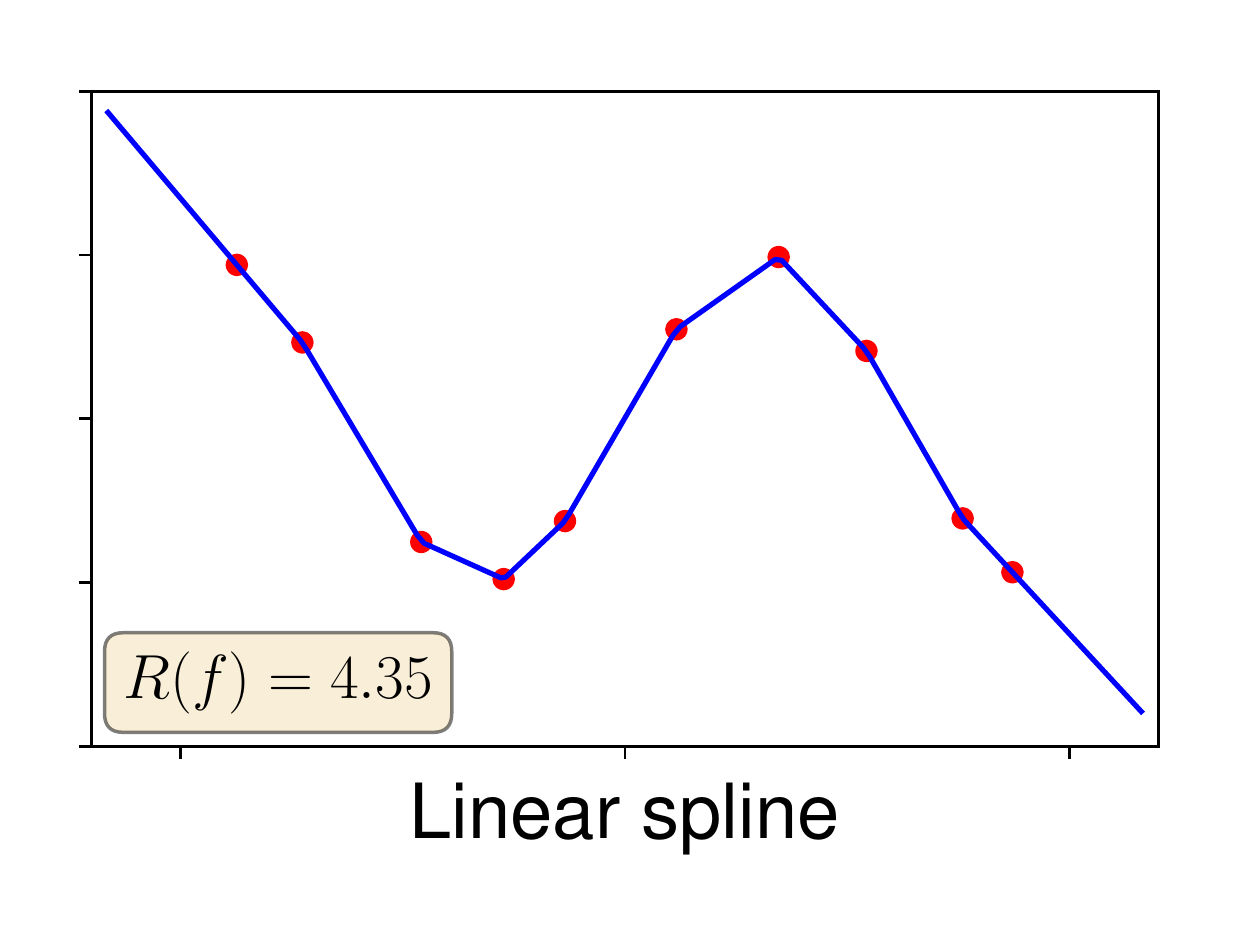}};
    \spy on (-0.5,-0.33) in node [left] at (2,2);
    \end{tikzpicture}
    \hspace{-2em}
    \begin{tikzpicture}[spy using outlines={rectangle,yellow,magnification=3.5,width=4.5cm, height=1.5cm, connect spies}]
    \node {\pgfimage[width=5cm]{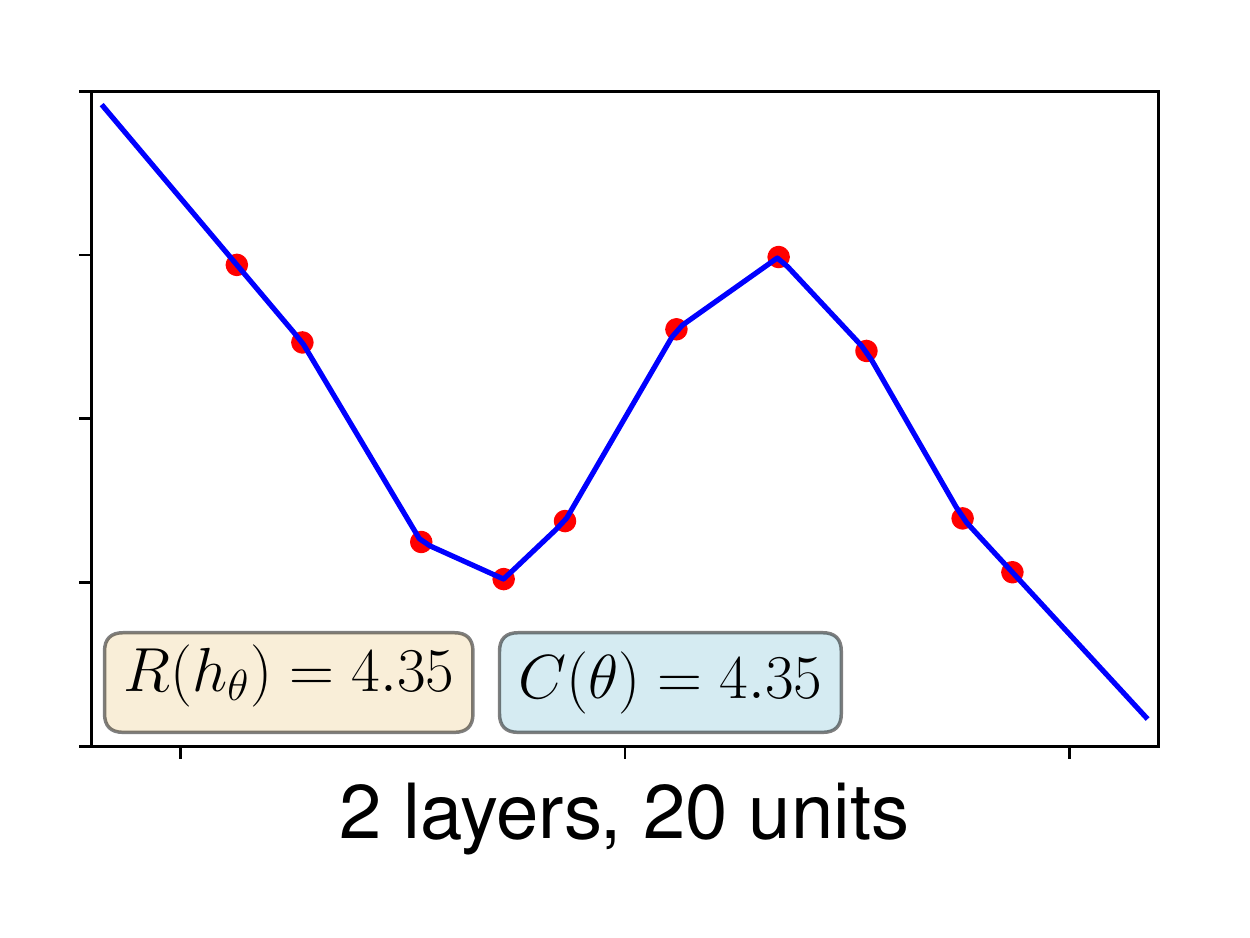}};
    \spy on (-0.5,-0.33) in node [left] at (2,2);
    \end{tikzpicture}
    \hspace{-2em}
    \begin{tikzpicture}[spy using outlines={rectangle,yellow,magnification=3.5,width=4.5cm, height=1.5cm, connect spies}]
    \node {\pgfimage[width=5cm]{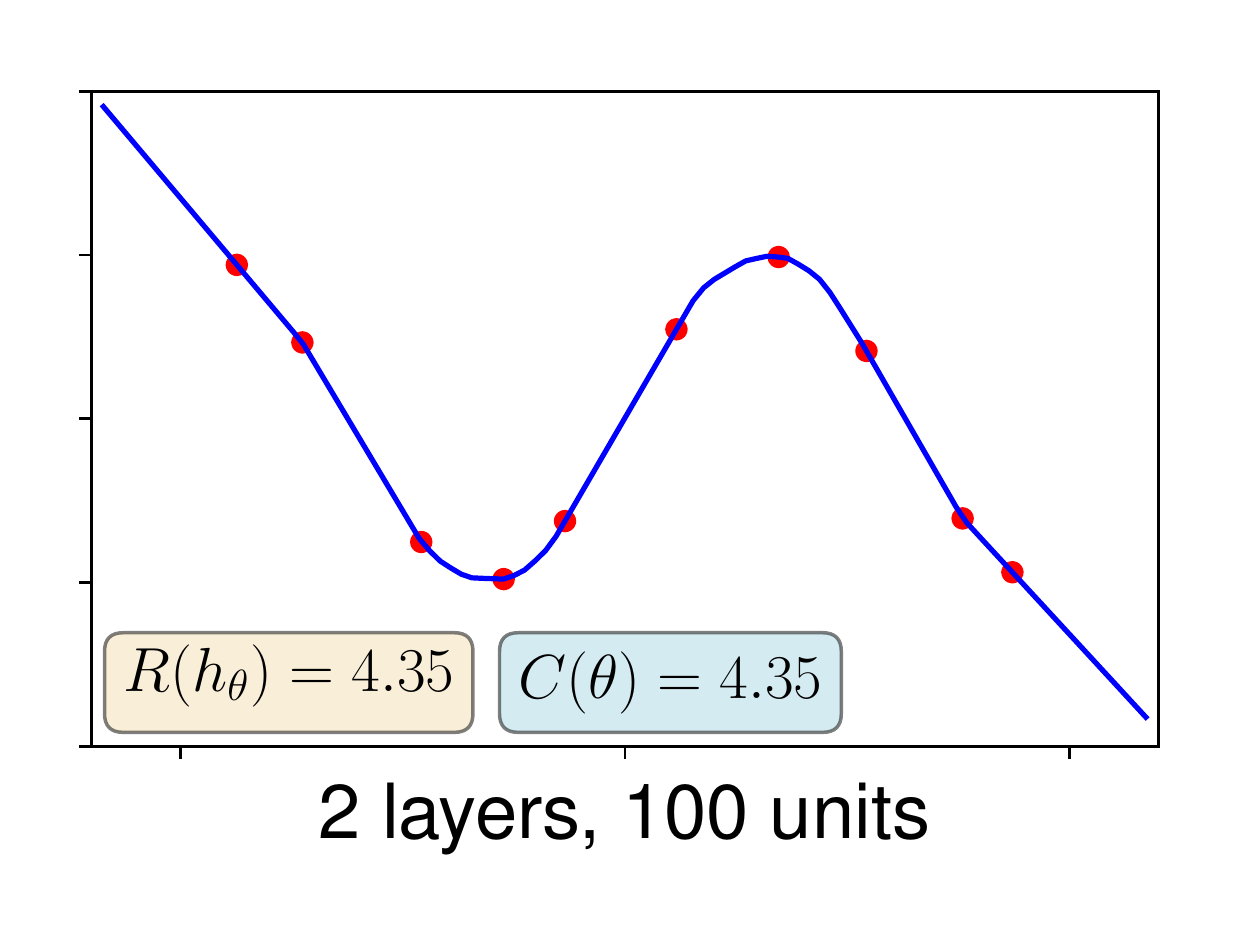}};
    \spy on (-0.5,-0.33) in node [left] at (2,2);
    \end{tikzpicture}
\caption{Linear interpolation (\textbf{left}) and two trained ReLU networks with 1 hidden layer consisting of 20 (\textbf{middle}) and 100 (\textbf{right}) units respectively, optimized to perfectly fit a set of 10 points, for which the minimum cost of perfect fitting is $\oR(\optf) = 4.35$. Training was done by minimizing the squared loss with a small regularization of $\lambda = 10^{-5}$. All three functions achieve the optimal cost $\oR(\cdot)$ in function space, and both networks yield optimal cost in parameter space $C(\theta)$.  The two networks arrived at different global minima in function space, with the same value of $\oR(f)$. For example, in the area highlighted, changing the derivative gradually instead of abruptly does not effect its total variation, and so also yields an optimal solution. \label{fig:2layernets}}
\end{figure}

\removed{
\begin{proof} (cleaner attempt, directly against PWL {\color{red} not finished, looks messier, go with above one probably})

    From \citep{rosset2007l1}, we know that if \eqref{eq:regularized-obj} is attainable, then there exists a minimizer $\params^*$ such that $\optf = \func{\params^*}$ is a continuous piece-wise linear function with at most $N+1$ pieces, like $\pwlf$. Hence, the only flexibility is on the placement of the breakpoints. It then suffices to show that having breakpoints exactly at the $\samples$ datapoints $\{x_\smplIdx\}_{\smplIdx=1}^\samples$ is indeed optimal.
    
    First, note that since $\optf$ fits every data point perfectly, then it must be at least $\abs{l_\smplIdx}$-Lipschitz in each $[x_\smplIdx, x_{\smplIdx+1}]$ interval $\left(\abs{l_\smplIdx} = \abs{ \frac{y_{\smplIdx+1} - y_\smplIdx}{x_{\smplIdx+1} - x_\smplIdx}}\right)$, hence there must exist a point $z_\smplIdx \in [x_\smplIdx, x_{\smplIdx+1}]$ such that $\abs{\optf'(z_\smplIdx)} \geq \abs{l_\smplIdx}$, as $\pwlf$ is almost everywhere differentiable \pnote{maybe worth to get into detail, i.e. can break the interval into finitely many subintervals where $\optf$ is everywhere differentiable in each subinterval, and it must be at least $\abs{l_\smplIdx}$-Lipschitz in some subinterval}. With this, we get:
\begin{equation}
\begin{split}
    \int_{\R} \abs{{\optf}''(x)} \diff x 
    &= 
    \int_{-\infty}^{z_1} \abs{\optf''(x)} \diff x +
     \sum_{\smplIdx=1}^{\samples-2} \int_{z_\smplIdx}^{z_{\smplIdx+1}} \abs{{\optf}''(x)} \diff x +
     \int_{z_{\samples-1}}^{\infty} \abs{\optf''(x)} \diff x
\end{split}
\end{equation}
Ultra sketch: know that some $N+1$ piece PWL minimizes the objective (from rosset). for each interval $[x_n, x_{n+1}]$ must have a point $z_n$ s.t. $|f'(z_n)| \geq \abs{l_n}$ (easy to show), break integral over $\R$ into these points (plus $(-\infty, z_1)$, $(z_n, \infty)$) and show lower bound of $R(\pwlf)$, using the fact that $l_+, l_-$ minimize $R(\pwlf)$.
\end{proof}

}

Theorem \ref{thm:main} guarantees that the linear spline interpolation is {\em a} global minimum of \eqref{eq:finiteobj}, but it will in general not be unique, as demonstrated in Figure \ref{fig:2layernets}, and networks learned in practice might implement much ``smoother'' functions.  The same type of solutions will also be obtained when minimizing \textit{any} loss over a finite sample:

\begin{cor}[Optimality of Linear Interpolation]
For any dataset $S = (x_{\smplIdx},y_{\smplIdx})_{\smplIdx=1}^\samples,\,\, x_{\smplIdx},y_{\smplIdx} \in \R$, 
and any lower semi-continuous loss 
$\loss : \R \times \R \to \R\cup\{\infty\}$ and any $\lambda>0$, consider:
\begin{equation}
    \argmin_{h_\theta : \theta \in \ThetaClass_2} \sum_{\smplIdx=1}^{\samples} 
    \loss (h_\theta(x_{\smplIdx}), y_{\smplIdx}) + \lambda \cdot C(\theta).
\label{eq:argmin}
\end{equation}
Then \eqref{eq:argmin} will always have a global minimum which is a piece-wise linear function with at most $\samples+1$ pieces, 
with breakpoints at the data points $x_1,\ldots,x_{\samples}$.  
\label{cor:interp}
\end{cor}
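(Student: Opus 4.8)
The plan is to push the problem into function space, obtain a global minimizer there, and then invoke Theorem~\ref{thm:spline} to move all of its breakpoints onto the data points without changing the objective.

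First I would use the equivalence established around \eqref{eq:minC}--\eqref{eq:minR}: minimizing \eqref{eq:argmin} over $\params \in \ThetaClass_2$ has the same optimal value as minimizing the function-space objective $\sum_{\smplIdx=1}^\samples \loss(f(x_\smplIdx), y_\smplIdx) + \lambda \oR(f)$, with minimizers in correspondence via $f = \func{\params}$. Because the loss is lower semi-continuous and is evaluated only at the $\samples$ data points, the loss functional depends on $f$ solely through the vector $(f(x_1), \dots, f(x_\samples))$, while $\lambda \oR(f)$ supplies coercivity; by \citet{rosset2007l1} this problem then attains a global minimum at some $\optf = h_{\alpha^*, \intbias^*}$ whose measure $\alpha^*$ is supported on finitely many atoms, so $\optf$ is piecewise linear with finitely many breakpoints. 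These breakpoints, however, need not lie at the $x_\smplIdx$.

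The crux is the relocation step. Put $z_\smplIdx = \optf(x_\smplIdx)$ and let $\pwlf$ be the linear spline interpolant of the points $(x_\smplIdx, z_\smplIdx)_{\smplIdx=1}^\samples$, with terminal slopes $l_0, l_\samples$ chosen to minimize \eqref{eq:l0ln}. Applying Theorem~\ref{thm:spline} to the dataset $(x_\smplIdx, z_\smplIdx)$ in place of $(x_\smplIdx, y_\smplIdx)$ shows that $\pwlf$ minimizes $\oR$ over all functions interpolating these values; since $\optf$ is one such function, $\oR(\pwlf) \leq \oR(\optf)$. Moreover $\pwlf(x_\smplIdx) = z_\smplIdx = \optf(x_\smplIdx)$ for every $\smplIdx$, so the loss term is identical for $\pwlf$ and $\optf$. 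Hence the full objective at $\pwlf$ does not exceed that at $\optf$, and $\pwlf$ is itself a global minimizer. Since $\pwlf$ is piecewise linear with breakpoints exactly at $x_1, \dots, x_\samples$ --- at most $\samples+1$ pieces --- and is a finite PWL function, it is exactly representable by some $\func{\params^*}$ with $\Eucost(\params^*) = \oR(\pwlf)$, giving the claimed global minimum of \eqref{eq:argmin}.

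I expect the only real difficulty to be at the level of existence and representability, not in the geometry: one must verify that the reduction to function space and the invocation of \citet{rosset2007l1} are legitimate for a general lower semi-continuous, possibly extended-real-valued, loss (so that a minimizer exists at all), and that a finite piecewise-linear function satisfies $\oR = R$ and is therefore realized by a finite network at cost exactly $\oR(\pwlf)$. Given those facts, which echo the discussion following \eqref{eq:minR} and the proof of Theorem~\ref{thm:spline}, the relocation argument itself is immediate.
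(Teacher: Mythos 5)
Your proposal is correct and takes essentially the same approach as the paper: its one-line proof observes that any minimizer $h_{\theta^*}$ of \eqref{eq:argmin} is also a minimizer of \eqref{eq:finiteobj} under the relabeled targets $y_\smplIdx = h_{\theta^*}(x_\smplIdx)$, so that Theorem~\ref{thm:spline} supplies a linear spline with identical loss and no larger $\oR$ --- exactly your relocation step. The additional care you take with existence of a minimizer (via \citet{rosset2007l1}) and with exact finite-network representability of the spline is detail the paper leaves implicit, having addressed it in the discussion around \eqref{eq:minC}--\eqref{eq:minR}.
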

\begin{proof}
A minimizer $h_{\theta^*}$ of \eqref{eq:argmin} is also a minimizer of \eqref{eq:finiteobj} with the alternative labels $y_n=h_{\theta^*}(x_n)$.
\end{proof}

\begin{figure}
\centering 
    \begin{tikzpicture}[spy using outlines={rectangle,yellow,magnification=3.5,width=4.5cm, height=1.5cm, connect spies}]
    \node {\pgfimage[width=5cm]{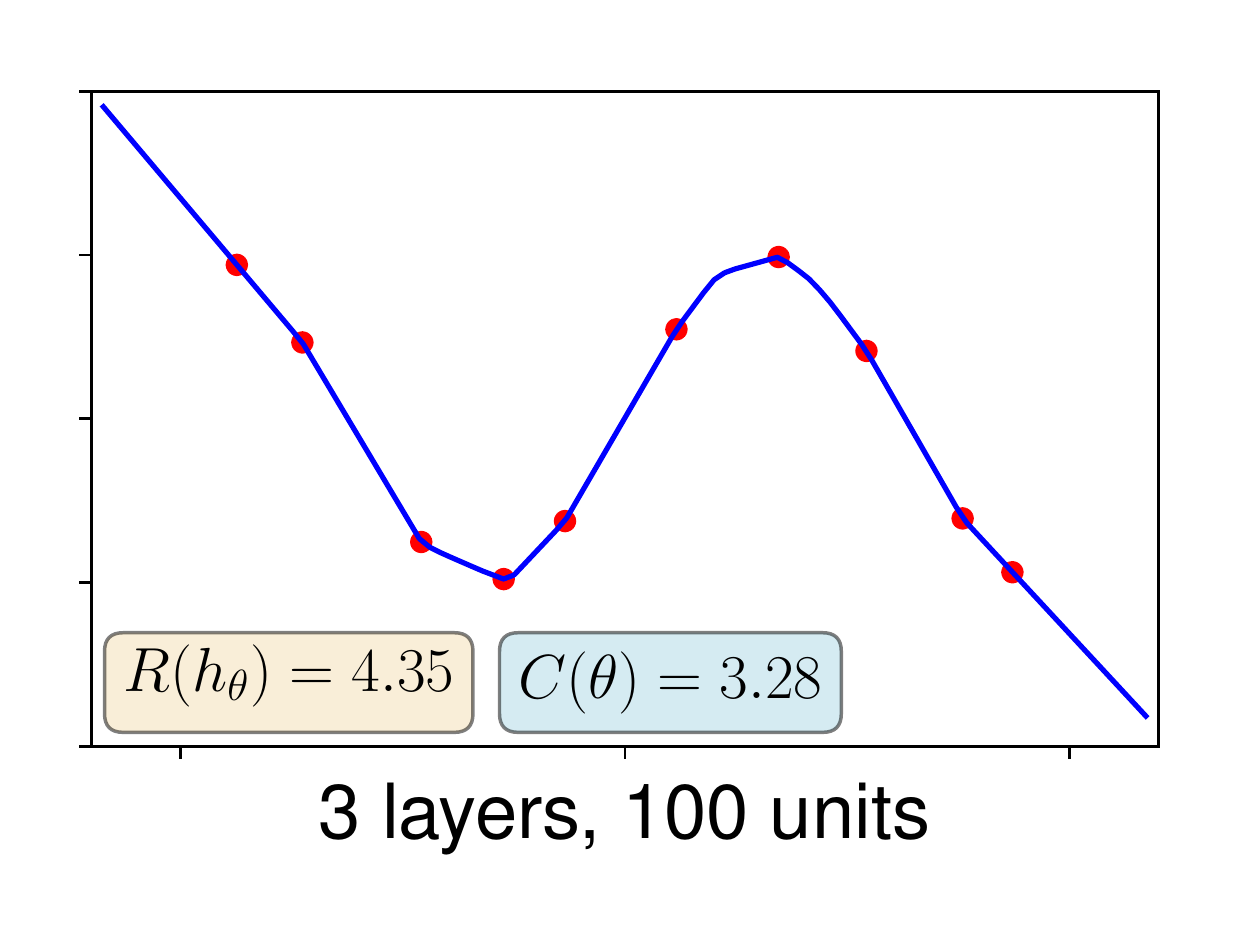}};
    \spy on (-0.5,-0.33) in node [left] at (2,2);
    \end{tikzpicture}
    \hspace{-2em}
    \begin{tikzpicture}[spy using outlines={rectangle,yellow,magnification=3.5,width=4.5cm, height=1.5cm, connect spies}]
    \node {\pgfimage[width=5cm]{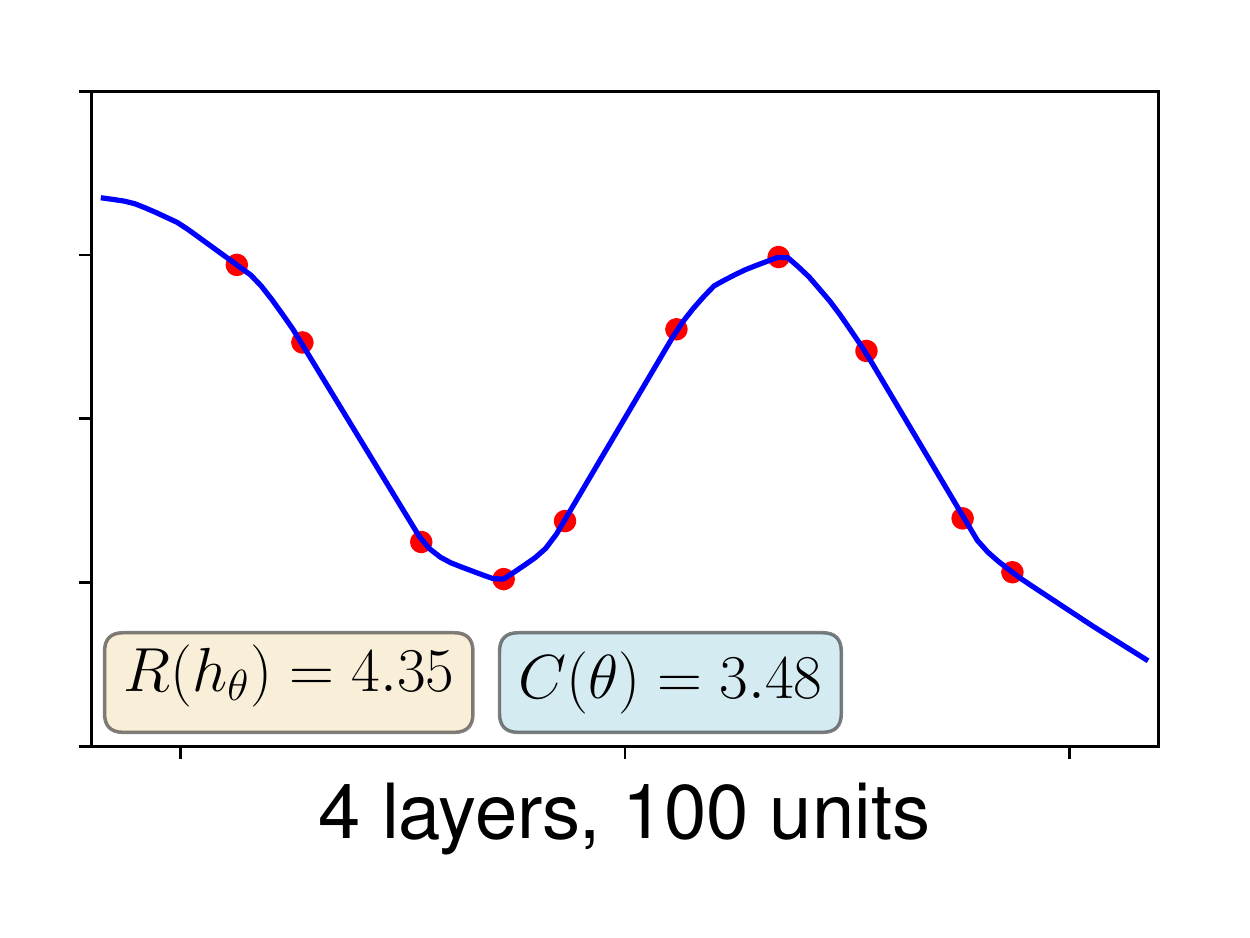}};
    \spy on (-0.5,-0.33) in node [left] at (2,2);
    \end{tikzpicture}
    \hspace{-2em}
    \begin{tikzpicture}[spy using outlines={rectangle,yellow,magnification=3.5,width=4.5cm, height=1.5cm, connect spies}]
    \node {\pgfimage[width=5cm]{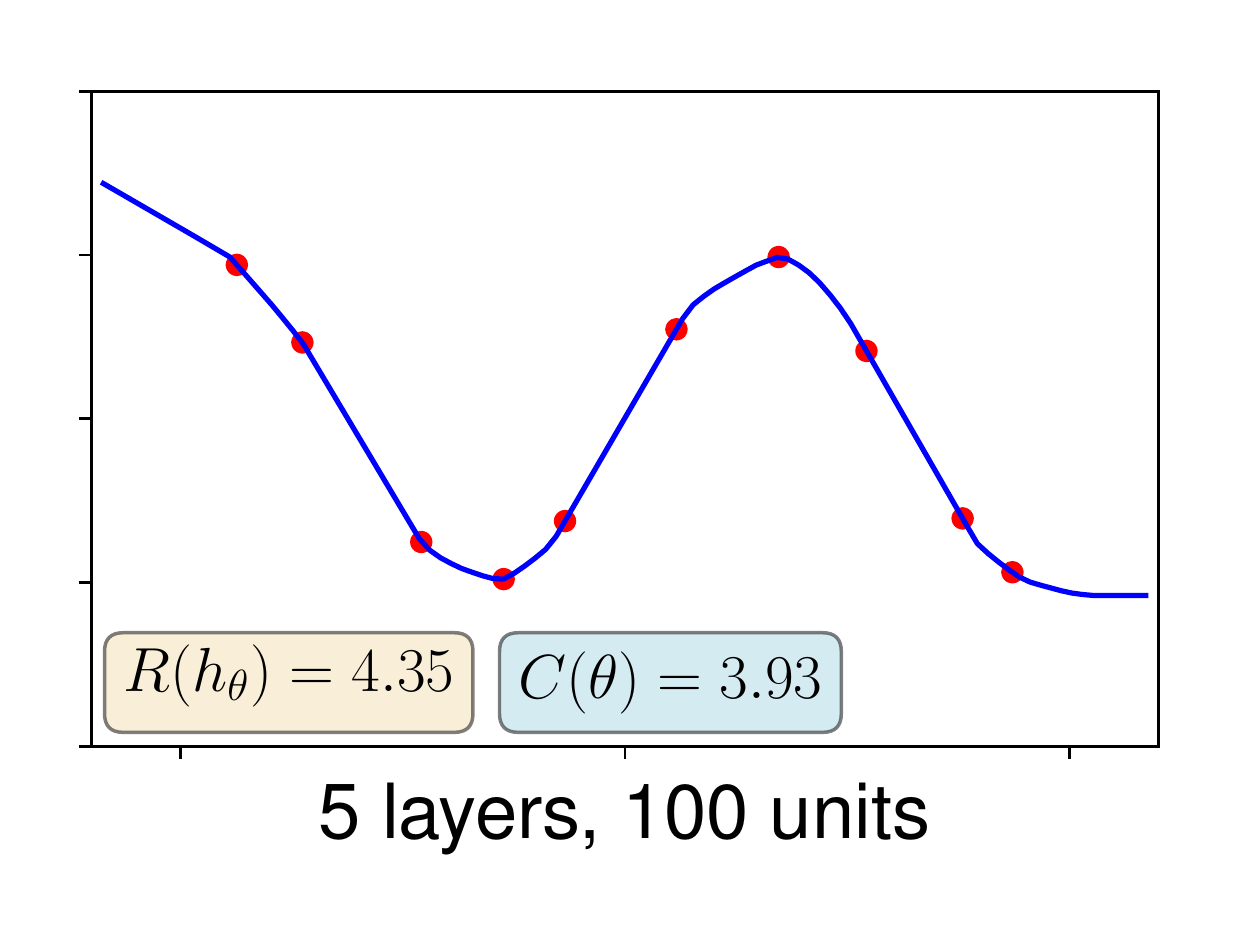}};
    \spy on (-0.5,-0.33) in node [left] at (2,2);
    \end{tikzpicture}
\caption{Deeper ReLU networks with 2 (\textbf{left}), 3 (\textbf{middle}) and 4 (\textbf{right}) hidden layers, each with 100 hidden units, trained on the same set of points as the 1-hidden layer networks on \figref{fig:2layernets}, also with $\lambda = 10^{-5}$. \label{fig:deepernets}}
\end{figure} 
\section{An Interpertation in Terms of Green's Functions}
\label{sec:greens}

The key component in the proof of \thmref{thm:main} is in writing a two-layer network as a convolution with ReLU functions as in \eqref{eq:conv}, and taking the second derivative of this convolution (in Equation~\eqref{eq:key}), noticing that the ReLU is a Green's function for the second derivative.  Fitting a ReLU network can therefore be seen as the reverse of using Green's function to solve a differential equation: we know $f:\R\to\R$ and would like to calculate it's 2nd derivative $u=f''$.  We can do this by fitting an infinite width ReLU network and then reading off the second derivative from the resulting weights.

In terms of the representation \eqref{eq:conv}, we get the second derivative directly from the measure $\alpha$, as $u(x)=\alpha_+(x)=\alpha(1,x)+\alpha(-1,x)$.  For the more standard integral representation \eqref{eq:alpha}, we have $u(x)=\alpha(1,x)+\alpha(-1,-x)$.  
Fitting an actual network with discrete units and weights on each unit, as in \eqref{eq-relu_net}, corresponds to an approximation of the form:
\begin{equation}\label{eq:uapprox}
    u \approx \sum_i w^{(2)}_i \abs{w^{(1)}_i} \delta_{b/w^{(1)}_i} 
\end{equation}
where to get a smoother approximation we might want to replace the delta function with a smoother bump.  

\natinote{Can someone please check the above, that I got the right form in \eqref{eq:uapprox}?}

\natinote{Would be neat to see how \eqref{eq:uapprox} looks like numerically for some simple function.}

Ignoring for the moment the linear term, 
we saw in \corref{cor:main} that fitting a two-layer ReLU network while controlling the norm of the weights corresponds essentially to the variational problem:
\begin{equation}\label{eq:var}
    \min_f L(f) + \int_\R \abs{f''(x)}\diff x.
\end{equation}
An interpretation of how this is done using the ReLU network is as follows:  
we first introduce an auxiliary function variable $u$,  
and rewrite \eqref{eq:var} as:
\begin{equation}\label{eq:fandu}
    \min_{f,u} L(f) + \int_\R \abs{u(x)}\diff x \quad \textrm{s.t. } u=f''.
\end{equation}
We then use the fact that the ReLU is a Green's function for the second derivative to write
\begin{equation}\label{eq:fasuconv}
    f(x)=\int_\R u(b) [x-b]_+ \diff b + a x + c
\end{equation}
where $ax+c$, 
for some $a,x\in\R$, 
is the affine term left undetermined by the second derivative.  
Plugging \eqref{eq:fasuconv} back into \eqref{eq:fandu} we get:
\begin{equation}\label{eq:qvar}
    \min_u L \left(x\mapsto \int_\R u(b) [x-b]_+ \diff b + a x + c\right) + \int_\R \abs{u(b)}\diff b
\end{equation}
which amounts to fitting an infinite ReLU network with $L_1$ regularization on the top layer, which in turn, using the equivalence between \eqref{eq:finitel1} and \eqref{eq:Rf}\todo{\cite{neyshabur2015norm}}, 
is equivalent to $\Lp{2}$-regularization on both layers. 
The additional term $\abs{f'(-\infty),f'(+\infty)}$ in 
\thmref{thm:main} comes from the fact that unlike in the derivation above, we are also constraining the linear term.

\section{Approximating Higher Dimensional Functions}
\label{sec:highdim}
Next, we discuss how our results may generalize to higher dimensions, i.e.~for networks with multiple input units where the input $\xvec$ is in $\mathbb{R}^d$, and help us in characterizing $\oR(f)$ for general $f:\R^d\to\R$.
First, it easy to see that we can extend our Green's function view for higher dimensional inputs.  For any representation $f=h_{\alpha,c}$ as in \eqref{eq:alpha} we have:
\removed{
Specifically, for any function $\mathbb{R}^{d}\rightarrow\mathbb{R}$ which can
be represented as
\begin{equation}\label{eq:f_high_D}
    f\left(\xvec\right)=\int_{\mathbb{S}^{d-1}\times\mathbb{R}}d\wvec db\left(\alpha\left(\wvec,b\right)\left[\wvec^{\top}\xvec-b\right]_{+}\right)
\end{equation}
for some measure $\alpha$ we can write several useful relations between its derivatives and
the measure $\alpha$. First, we have that}
\begin{equation}
\frac{\partial}{\partial x_{i}}f\left(\xvec\right)=\int_{\mathbb{S}^{d-1}\times\mathbb{R}}\diff \wvec \diff b\left(w_{i}\alpha\left(\wvec,b\right)H\left(\wvec^{\top}\xvec-b\right)\right)
\end{equation}
where we recall $H$ is the Heaviside step function, and 
\begin{align*}
\frac{\partial}{\partial x_{i}}\frac{\partial}{\partial x_{j}}f\left(\xvec\right) & =\int_{\mathbb{S}^{d-1}\times\mathbb{R}}\diff\wvec \diff b\left(w_{j}w_{i}\alpha\left(\wvec,b\right)\delta\left(\wvec^{\top}\xvec-b\right)\right)\\
 & =\int_{\mathbb{S}^{d-1}}\diff\wvec\alpha\left(\wvec,\wvec^{\top}\xvec\right)w_{j}w_{i}\,.
\end{align*}
Therefore, the Hessian can be written as
\begin{align}
\nabla^{2}f\left(\xvec\right) & =\int_{\mathbb{S}^{d-1}}\diff \wvec\alpha\left(\wvec,\wvec^{\top}\xvec\right)\wvec\wvec^{\top}\label{eq: Hessian 2 alpha}
\end{align}
and the Laplacian as
\begin{align}
\triangle f\left(\xvec\right) & =\mathrm{Tr}\left[\nabla^{2}f\left(\xvec\right)\right]=\sum_{i=1}^{d}\frac{\partial^{2}}{\partial x_{i}^{2}}f\left(\xvec\right)\nonumber \\
 & =\int_{\mathbb{S}^{d-1}}\diff\wvec\alpha\left(\wvec,\wvec^{\top}\xvec\right)\left\Vert \wvec\right\Vert ^{2}=\int_{\mathbb{S}^{d-1}}\diff\wvec\alpha\left(\wvec,\wvec^{\top}\xvec\right)\,.\label{eq: Laplacian}
\end{align}

What remains is to minimize $\norm{\alpha}_1$ under this constraint.  As an indicative result, we can calculate $\norm{\alpha}_1$ for the special case where it is non-negative, and so $f=f_{\alpha,c}$ is convex and the Hessian is p.s.d.:
\begin{claim}\label{clm:l1_2_Laplacian}
If $\alpha\left(\wvec,b\right)\geq0$, $\forall \wvec\in \mathbb{S}^{d-1}$ and $\forall b \in \mathbb{R}$,  then
\[
\left\Vert \alpha\right\Vert _{1}=\lim_{r\rightarrow\infty}\frac{1}{r^{d-1}\mathbb{V}^{d-1}}\int_{\norm{\xvec}_2\leq r}\mathrm{Tr}\left[\nabla^{2}f\left(\xvec\right)\right]d\xvec=\lim_{r\rightarrow\infty}\frac{1}{r^{d-1}\mathbb{V}^{d-1}}\oiint_{\norm{\xvec}_2= r} \nabla f(\xvec)^{\top} \diff\hat{\boldsymbol{n}}(\xvec) 
\]
where $\mathbb{V}^{d}$ is the volume of a unit radius $d$-ball, and $\diff \hat{\boldsymbol{n}}(\xvec)$ is the outward pointing unit normal.
\end{claim}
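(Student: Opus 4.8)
The plan is to reduce everything to the total mass of $\alpha$ and then perform a slicing computation. Since $\alpha(\wvec,b)\ge 0$ everywhere, we have $\norm{\alpha}_1 = \int_{\mathbb{S}^{d-1}\times\R}\diff\alpha(\wvec,b) = \alpha(\mathbb{S}^{d-1}\times\R)$, so the goal is to show this total mass equals the averaged Laplacian integral. Starting from the derived identity $\triangle f(\xvec) = \int_{\mathbb{S}^{d-1}\times\R}\delta(\wvec^\top\xvec - b)\diff\alpha(\wvec,b)$ (treating $\delta$ distributionally exactly as in the proof of \thmref{thm:main}, using $\norm{\wvec}_2=1$), I would integrate over the ball and swap the order of integration by Tonelli's theorem, which is legitimate precisely because the integrand is nonnegative:
\begin{align*}
\int_{\norm{\xvec}_2\le r}\triangle f(\xvec)\diff\xvec
&= \int_{\mathbb{S}^{d-1}\times\R}\left(\int_{\norm{\xvec}_2\le r}\delta(\wvec^\top\xvec - b)\diff\xvec\right)\diff\alpha(\wvec,b)\\
&= \mathbb{V}^{d-1}\int_{\mathbb{S}^{d-1}\times\R}(r^2-b^2)_+^{(d-1)/2}\diff\alpha(\wvec,b).
\end{align*}
The inner integral is simply the $(d-1)$-dimensional volume of the slice $\{\xvec : \norm{\xvec}_2\le r,\ \wvec^\top\xvec = b\}$, which is a $(d-1)$-ball of radius $\sqrt{r^2-b^2}$ when $|b|\le r$ and empty otherwise; hence it equals $\mathbb{V}^{d-1}(r^2-b^2)_+^{(d-1)/2}$.

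Dividing by $r^{d-1}\mathbb{V}^{d-1}$ then gives the clean expression
\[
\frac{1}{r^{d-1}\mathbb{V}^{d-1}}\int_{\norm{\xvec}_2\le r}\triangle f(\xvec)\diff\xvec = \int_{\mathbb{S}^{d-1}\times\R}\left(1-\frac{b^2}{r^2}\right)_+^{(d-1)/2}\diff\alpha(\wvec,b).
\]
For each fixed $(\wvec,b)$ the factor $\bigl(1-b^2/r^2\bigr)_+^{(d-1)/2}$ is nonnegative, bounded by $1$, and \emph{monotonically non-decreasing} in $r$ (it is $0$ while $r<|b|$ and then increases to $1$). Because $\alpha\ge 0$, the monotone convergence theorem applies and lets me pass the limit inside the integral, yielding $\lim_{r\to\infty}\int (1-b^2/r^2)_+^{(d-1)/2}\diff\alpha = \int 1\,\diff\alpha = \alpha(\mathbb{S}^{d-1}\times\R) = \norm{\alpha}_1$. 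This is the one place the nonnegativity hypothesis is essential: without it the integrand is no longer monotone (and the weighted limit would only recover a signed quantity), so the identity genuinely relies on $\alpha\ge 0$.

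Finally, the second equality in the claim is just the divergence (Gauss) theorem: $\int_{\norm{\xvec}_2\le r}\triangle f\diff\xvec = \int_{\norm{\xvec}_2\le r}\nabla\cdot(\nabla f)\diff\xvec = \oiint_{\norm{\xvec}_2=r}\nabla f(\xvec)^\top\diff\hat{\boldsymbol{n}}(\xvec)$, with $\hat{\boldsymbol{n}}$ the outward unit normal. The routine parts are the Fubini/Tonelli swap and the slice-volume computation; the main obstacle is rigor in the measure-theoretic and distributional bookkeeping. Specifically, I would need to make precise the slice-volume identity for a general nonnegative measure $\alpha$ (rather than a density) — e.g.\ via disintegration of $\alpha$ along $\wvec$ and the coarea formula — and to justify the divergence theorem for $f$ that is only the (locally Lipschitz, convex since $\alpha\ge0$) limit of piecewise-linear networks, where $\nabla f$ exists almost everywhere and Gauss's theorem holds in the appropriate $BV$/distributional sense. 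Given the ``indicative'' framing of the claim, these are the technical points to handle carefully but none present a genuine difficulty.
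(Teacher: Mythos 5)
Your proof is correct and takes essentially the same route as the paper's: integrate the Laplacian identity over the ball, slice perpendicular to each $\wvec$ so the inner integral becomes the $(d-1)$-ball volume $\mathbb{V}^{d-1}\left(r^2-b^2\right)_+^{(d-1)/2}$, normalize by $r^{d-1}\mathbb{V}^{d-1}$, pass the limit inside the integral, and handle the surface-integral equality by the divergence theorem. The only (minor) difference is that the paper justifies the limit--integral exchange by the bounded convergence theorem whereas you use monotone convergence via the observation that $\left(1-b^2/r^2\right)_+^{(d-1)/2}$ is non-decreasing in $r$, which is equally valid and even slightly more robust since it does not presuppose that $\norm{\alpha}_1$ is finite.
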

\begin{proof}
The right equality is a direct consequence of the divergence theorem. It remains to prove the left equality. From \eqref{eq: Laplacian} we have
\begin{align*}
 & \lim_{r\rightarrow\infty}\frac{1}{r^{d-1}\mathbb{V}^{d-1}}\int_{\norm{\xvec}_2\leq r}\mathrm{Tr}\left[\nabla_{\xvec}^{2}f\left(\xvec\right)\right]\diff\xvec
 =\lim_{r\rightarrow\infty}\frac{1}{r^{d-1}\mathbb{V}^{d-1}}\int_{\norm{\xvec}_2\leq r}\diff\xvec\int_{\mathbb{S}^{d-1}}\diff\wvec\alpha\left(\wvec,\wvec^{\top}\xvec\right)\\
 & =\lim_{r\rightarrow\infty}\frac{1}{r^{d-1}\mathbb{V}^{d-1}}\int_{\mathbb{S}^{d-1}}\diff\wvec\left[\int_{\norm{\xvec}_2\leq r}\diff\xvec\alpha\left(\wvec,\wvec^{\top}\xvec\right)\right]\\
 & \overset{\left(1\right)}{=}\lim_{r\rightarrow\infty}\frac{1}{r^{d-1}\mathbb{V}^{d-1}}\int_{\mathbb{S}^{d-1}}\diff\wvec\left[\int_{-r}^{r}\diff x_{1}\alpha\left(\wvec,x_{1}\right)\int_{\mathcal{A}_{r}\left(x_{1}\right)}\prod_{i=2}^{d}\diff x_{i}\right]\\
 & \overset{\left(2\right)}{=}\lim_{r\rightarrow\infty}\frac{1}{r^{d-1}\mathbb{V}^{d-1}}\int_{\mathbb{S}^{d-1}}\diff\wvec\left[\int_{\mathbb{R}}\diff x_{1}\alpha\left(\wvec,x_{1}\right)\mathcal{I}\left[x_{1}\in\left[-r,r\right]\right]\mathbb{V}^{d-1}\left(r^{2}-x_1^{2}\right)^{\frac{d-1}{2}}\right]\\
 & \overset{\left(3\right)}{=}\int_{\mathbb{S}^{d-1}}\diff\wvec\left[\int_{\mathbb{R}}\diff x_{1}\alpha\left(\wvec,x_{1}\right)\lim_{r\rightarrow\infty}\left(\mathcal{I}\left[x_{1}\in\left[-r,r\right]\right]\left[1-\left(x_1/r\right)^{2}\right]^{\frac{d-1}{2}}\right)\right]\\
 & \overset{\left(4\right)}{=}\int_{\mathbb{S}^{d-1}}\diff\wvec\int_{\mathbb{R}}\diff b \alpha\left(\wvec,b\right)
  \overset{\left(5\right)}{=}\left\Vert \alpha\right\Vert _{1}
\end{align*}
where in $\left(1\right)$ we assume, without loss of generality,
that $\wvec^{\top}=\left(1,0,\dots,0\right)$ inside the square brackets
and define the following $d-1$ dimensional ball
\begin{equation} \label{eq: unit ball}
   \mathcal{A}_{r}\left(z\right)\triangleq\left\{ \xvec\in\mathbb{R}^{d}|\left\Vert \xvec\right\Vert _{2}\leq r,x_{1}=z\right\} \,,
\end{equation}

in $\left(2\right)$ we define $\mathcal{I}$ as an indicator function
and calculate the surface area of $\mathcal{A}_{r}\left(z\right)$,
in $\left(3\right)$ we switch the order of the limit and integration
using the bounded convergence theorem, in $\left(4\right)$ we denote $x_{1}=b$, and in $\left(5\right)$ we
use our assumption that $\alpha\left(\wvec,b\right)\geq0.$
\end{proof}

We do not expect that in general $\alpha$ would be non-negative, nor that the Hessian would be p.s.d.  Recall that in the one dimensional case, $\oR(f)$ is related to the integral of the {\em absolute value} of $f''$. Therefore, in order to generalize Claim \ref{clm:l1_2_Laplacian} to mixed sign eigenvalues, a naive conjecture would be that $\oR(f)$ is given by the integral of the {\em nuclear norm} of the Hessian, up to a linear function accounting for the boundary conditions as in the one dimensional case.  That is, the sum of the absolute values of the eigenvalues of the Hessian, as opposed to the Laplacian which is the sum of signed eigenvalues.

Unfortunately, such a conjecture cannot be accurate. The issue is that, for some functions with vanishing Hessian, any norm of the Hessian, normalized as in Claim \ref{clm:l1_2_Laplacian}, would be equal to zero. For example, 
\begin{claim}\label{claim:vanishing Hessian}
For the function
\begin{equation}
h(\xvec)=\int_{\mathbb{S}^{d-1}}d\wvec\left(\left[\mathbf{w}^{\top}\mathbf{x}+1\right]_{+}-2\left[\mathbf{w}^{\top}\mathbf{x}\right]_{+}+\left[\mathbf{w}^{\top}\mathbf{x}-1\right]_{+}\right)\,,\label{eq: h(x)}
\end{equation}
for any norm, we have
\begin{equation} \label{eq:normalized Hessian}
\lim_{r\rightarrow\infty}\frac{1}{r^{d-1}}\int_{\left\Vert \xvec\right\Vert \leq r}d\mathbf{x}\left\Vert \nabla^{2}h(\xvec)\right\Vert =0\,.
\end{equation}
\end{claim}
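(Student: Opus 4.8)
The plan is to show that $\|\nabla^{2}h(\mathbf{x})\|$ decays like $\|\mathbf{x}\|^{-3}$ as $\|\mathbf{x}\|\to\infty$, which is fast enough that dividing by $r^{d-1}$ annihilates the integral. Since all norms on the finite-dimensional space of symmetric $d\times d$ matrices are equivalent, it suffices to prove the decay for one norm (say the operator or Frobenius norm), and the bound then transfers to every norm up to a constant. First I would read off the integrand direction-by-direction: writing $t=\mathbf{w}^{\top}\mathbf{x}$, the bracketed expression in \eqref{eq: h(x)} is the tent function $g(t)=[t+1]_{+}-2[t]_{+}+[t-1]_{+}$, supported on $[-1,1]$, whose distributional second derivative is $\delta_{-1}-2\delta_{0}+\delta_{1}$. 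Feeding the associated measure $\alpha(\mathbf{w},b)=\delta_{-1}(b)-2\delta_{0}(b)+\delta_{1}(b)$ into the Hessian formula \eqref{eq: Hessian 2 alpha} gives
\[
\nabla^{2}h(\mathbf{x})=M(-1,\mathbf{x})-2M(0,\mathbf{x})+M(1,\mathbf{x}),\qquad M(c,\mathbf{x}):=\int_{\mathbb{S}^{d-1}}\delta(\mathbf{w}^{\top}\mathbf{x}-c)\,\mathbf{w}\mathbf{w}^{\top}\,d\mathbf{w},
\]
which I recognize as the second central difference of $c\mapsto M(c,\mathbf{x})$ with unit step.

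Next I would exploit rotational symmetry. Taking $\mathbf{x}=\rho\,\mathbf{e}_{1}$ without loss of generality (the operator/Frobenius norm is rotation-invariant and $h$ is radial) and using $\delta(\rho w_{1}-c)=\rho^{-1}\delta(w_{1}-c/\rho)$, I obtain $M(c,\rho\mathbf{e}_{1})=\rho^{-1}N(c/\rho)$ where $N(s):=\int_{\mathbb{S}^{d-1}}\delta(w_{1}-s)\,\mathbf{w}\mathbf{w}^{\top}\,d\mathbf{w}$ is the matrix-valued average of $\mathbf{w}\mathbf{w}^{\top}$ over the slice $\{w_{1}=s\}$. For $|s|<1$ this slice is a genuine $(d-2)$-sphere, so $N$ is smooth (indeed analytic) on $(-1,1)$. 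Hence for $\rho\ge 2$ we have $\nabla^{2}h(\rho\mathbf{e}_{1})=\rho^{-1}\bigl[N(-1/\rho)-2N(0)+N(1/\rho)\bigr]$, and Taylor's theorem applied to a second central difference of step $1/\rho$ yields $\|N(-1/\rho)-2N(0)+N(1/\rho)\|\le \rho^{-2}\sup_{|s|\le 1/2}\|N''(s)\|$. Only this upper bound matters—the cancellation of the lower-order terms is exactly the second-difference structure forced by the weights $1,-2,1$ summing to zero—and it gives $\|\nabla^{2}h(\mathbf{x})\|\le C\,\|\mathbf{x}\|^{-3}$ for $\|\mathbf{x}\|\ge 2$.

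Finally I would integrate in polar coordinates. By radial symmetry the quantity in \eqref{eq:normalized Hessian} equals $|\mathbb{S}^{d-1}|\,r^{-(d-1)}\int_{0}^{r}\|\nabla^{2}h(\rho)\|\,\rho^{d-1}\,d\rho$, which I split at $\rho=2$. On $[0,2]$ the integral is a finite constant: the only interior singularities are the $\sim\|\mathbf{x}\|^{-1}$ blow-up near the origin (where $\|\mathbf{x}\|\le 1$ forces $g(\mathbf{w}^{\top}\mathbf{x})=1-|\mathbf{w}^{\top}\mathbf{x}|$, so $h(\mathbf{x})=\mathrm{const}-c_{d}\|\mathbf{x}\|$ with Hessian scaling like $\|\mathbf{x}\|^{-1}$) together with a mild singularity at $\|\mathbf{x}\|=1$, both integrable against $\rho^{d-1}\,d\rho$ for $d\ge 2$. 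On $[2,r]$ the tail bound contributes $\int_{2}^{r}\rho^{d-4}\,d\rho$, which is $O(1)$, $O(\log r)$, or $O(r^{d-3})$ for $d=2$, $d=3$, or $d\ge 4$ respectively. Dividing by $r^{d-1}$ sends both pieces to $0$, establishing \eqref{eq:normalized Hessian}.

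I expect the main obstacle to be the rigorous treatment of the delta-function-valued Hessian near the singular set: specifically the local integrability of $\|\nabla^{2}h\|$ on $\{\|\mathbf{x}\|\le 2\}$, where one must verify that both the origin singularity and the degeneration of the slices defining $M(\pm1,\mathbf{x})$ as $\|\mathbf{x}\|\to 1^{+}$ contribute only finitely. The large-$\|\mathbf{x}\|$ decay, by contrast, is a clean Taylor estimate on the smooth function $N$.
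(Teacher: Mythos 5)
Your proof is correct, and it takes a genuinely different route from the paper's. The paper never touches the Hessian representation directly: it computes $h$ itself in closed form for $\left\Vert \mathbf{x}\right\Vert >1$ by explicitly evaluating the spherical integral $\int_{\mathbb{S}^{d-1}}[\mathbf{w}^{\top}\mathbf{x}+b]_{+}\,d\mathbf{w}$, observes that the constant and odd parts cancel under the $1,-2,1$ combination, Taylor-expands to get $h(\mathbf{x})=S_{d-1}\left\Vert \mathbf{x}\right\Vert ^{-1}+O(\left\Vert \mathbf{x}\right\Vert ^{-3})$, and then differentiates this asymptotic expansion twice to conclude $\left\Vert \nabla^{2}h(\mathbf{x})\right\Vert =O(\left\Vert \mathbf{x}\right\Vert ^{-3})$. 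You instead apply the paper's own Hessian formula \eqref{eq: Hessian 2 alpha} to the measure $\delta_{-1}-2\delta_{0}+\delta_{1}$, factor out $\rho^{-1}$ by the delta-scaling identity, and read off the decay as a Taylor bound on the second central difference of the smooth slice matrix $N(s)$ with step $1/\rho$ --- the same $1,-2,1$ cancellation, but acting on $N$ rather than on the closed-form expression for $h$. What your route buys: you avoid the explicit spherical computation and, more importantly, the paper's informal step of differentiating an asymptotic series (harmless here since $h$ is radial and analytic for $\left\Vert \mathbf{x}\right\Vert >1$, but unjustified as written); you also treat the cases $d=2,3$ in the tail integral correctly (the paper's $\int_{r_0}^{r}u^{d-4}\,du=r^{d-3}$ only holds up to constants for $d\ge 4$, though the conclusion survives in all cases), and you explicitly verify local integrability of $\left\Vert \nabla^{2}h\right\Vert$ near the origin and near $\left\Vert \mathbf{x}\right\Vert =1$, which the paper silently assumes by starting its integral at $r_{0}$. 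What the paper's route buys: the explicit asymptotics $h\sim S_{d-1}\left\Vert \mathbf{x}\right\Vert ^{-1}$, $\nabla h\sim-\mathbf{x}/\left\Vert \mathbf{x}\right\Vert ^{3}$, and the precise limiting form of the Hessian, with constants, which is more information than the claim requires but is useful for intuition about the ``bump'' shape of $h$ invoked in the surrounding discussion.
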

The proof is given in appendix \ref{sec:vanishing Hessian proof}.
Importantly, this function is written as the output of an infinite neural network with a finite nonzero norm $\left\Vert \alpha\right\Vert _{1}=4\int_{\mathbb{S}^{d-1}}d\wvec>0\,$. Moreover, we can perfectly fit any finite set of points $\{\xvec_n,y_n\}_{n=1}^N$ using a linear combination of scaled and shifted $h(\xvec)$  (each $h(\xvec)$ has a radial "bump" shape, which vanishes at infinity). Therefore, the true expression for $\bar{R}(f)$, must be different from a normalized integral of some norm of the Hessian (as in eq. \ref{eq:normalized Hessian}) --- otherwise we would get $\bar{R}(f)=0$ for such a fit, in contradiction to that $\left\Vert \alpha\right\Vert _{1}>0$ in this case. We leave it to future work to find if there is some function space cost $\bar{R}(f)$ corresponding to $\norm{\alpha}_1$  minimization.

\section{Discussion}
\label{sec-disc}

As we are realizing that (explicit or implicit) regularization plays a key role in deep learning, it is crucial to understand how simple norm control on the weights in parameter space induces rich complexity control in function space.  In a sense, for infinite size networks, the {\em only} role of the architecture is to give rise to and shape this rich complexity control.  Recent work studied this question in {\em linear} neural networks, and showed how in regularizing the weights in a convolutional network yields rich complexity control inducing sparsity in the frequency domain \citep{gunasekar2018implicit}.  Here we go beyond linear networks and study infinite ReLU networks. 

Much in the same as linear convolutional neural networks can represent any {\em linear} function, and the architecture's only role is to induce complexity control in that space, infinite width ReLU networks can represent any {\em continuous} function, and the role of the architecture, in our view, is to induce complexity control over function space.  We see that indeed, even for univariate functions, the architecture already induces a natural complexity control that is not obvious nor explicit.  Furthermore, we are particularly excited that this complexity control exactly matches the learning rules studied in recent work on interpolation learning \citep{belkin2018overfitting}, and provides a concrete connection of that work to deep learning.  We are eager to find out whether this connection carries also to the multivariate case.

We also hope that our study will reinvigorate approximation theory work on neural networks, studying approximation by networks of bounded {\em norm} rather than a bounded number of units.

Similarly, we would argue that the study of the importance of depth in neural networks should focus not on gaps in the {\em size} (number of units) required to fit a function, but whether deeper networks allow lower {\em norm} representation, and how depth changes the inductive bias induced by norm control over the weights.  \citet{gunasekar2018implicit} showed that for linear convolutional networks, the depth meaningfully changes the induced inductive bias, with depth $L$ networks corresponding to an $\ell_{2/L}$ bridge penalty, but in fully connected linear network the depth has no effect.  In Appendix \ref{sec:depthEffect} we study an architecture with infinitely many deep parallel ReLU networks, and show that also for such an architecture depth $L$ gives rise to similar $\ell_{2/L}$ sparsity-inducing bridge penalties, replacing the $\ell_1$ penalty for two-layer networks.  

It would be interesting to study if and how depth changes the induced complexity control in infinitely wide fully connected $L$-layer ReLU networks.  We can naturally extend our setup, letting $C_L(\theta)$ refer to the sum of the square of all weights in the system, and defining $\oR_L$ analogously to \eqref{eq:oR}, where the infimum is over all depth $L$ ReLU networks, with any number of units per layer (similar to $\gamma^L_{2,2}(f)$ as defined by \citet{neyshabur2015norm}).  In an anecdotal empirical example presented in Figure \ref{fig:deepernets}, depth does not appear to change the inductive bias, as with any depth network we recover a function minimizing $\oR(f)$ as calculated in Theorem \ref{thm:main}.  Another natural extension is to consider multiple output units---for two layer linear networks this corresponds to Frobenious norm control on a matrix factorization which induces nuclear norm regularization on the linear mapping from input to output \citep{fazel2001rank,srebro2004MMMF}.  How does this play out for $f:\R^d\to\R^k$ in function space with ReLU networks?

\subsection*{Acknowledgements}

We are in debt to Charlie Smart (University of Chicago) for pointing out the connection to Green's functions, and would also like to thank Jason Lee (UCS), Holden Lee (Princeton) and Arturs Backurs (TTIC) for helpful discussions. PS and NS were partially supported by NSF awards 1546500 and 1764032.

\removed{

In this work, we have analyzed the representational capacity of neural networks with infinite width and bounded norm. Unlike classical approximation results which consider capacity control through constraints on the number of units, our framework concurs with common practices in deep learning, where capacity control is performed (at least implicitly) through regularization on the norm of the weights instead of imposing size constraints on the network. 

In \thmref{thm:main}, we precisely characterize the required norm to represent univariate functions $f: \R \to \R$ with 1-hidden layer ReLU networks, which turns out to be essentially the total variation of its derivative $\int \abs{f''(x)} \diff x$. It follows that a wide range of smooth functions can be approximated arbitrarily well with finite norm, even though infinitely many units are necessary. This result provides an alternative but equivalent view of infinite width neural network training: instead of parameter optimization with norm constraints, we can see it as search in function space with variational constraints. In \thmref{thm:spline}, we showed that for $1$-dimensional data, the linear interpolation over the points, when implemented by a network, achieves minimum norm while perfectly fitting the data.

In Section \ref{sec:greens}, we have presented a concrete connection between Green's functions and training with (sufficiently large) neural networks.
}

\bibliography{bibliography}

\begin{thebibliography}{24}
\providecommand{\natexlab}[1]{#1}
\providecommand{\url}[1]{\texttt{#1}}
\expandafter\ifx\csname urlstyle\endcsname\relax
  \providecommand{\doi}[1]{doi: #1}\else
  \providecommand{\doi}{doi: \begingroup \urlstyle{rm}\Url}\fi

\bibitem[Bartlett(1997)]{bartlett1997valid}
Peter~L Bartlett.
\newblock For valid generalization the size of the weights is more important
  than the size of the network.
\newblock In \emph{Advances in neural information processing systems}, 1997.

\bibitem[Neyshabur et~al.(2014)Neyshabur, Tomioka, and
  Srebro]{neyshabur2014search}
Behnam Neyshabur, Ryota Tomioka, and Nathan Srebro.
\newblock In search of the real inductive bias: On the role of implicit
  regularization in deep learning.
\newblock \emph{arXiv preprint arXiv:1412.6614}, 2014.

\bibitem[Zhang et~al.(2016)Zhang, Bengio, Hardt, Recht, and
  Vinyals]{zhang2016understanding}
Chiyuan Zhang, Samy Bengio, Moritz Hardt, Benjamin Recht, and Oriol Vinyals.
\newblock Understanding deep learning requires rethinking generalization.
\newblock \emph{arXiv preprint arXiv:1611.03530}, 2016.

\bibitem[Soudry et~al.(2018)Soudry, Hoffer, {Shpigel Nacson}, Gunasekar, and
  Srebro]{soudry2018journal}
Daniel Soudry, Elad Hoffer, Mor {Shpigel Nacson}, Suriya Gunasekar, and Nathan
  Srebro.
\newblock The implicit bias of gradient descent on separable data.
\newblock \emph{JMLR}, 2018.

\bibitem[Gunasekar et~al.(2018)Gunasekar, Lee, Soudry, and
  Srebro]{gunasekar2018implicit}
Suriya Gunasekar, Jason Lee, Daniel Soudry, and Nathan Srebro.
\newblock Implicit bias of gradient descent on linear convolutional networks.
\newblock In \emph{NeurIPS}, 2018.

\bibitem[Hornik et~al.(1989)Hornik, Stinchcombe, and
  White]{hornik1989multilayer}
Kurt Hornik, Maxwell Stinchcombe, and Halbert White.
\newblock Multilayer feedforward networks are universal approximators.
\newblock \emph{Neural networks}, 2, 1989.

\bibitem[Cybenko(1989)]{cybenko1989approximations}
George Cybenko.
\newblock Approximations by superpositions of a sigmoidal function.
\newblock \emph{Mathematics of Control, Signals and Systems}, 2, 1989.

\bibitem[Barron(1993)]{barron1993universal}
Andrew~R Barron.
\newblock Universal approximation bounds for superpositions of a sigmoidal
  function.
\newblock \emph{IEEE Transactions on Information theory}, 39\penalty0
  (3):\penalty0 930--945, 1993.

\bibitem[Pinkus(1999)]{pinkus1999approximation}
Allan Pinkus.
\newblock Approximation theory of the mlp model in neural networks.
\newblock \emph{Acta numerica}, 8, 1999.

\bibitem[Neyshabur et~al.(2015)Neyshabur, Tomioka, and
  Srebro]{neyshabur2015norm}
Behnam Neyshabur, Ryota Tomioka, and Nathan Srebro.
\newblock Norm-based capacity control in neural networks.
\newblock In \emph{Conference on Learning Theory}, 2015.

\bibitem[Bach(2017)]{bach2017convex}
Francis Bach.
\newblock Breaking the curse of dimensionality with convex neural networks.
\newblock \emph{Journal of Machine Learning Research}, 2017.

\bibitem[Chizat and Bach(2018)]{chizat2018global}
Lenaic Chizat and Francis Bach.
\newblock On the global convergence of gradient descent for over-parameterized
  models using optimal transport.
\newblock \emph{arXiv preprint arXiv:1805.09545}, 2018.

\bibitem[Bengio et~al.(2006)Bengio, Roux, Vincent, Delalleau, and
  Marcotte]{bengio2006convex}
Yoshua Bengio, Nicolas~L. Roux, Pascal Vincent, Olivier Delalleau, and Patrice
  Marcotte.
\newblock Convex neural networks.
\newblock In \emph{Advances in Neural Information Processing Systems 18}. MIT
  Press, 2006.

\bibitem[Belkin et~al.(2018)Belkin, Hsu, and Mitra]{belkin2018overfitting}
Mikhail Belkin, Daniel Hsu, and Partha Mitra.
\newblock Overfitting or perfect fitting? risk bounds for classification and
  regression rules that interpolate.
\newblock \emph{Advances in neural information processing systems}, 32, 2018.

\bibitem[Rosset et~al.(2007)Rosset, Swirszcz, Srebro, and Zhu]{rosset2007l1}
Saharon Rosset, Grzegorz Swirszcz, Nathan Srebro, and Ji~Zhu.
\newblock ℓ 1 regularization in infinite dimensional feature spaces.
\newblock In \emph{International Conference on Computational Learning Theory}.
  Springer, 2007.

\bibitem[Fazel et~al.(2001)Fazel, Hindi, and Boyd]{fazel2001rank}
M.~Fazel, H.~Hindi, and S.~P. Boyd.
\newblock A rank minimization heuristic with application to minimum order
  system approximation.
\newblock In \emph{Proceedings of the 2001 American Control Conference}, 2001.

\bibitem[Srebro et~al.(2004)Srebro, Rennie, and Jaakkola]{srebro2004MMMF}
Nathan Srebro, Jason D.~M. Rennie, and Tommi~S. Jaakkola.
\newblock Maximum-margin matrix factorization.
\newblock In \emph{Proceedings of the 17th International Conference on Neural
  Information Processing Systems}, NIPS, 2004.

\bibitem[Barron(1994)]{barron1994approximation}
Andrew~R Barron.
\newblock Approximation and estimation bounds for artificial neural networks.
\newblock \emph{Machine learning}, 14, 1994.

\bibitem[Lee et~al.(2017)Lee, Ge, Ma, Risteski, and Arora]{lee2017ability}
Holden Lee, Rong Ge, Tengyu Ma, Andrej Risteski, and Sanjeev Arora.
\newblock On the ability of neural nets to express distributions.
\newblock In \emph{Conference on Learning Theory}, pages 1271--1296, 2017.

\bibitem[Cheng et~al.(2016)Cheng, Koc, Harmsen, Shaked, Chandra, Aradhye,
  Anderson, Corrado, Chai, Ispir, Anil, Haque, Hong, Jain, Liu, and
  Shah]{Cheng2016}
Heng{-}Tze Cheng, Levent Koc, Jeremiah Harmsen, Tal Shaked, Tushar Chandra,
  Hrishi Aradhye, Glen Anderson, Greg Corrado, Wei Chai, Mustafa Ispir, Rohan
  Anil, Zakaria Haque, Lichan Hong, Vihan Jain, Xiaobing Liu, and Hemal Shah.
\newblock Wide {\&} deep learning for recommender systems.
\newblock \emph{CoRR}, abs/1606.07792, 2016.
\newblock URL \url{http://arxiv.org/abs/1606.07792}.

\bibitem[Shazeer et~al.(2017)Shazeer, Mirhoseini, Maziarz, Davis, Le, Hinton,
  and Dean]{Shazeer2017}
Noam Shazeer, Azalia Mirhoseini, Krzysztof Maziarz, Andy Davis, Quoc~V. Le,
  Geoffrey~E. Hinton, and Jeff Dean.
\newblock Outrageously large neural networks: The sparsely-gated
  mixture-of-experts layer.
\newblock \emph{CoRR}, 2017.

\bibitem[Wei et~al.(2018)Wei, Lee, Liu, and Ma]{wei2018margin}
Colin Wei, Jason~D Lee, Qiang Liu, and Tengyu Ma.
\newblock On the margin theory of feedforward neural networks.
\newblock \emph{arXiv preprint arXiv:1810.05369}, 2018.

\bibitem[Tibshirani et~al.(2013)]{tibshirani2013lasso}
Ryan~J Tibshirani et~al.
\newblock The lasso problem and uniqueness.
\newblock \emph{Electronic Journal of Statistics}, 2013.

\bibitem[Petrov(2019)]{petrov2018support}
Fedor Petrov.
\newblock Proving an infinite norm minimization problem has finite support
  (non-convex p-norms).
\newblock MathOverflow, 2019.
\newblock URL \url{https://mathoverflow.net/q/321004}.

\end{thebibliography}

\newpage

\appendix
\part*{Appendix}
	
\section{Equivalence of overall $\ell_2$ control to $\ell_1$ control on the output layer}
\label{app:neyshaburreproof}

In this Appendix we formally show that regularizing the overall $\ell_2$ norm on the weights in two layers (i.e.~$C(\theta)$, the sum of squares of all weights in the system), is equivalent to constraining the $\ell_2$ norm of incoming weights for each unit in the hidden layer, and regularizing the $\ell_1$ norm of weights in the output layer.  This was already observed and proved for networks without an unregularized bias as Theorem 1 of \citet{neyshabur2014search}, and in somewhat more general form as Theorem 10 of \citet{neyshabur2015norm}.  The exact same arguments are valid also when an unregularized bias is allowed, and in this Appendix we make this precise, and repeat the arguments of \citet{neyshabur2014search,neyshabur2015norm} for completeness.

Recall the definition of 2-layer ReLU networks $\func{\params}$ for $\params \in \ThetaClass_2$:
\begin{equation*}
	\func{\params}(\xvec) = \sum_{i=1}^k w^{(2)}_i \relu{\ip{\wvec^{(1)}_i}{\xvec} + b^{(1)}_i} 
	+ 
	b^{(2)}
\end{equation*}

\begin{lemma}

\eqref{eq:Rf} and \eqref{eq:finitel1} are equivalent. More specifically:

\begin{equation*}
\begin{aligned}
& {\inf_{\params \in \ThetaClass_2}}
& & ~\half \sum_{i=1}^k \left( (w^{(2)}_i)^2 + \norm{\wvec^{(1)}_i} _2^2 \right) \quad = 
& & {\inf_{\params \in \ThetaClass_2}} 
& & ~\norm{\wvec^{(2)}}_1 
\\
& ~\text{\st} 
& & ~\func{\params} = f 
& & ~\text{\st}
& & ~h_\params = f~,~ \forall{i}: \norm{\wvec^{(1)}_i}_2 = 1
\end{aligned}
\end{equation*}
\label{lemma-pal}
\end{lemma}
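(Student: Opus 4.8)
The plan is to prove the two infima coincide by showing each is at most the other, the engine being the $1$-homogeneity of the ReLU. The central observation is a rescaling invariance: for any unit $i$ and any scalar $c_i>0$, replacing $(\wvec^{(1)}_i, b^{(1)}_i, w^{(2)}_i)$ by $(c_i\wvec^{(1)}_i,\, c_i b^{(1)}_i,\, w^{(2)}_i/c_i)$ leaves $\func{\theta}$ unchanged, since $\frac{w^{(2)}_i}{c_i}\relu{\ip{c_i\wvec^{(1)}_i}{\xvec} + c_i b^{(1)}_i} = w^{(2)}_i \relu{\ip{\wvec^{(1)}_i}{\xvec} + b^{(1)}_i}$. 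This transformation preserves the product $\abs{w^{(2)}_i}\cdot\norm{\wvec^{(1)}_i}_2$ while letting us rescale $\norm{\wvec^{(1)}_i}_2$ to any positive value we like.

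Before rescaling I would dispose of degenerate units. A unit with $w^{(2)}_i=0$ contributes nothing and can be deleted; a unit with $\wvec^{(1)}_i=\mathbf 0$ contributes only the constant $w^{(2)}_i\relu{b^{(1)}_i}$, which can be folded into the unregularized output bias $b^{(2)}$ without changing $\func{\theta}$ and without increasing either objective. Hence, in computing either infimum, I may assume every remaining unit has $w^{(2)}_i\neq 0$ and $\wvec^{(1)}_i\neq\mathbf 0$, so that both $c_i=1/\norm{\wvec^{(1)}_i}_2$ and $c_i=\sqrt{\abs{w^{(2)}_i}}$ are well-defined positive scalars.

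For the direction $\mathrm{RHS}\le\mathrm{LHS}$, take any $\theta$ with $\func{\theta}=f$ achieving cost within $\epsilon$ of the left infimum, and rescale each unit by $c_i=1/\norm{\wvec^{(1)}_i}_2$. The result $\tilde\theta$ has $\norm{\tilde\wvec^{(1)}_i}_2=1$, so it is feasible for the right-hand problem, and $\norm{\tilde\wvec^{(2)}}_1=\sum_i \abs{w^{(2)}_i}\,\norm{\wvec^{(1)}_i}_2 \le \tfrac12\sum_i\big((w^{(2)}_i)^2+\norm{\wvec^{(1)}_i}_2^2\big)=C(\theta)$ by AM--GM applied termwise. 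For the reverse direction $\mathrm{LHS}\le\mathrm{RHS}$, take $\theta$ feasible for the right-hand problem (so $\norm{\wvec^{(1)}_i}_2=1$) within $\epsilon$ of the right infimum and rescale by $c_i=\sqrt{\abs{w^{(2)}_i}}$; this balances the layers so that $(\tilde w^{(2)}_i)^2=\norm{\tilde\wvec^{(1)}_i}_2^2=\abs{w^{(2)}_i}$, giving $C(\tilde\theta)=\sum_i\abs{w^{(2)}_i}=\norm{\wvec^{(2)}}_1$. Letting $\epsilon\to 0$ in both chains of inequalities yields equality of the two infima.

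The main obstacle is bookkeeping rather than conceptual. One must treat the degenerate units and the bias absorption cleanly, and because these are infima that need not be attained, the whole argument should be phrased through $\epsilon$-optimal feasible points, with each rescaling verified to send feasible points to feasible points in the correct direction. In particular, the norm-one constraint in the right-hand problem forces the removal of $\wvec^{(1)}_i=\mathbf 0$ units, and the choice $c_i=\sqrt{\abs{w^{(2)}_i}}$ is legitimate precisely because the degenerate-unit reduction guarantees $w^{(2)}_i\neq 0$; checking these compatibilities is the only delicate point.
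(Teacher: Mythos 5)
Your proposal is correct and follows essentially the same route as the paper's own proof: exploit the $1$-homogeneity of the ReLU to rescale each unit without changing the function, and apply AM--GM termwise to relate $\tfrac12\left((w^{(2)}_i)^2+\norm{\wvec^{(1)}_i}_2^2\right)$ to $\abs{w^{(2)}_i}\cdot\norm{\wvec^{(1)}_i}_2$, with the balanced rescaling achieving equality. Your additional bookkeeping (removing degenerate units, folding $\wvec^{(1)}_i=\mathbf 0$ units into the unregularized bias, and phrasing everything via $\epsilon$-optimal points since the infima need not be attained) makes explicit some details the paper leaves implicit, but the underlying argument is the same.
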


\begin{proof}
    For any $\params \in \ThetaClass_2$, consider the rescaled parameters $\tilde \params$ given by $\tilde \wvec^{(1)}_i = c_i \wvec^{(1)}_i $, $\tilde w^{(2)}_i = \frac{w^{(2)}_i}{c_i}$, $\tilde b^{(1)}_i = c_i b^{(1)}_i$, for some $c_i > 0$. Now, check that, for all $i$:
    \begin{equation*}
    \begin{split}
	    \tilde w^{(2)}_i \relu{\ip{\tilde \wvec^{(1)}_i}{\xvec} + \tilde b^{(1)}_i}
	    =
	   \frac{w^{(2)}_i}{c_i} \relu{c_i \left(\ip{ \wvec^{(1)}_i}{\xvec} + b^{(1)}_i \right)}
	    =w^{(2)}_i \relu{\ip{ \wvec^{(1)}_i}{\xvec} + b^{(1)}_i}
	\end{split}
    \end{equation*}
	Therefore $\func{\params} = \func{\tilde \params}$. Moreover, we have that, from the inequality between arithmetic and geometric means:
    \begin{equation*}
    \begin{split}
	\half \sum_{i=1}^k
    \left( (w^{(2)}_i)^2 + \norm{\wvec^{(1)}_i} _2^2\right) 
	\geq 
	\sum_{i=1}^k
    \abs{w^{(2)}_i} \cdot \norm{\wvec^{(1)}_i} _2
	\end{split}
    \end{equation*}
	where a rescaling given by $c_i = \sqrt{\abs{w^{(2)}_i} / \norm{\wvec^{(1)}_i}_2 }$ minimizes the left-hand side and achieves equality. Since the right-hand side is invariant to rescaling, we can arbitrarily set $\norm{\wvec^{(1)}_i} _2 = 1$ for all $i$, yielding $\sum_{i=1}^k \abs{w^{(2)}_i} = \norm{\wvec^{(2)}}_1 $.
\end{proof}
\section{Relationship to Barron's Analysis}
\label{app:barron}

\citet{barron1993universal,barron1994approximation} studies function approximation using two-layer neural network with sigmoidal activation, bounding both the {\em number} of units required for approximation, and also the $\ell_1$ {\em norm} of the weights {\em in the output layer}.  For a function $f:\R^d\rightarrow\R$, Barron defined a quantity $C_f$, which we refer to as the ``Barron Norm''\footnote{The ``Barron Norm'' can be finite or infinite, and is a semi-norm over the convex set of functions over which it is finite.}.  The core component of Barron's analysis is showing how to approximate $f$ with an infinite number of units, using a measure $\alpha$ over weights, similar to our representation $h_{\alpha,c}$ as in \eqref{eq:alpha} (but with different activation functions), and bounding the $\ell_1$ norm $\norm{\alpha}_1$ of this measure\removed{\foonote{Barron referred to this as the scaling, since he used probability distributions instead of signed measures.}} in terms of $C_f$.  Approximations using a finite number of units can then be obtained by sampling from this measure.  Barron's analysis is therefore in many ways similar to ours, suggesting the Barron Norm $C_f$ as the induced complexity measure in function space.  However, we point out several important differences between Barron's approach and ours.

One important difference is that while Barron's norm $C_f$ controls the norm of the output layer, it does {\em not} control the overall norm of the weights across both layers.  To understand this better, recall that Barron's analysis is based on first considering cosine activations, then approximating the cosine activations with step functions, and finally approximating the step functions with sigmoidal units.  Working with cosine activations, Barron {\em does} control a norm of the ``weights'' of each cosine, and so does in a sense provide overall norm control.  However, when approximating the cosine with step functions and then in turn with sigmoids, the norm of the weights of the sigmoidal units must increase to infinity in order for them to approximate step functions.  The resulting sigmoidal network, although having controlled norm in the output layer, has weights going to infinity in the hidden layer, and thus the overall norm of the network increases to infinity in an controlled and unanalyzed way as we seek better and better approximations (i.e.~as $\norm{f-h_{\alpha,c}}\rightarrow 0$).  Controlling the norm of the output layer without controlling the norm in the hidden layer is meaningful for sigmoidal networks because the output of each unit is bounded regardless of its weights, and so the $\ell_\infty$ norm of the output vector of the hidden layer is always bounded, hence bounding the $\ell_1$ norm of the output layer's weights is meaningful.  For ReLU networks, it does not make sense to control the weights in one layer without the other, since the activation is positive-homogeneous and the scale of the weights interact.  \removed{This homogeneity is also key in establishing the equivalence between overall $\ell_2$ norm control on both layers, and $\ell_1$ control over the top layer (as in Lemma \ref{lemma-pal} following \citealt{neyshabur2014search}), and so this equivalence only holds for networks with homogeneous activations, for which Barron's analysis would fail.}

Furthermore, even though Barron's norm $C_f$ does provide an upper bound on the $\ell_1$ norm of the top layer (if we ignore the norm of the weights in the hidden layer), for step or sigmoidal activation this upper bound is not tight.  And so, even if we were to regularized only the norm of the the output layer in a sigmoidal network, the induced complexity control in function space is {\em not} captured by the Barron norm $C_f$ ($C_f$ is only an upper bound on the induced complexity function, and so its form or behaviour might be radically different).  Viewed as an approximation theory result, it provides a sufficient, but not a necessary condition for approximability, and so is not, for example, sufficient for studying depth separation.  E.g., \cite{lee2017ability} provide depth separation results for a generalization of Barron's norm, but this does {\em not} translate to any meaningful depth separation results for sigmoidal (and certainly not ReLU) neural networks.

A final technical but important issue is that Barron's norm is only useful for approximation results when applied to functions over the entire space $\R^d$, but provides approximation guarantees only in a ball of bounded radius, with a strong dependence on this radius.  That is, to study approximability of a function inside a bounded ball, the true quantity this suggests is the minimum Barron norm over all extensions of the function to the entire space, but it is not clear how such a minimum norm extension would behave.  Furthermore, the Barron norm is specific to approximation over Euclidean balls.  It can be generalized also to approximation over other compact domains, but the shape of the domain would then change the definition of the norm.

\section{The effect of neural networks depth on the inductive bias}
\label{sec:depthEffect}

\natinote{First say we can also define the $C_L$ and therefore $\oR_L$ also for depth $L$.  But for deeper networks its no longer convex \cite{neyshabur2015norm} and we do not have a representation in terms of measures.  Does it change the bias? 
Not clear, e.g. for fully connected linear it does not, but for linear convolutional or with multiple outputs it does.  Some anecdotal experiments suggest for 1d perhaps the bias is similar (refer to figures, no need to discuss them elsewhere except in their captions). 
To give a sense of the possible effect of depth, we study here a specific deep and infinitely wide architecture, which is simpler then fully connected, and has seen success in practice \todo{\cite{?}} .    }
\dnote{Nati - is anything from the previous comment relevant? If not, just delete.}
We now turn to a different infinite width architecture,
where we study the effect of depth.
The networks we consider have a parallel structure
common in many state of the art
system \citep{Cheng2016,Shazeer2017}.
Consider a deep neural network with $L$ layers and a parallel architecture, so the output is a sum of $\subnetNum$ sub-networks with $L-1$ layers.
Such a network is parameterized by
$\params=
\left(
    \subnetNum,
    \weights_1, \dots, \weights_{\subnetNum},
    \coef
\right)$,
where 
$\weights_i=\left(\lweights{1}{i},\dots,\lweights{L-1}{i}\right)$%
is the set of weight matrices of the $i$\Th network,
and $\coef$ is the weight vector
of the last layer,
linearly combining
the (scalar) outputs of all $\subnetNum$ parallel sub-networks.
The parameter class $\ThetaClass_L$ is therefore defined as
\begin{multline*}
    \label{eq:ThetaClass2}
     \ThetaClass_L = 
    \left\{ 
        \params=
        \left(
            \subnetNum,
            \weights_1, \dots, \weights_{\subnetNum},
            \coef
        \right)
        ~ \right| ~ 
        k\in\N,
        \coef\in\R^k, 
        \\
        \left.
        \forall{i\in\iter{k}
        }:
        \prn{
            \begin{aligned}
                &\lweights{1}{i}\!\in\R^{\matSize \times d},
                \\
                &\lweights{2}{i}\!\in\R^{\matSize \times \matSize},
                \ldots,
                \lweights{L-2}{i}\!\in\R^{\matSize \times \matSize},
                \\
                &\lweights{L-1}{i}\!\in\R^{1\times\matSize}
            \end{aligned}
        }
    \right\},
\end{multline*}
    where $\matSize\in\N$ is the (fixed) width of the layers of the parallel networks.

The network function is defined as
\begin{equation}
    \func{\params}^L\prn{\xvec}
    =
    \sum_{i=1}^{\subnetNum}
    {
    \coef[i] \subfunc{\weights_i}{\xvec}
    }~,
\end{equation}
where
\begin{equation}
	\subfunc{\weights_i}{\xvec}
	= 
	\left(
    	h^{(L-1)}_{\lweights{L-1}{i}} 
    	\circ \dots \circ
    	h^{(1)}_{\lweights{1}{i}}
	\right)
	\prn{\xvec}~,
\end{equation}
and for $i\in\iter{\subnetNum}, l\in\iter{L-1}$:
\begin{equation}
	h^{(i)}_{\lweights{l}{i}}(\xvec)= \relu{\lweights{l}{i} \xvec}~,
\end{equation}

Unlike the networks in \secref{sec:reluNetworks},
the networks in this section do \textbf{not} have biases.

The squared Euclidean norm of the weights  (averaged over all layers) is now defined as
\begin{align}
    \Eucost_L(\params) 
    &= 
    \frac{1}{L}
    \left(
        \norm{\coef}_2^2
        +
        \sum_{i=1}^{\subnetNum}
        \sum_{\lyrIdx=1}^{L-1}
        \norm{\lweights{\lyrIdx}{i}}_F^2
    \right)~,
\end{align}
yielding the following definition of the infimum norm required to implement a function with this parallel architecture:
%
%
\begin{align}
\label{eq:parllelImplmntCost}
\begin{split}
\RP_L(f)
=
{\inf_{\params\in\ThetaClass_L}~}
\Eucost_L(\params) 
 ~~\text{\st}~
\func{\params}^L
= 
f~.
\end{split}
\end{align}
\removed{
\inote{Similarly to before we define...}
\begin{align}
\begin{split}
\oR_L^{\subnetNum}\prn{f} = \lim_{\epsilon \to 0} \left( 
{\inf_{\params \in \ThetaClass}}~
\Eucost_L(\params) 
 ~\text{\st}~
\norm{h^{\subnetNum}_{\params} - f}_\infty \leq \epsilon \right)
\end{split}
\end{align}
}

In the following Theorem we show that the $\ell_2$ norm control can be written equivalently in terms of minimizing a sparsity-inducing $\ell_p$ bridge penalty $\norm{\alpha}_p=\left(\sum_i \alpha_i^p\right)^{1/p}$ where $p=2/L<1$ (so we slightly abuse the norm notation as this is non-convex and thus not a norm) in the last layer, while the weights of the subnetworks are restricted to $\sphere$, the direct product of $L-1$ Euclidean unit spheres,
    each corresponding to the size of the matrix at its layer:
    \begin{equation}
    \label{eq:sphereDefinition}
    \sphere = 
        \mathbb{S}^{d \cdot \matSize}
        \times
        \mathbb{S}^{\matSize^2}
        \times\ldots\times
        \mathbb{S}^{\matSize^2}
        \times
        \mathbb{S}^{\matSize}~.
    \end{equation}
    In other words,
    for each 
    $\unit=\left(\lweights{1}{},
    \ldots,
    \lweights{L-1}{}\right)\in\sphere$,
    the weight matrices are normalized,
    \ie $\forall{l}: \norm{\lweights{l}{}}_F=1$. It will be convenient to let $\valpha$ denote the weights of the last layer (originally $\coef$) in this setting.
\begin{thm}
\label{prop:complexityRatio}
If \eqref{eq:parllelImplmntCost} is attainable, then
    \begin{align}
    \label{eq:lastLayerNormMin}
        \RP_L(f)
        &=
        {\inf_{\params=(k,\left\{\unit_i\right\}_{i=1}^{\subnetNum}, \valpha)\in\Theta_L}}~
        {\norm{\valpha}_{\order}^{\order}} 
         ~~\text{\st}~
        \func{\params}^L
        = 
        f, 
         ~\forall{i\in\iter{\subnetNum}}: \unit_i \in \sphere
    \end{align}
     where $\sphere$ is defined in \eqref{eq:sphereDefinition}, and for each solution of one problem (either \eqref{eq:lastLayerNormMin} or \eqref{eq:parllelImplmntCost}) we can find an equivalent solution of the other (with the same $\func{\params}^L$).
\end{thm}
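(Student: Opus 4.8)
The plan is to mirror the two-layer argument of \lemref{lemma-pal}, exploiting the positive homogeneity of the ReLU, but now tracking the scale of each of the $L-1$ weight matrices of every subnetwork separately, together with the last-layer coefficient. First I would establish the homogeneity of a single subnetwork: since each $\relu{\cdot}$ is $1$-positive-homogeneous, a short induction on the layers gives, for any scalars $c_1,\dots,c_{L-1}>0$,
\[
\subfunc{c_1 \lweights{1}{},\dots,c_{L-1}\lweights{L-1}{}}{\xvec} = \left(\prod_{\lyrIdx=1}^{L-1} c_\lyrIdx\right)\subfunc{\lweights{1}{},\dots,\lweights{L-1}{}}{\xvec}.
\]
In particular, writing $r_{i,\lyrIdx}=\norm{\lweights{\lyrIdx}{i}}_F$ and normalizing each matrix to $\ulweights{\lyrIdx}{i}=\lweights{\lyrIdx}{i}/r_{i,\lyrIdx}$, every subnetwork factorizes as $\subfunc{\weights_i}{\xvec}=\big(\prod_\lyrIdx r_{i,\lyrIdx}\big)\subfunc{\unit_i}{\xvec}$ with $\unit_i\in\sphere$. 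Hence the whole network is unchanged if we push all the scale into the last layer by defining $\alpha_i = \coef[i]\prod_\lyrIdx r_{i,\lyrIdx}$, so that $\func{\params}^L=\sum_i \alpha_i \subfunc{\unit_i}{\xvec}$ and $\bar\params=(\subnetNum,\{\unit_i\},\valpha)$ is feasible for \eqref{eq:lastLayerNormMin}. Degenerate subnetworks, where some $\lweights{\lyrIdx}{i}=0$ or $\coef[i]=0$, give $\alpha_i=0$ and can be replaced by an arbitrary normalized direction with zero coefficient; since $0^{\order}=0$ they contribute nothing to either objective.

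The heart of the argument is then a per-subnetwork arithmetic--geometric mean comparison. For subnetwork $i$, the $L$ scalars $|\coef[i]|, r_{i,1},\dots,r_{i,L-1}$ have fixed product $|\alpha_i|$, and AM--GM gives
\[
(\coef[i])^2+\sum_{\lyrIdx=1}^{L-1} r_{i,\lyrIdx}^2 \;\ge\; L\big(|\coef[i]|\textstyle\prod_\lyrIdx r_{i,\lyrIdx}\big)^{\order} = L\,|\alpha_i|^{\order},
\]
with equality exactly when all $L$ scalars coincide, i.e.\ when each equals $|\alpha_i|^{1/L}$. Summing over $i$ and dividing by $L$ yields $\Eucost_L(\params)\ge \norm{\valpha}_{\order}^{\order}$, and taking the infimum over feasible $\params$ proves one inequality, $\RP_L(f)\ge$ the right-hand side of \eqref{eq:lastLayerNormMin}.

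For the reverse inequality I would start from any feasible normalized representation $f=\sum_i\alpha_i\subfunc{\unit_i}{\xvec}$ and realize it by the \emph{balanced} choice $\lweights{\lyrIdx}{i}=|\alpha_i|^{1/L}\ulweights{\lyrIdx}{i}$ and $\coef[i]=\sign(\alpha_i)|\alpha_i|^{1/L}$; by the homogeneity identity this produces exactly $f$, and since every factor has Frobenius norm $|\alpha_i|^{1/L}$ a direct computation gives $\Eucost_L(\params)=\sum_i|\alpha_i|^{\order}=\norm{\valpha}_{\order}^{\order}$, so $\RP_L(f)\le$ the right-hand side. Combining the two inequalities gives equality of the two infima.

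Finally, since the balanced construction attains $\Eucost_L(\params)=\norm{\valpha}_{\order}^{\order}$, while normalization produces from any feasible $\params$ an $\valpha$ with $\norm{\valpha}_{\order}^{\order}\le\Eucost_L(\params)$, a minimizer of either problem maps to a minimizer of the other with the same network function: the balanced reconstruction turns a solution of \eqref{eq:lastLayerNormMin} into one of \eqref{eq:parllelImplmntCost}, and normalization does the reverse, using the attainability hypothesis. I expect the main obstacle to be not the AM--GM step, which is routine, but the careful bookkeeping: proving the homogeneity factorization cleanly across all $L-1$ layers, and handling the degenerate zero-weight subnetworks so that the correspondence between the two feasible sets is genuinely a matching of solutions rather than merely an equality of optimal values.
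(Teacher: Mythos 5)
Your proposal is correct and takes essentially the same route as the paper's proof: normalize each subnetwork's weight matrices via positive homogeneity and absorb all scale into the last-layer coefficients, apply a per-subnetwork AM--GM inequality to get $\Eucost_L(\params)\ge\norm{\valpha}_{\order}^{\order}$, and use the balanced rescaling by $\abs{\alpha_i}^{1/L}$ for the reverse direction, with the same two constructions giving the solution correspondence. Your explicit handling of degenerate zero-weight subnetworks and your phrasing in terms of arbitrary feasible points and infima (rather than starting from an attained minimizer) are minor refinements, not a different approach.
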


\begin{proof}
The proof appears in \appref{app:implementationCostProof}.
\end{proof}
The proof of this Theorem has the immediate implication that
\begin{cor}
[Parameter alignment]
Consider any parallel architecture $\func{\params}^L$,
with $\subnetNum$ parallel $L$-layer networks, 
which minimizes
\eqref{eq:parllelImplmntCost},
\ie $\func{\params}^L=f$
and
$\Eucost_L(\params)=\RP_L(f)$.
Then in each parallel sub-network,
equality holds between
the $\Lp{2}$-norms 
of all its (inner) layers
and the its corresponding coefficient 
in the last layer.
That is, $\forall{i\in\iter{\subnetNum}}$:
\begin{align}
    \norm{\optlweights{1}{i}}_F^2
    =
    \ldots
    =
    \norm{\optlweights{L-1}{i}}_F^2
    =
    \abs
    {
        \optcoef[i]
    }^{2}
\end{align}
\end{cor}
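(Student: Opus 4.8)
The plan is to generalize the two-layer rescaling argument of \lemref{lemma-pal} (following \citet{neyshabur2014search}) to the deep parallel architecture, using the positive-homogeneity of the ReLU together with the arithmetic--geometric-mean inequality. The starting observation is that each sub-network output $\subfunc{\weights_i}{\xvec}$ is separately $1$-homogeneous in every weight matrix: since $\relu{\cdot}$ is $1$-positive-homogeneous and each layer acts linearly before the activation, replacing $\lweights{l}{i}$ by $c_l\lweights{l}{i}$ for any $c_l>0$ multiplies $\subfunc{\weights_i}{\xvec}$ by $c_l$. Consequently the contribution $\coef[i]\subfunc{\weights_i}{\xvec}$ of sub-network $i$ to $\func{\params}^L$ depends only on the normalized directions $\unit_i=\bigl(\lweights{1}{i}/\norm{\lweights{1}{i}}_F,\ldots\bigr)\in\sphere$ and on the signed scalar $\beta_i := \coef[i]\prod_{l=1}^{L-1}\norm{\lweights{l}{i}}_F$, which I will call the \emph{effective weight} of the sub-network.

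First I would fix the function $f=\func{\params}^L$ and minimize the cost sub-network by sub-network. The term of $\Eucost_L(\params)$ coming from sub-network $i$ is $\tfrac1L\bigl(\abs{\coef[i]}^2+\sum_{l=1}^{L-1}\norm{\lweights{l}{i}}_F^2\bigr)$, a sum of $L$ squared magnitudes whose product is fixed at $\abs{\beta_i}^2$ once the directions and $\beta_i$ are fixed. The arithmetic--geometric-mean inequality then gives $\abs{\coef[i]}^2+\sum_{l=1}^{L-1}\norm{\lweights{l}{i}}_F^2\ \ge\ L\,\abs{\beta_i}^{\order}$, with equality exactly when all $L$ magnitudes are equal, i.e.\ $\abs{\coef[i]}=\norm{\lweights{1}{i}}_F=\cdots=\norm{\lweights{L-1}{i}}_F$. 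Hence the minimal contribution of sub-network $i$ is $\abs{\beta_i}^{\order}$, the factor $1/L$ in $\Eucost_L$ being tailored to cancel the $L$ from AM--GM. Summing over $i$ yields $\RP_L(f)=\inf\sum_i\abs{\beta_i}^{\order}=\inf\norm{\boldsymbol{\beta}}_{\order}^{\order}$.

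To reach the stated form \eqref{eq:lastLayerNormMin}, I would absorb the products of norms into the directions: rescaling each $\lweights{l}{i}$ to unit Frobenius norm (so $\unit_i\in\sphere$) makes the effective weight coincide with the last-layer coefficient, $\beta_i=\valpha_i$, and leaves $\func{\params}^L$ unchanged by homogeneity. This shows $\RP_L(f)=\inf_{\unit_i\in\sphere}\norm{\valpha}_{\order}^{\order}$ subject to $\func{\params}^L=f$. For the claimed bijection of solutions, I would make both reductions explicit and reversible: from a minimizer of \eqref{eq:parllelImplmntCost} the equal-magnitude rescaling produces a feasible point of \eqref{eq:lastLayerNormMin} with the same value and function, and conversely from any $\sphere$-normalized representation the rescaling $c_l=\abs{\valpha_i}^{1/L}$ recovers a point of $\ThetaClass_L$ attaining $\Eucost_L=\norm{\valpha}_{\order}^{\order}$. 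The AM--GM equality condition obtained here is precisely the Parameter Alignment corollary.

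The hard part will be the careful bookkeeping rather than any deep idea: handling sign (the Frobenius norms are nonnegative while $\coef[i]$ is signed, so $\sign\beta_i=\sign\coef[i]$ must be tracked through the rescaling), and treating degenerate sub-networks where some $\norm{\lweights{l}{i}}_F=0$ or $\coef[i]=0$. In those cases the sub-network computes the zero function and can be discarded at no cost, so one should argue such a unit never appears in an optimal solution, or split it off before applying AM--GM. The attainability hypothesis on \eqref{eq:parllelImplmntCost} is what guarantees the optimal rescaling actually lies in $\ThetaClass_L$, making the correspondence well defined in both directions.
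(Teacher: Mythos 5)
Your proposal is correct and follows essentially the same route as the paper: the paper's own proof of this corollary simply invokes the AM--GM inequality \eqref{eq:meansIneq} used in proving \thmref{prop:complexityRatio} (the homogeneity-based rescaling that moves all layer norms into the last-layer coefficient), notes that at a minimizer this inequality must hold with equality, and concludes that all $L$ squared magnitudes are equal. Your per-sub-network rebalancing argument, including the treatment of degenerate (all-zero) sub-networks, is just a self-contained restatement of that same AM--GM equality-condition argument.
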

\begin{proof}
During the proof of 
\thmref{prop:complexityRatio}
we used the inequality of arithmetic and geometric means
in \eqref{eq:meansIneq}
to show that $\forall{i\in\iter{\subnetNum}}:$
\begin{align*}
\sqrt[L]{
    \abs
    {
        \optcoef[i]
    }^{2}
    \prod_{l=1}^{L-1}
    {
        \norm{\optlweights{l}{i}}_F^2
    }
}
\le
\frac{1}{L}
\left(
    \abs
    {
        \optcoef[i]
    }^{2}
    +
    \sum_{l=1}^{L-1}
    {
        \norm{\optlweights{l}{i}}_F^2
    }
\right)
\end{align*}
After finishing that proof,
we know that it must hold with equality.
But, equality holds if and only if
all the numbers in the means are equal.
\end{proof}


\removed{
We now wish to generalize \eqref{eq:lastLayerNormMin} to a measure
that is possibly non-countable infinite.
Like \citet{rosset2007l1}, 
we start by adding a positivity constraint by using the doubling trick,
and claim equivalence between the optimization problems.

\natinote{Skip this Lemma.  We didn't do it in Section 2 either.  I'd keep this {\bf very} short, state $\oR$ as an approximation (which you didn't) and then give the measure version.  Though I have to admit I am not entirely confident about the switch to measures here.  Perhaps its easier here to just stick with unbounded finite?  I guess we don't get anything with the measure, its just showing we can do fancy tricks with advanced math for no good reason.  Maybe best to just remove the entire measure treatment and give the sparsity for unbounded.}

\begin{lemma}[\temp{give name}]
\label{lem:positivityConstraint}
The following positive constrained problem on the \temp{signed set} $\msrDomain\times\left\{-1,1\right\}$, is equivalent to the same problem \eqref{eq:lastLayerNormMin} without the positivity constraint,
and also hold:
\begin{align}
\label{eq:withPosConstraints}
\begin{split}
\oR_L(f)
=
{\temp{\inf_{k, \tilde{\valpha}, \left\{\unit_i\right\}_{i=1}^{\subnetNum}}}~}
{\norm{\tilde{\valpha}}_{\order}^{\order}} 
 ~~\text{\st}~
\tilde{h}_{\tilde{\valpha}, \left\{\unit_i\right\}_{i=1}^{\subnetNum}}
= 
f, 
 ~\forall{i\in\iter{\subnetNum}}: \unit_i \in \sphere,
 ~\temp{\tilde{\valpha} \succeq 0}
\end{split}
\end{align}
where $\tilde{\valpha}: \temp{\sphere\times\left\{-1,1\right\}}\to\R$,
and $\tilde{\subfunc{\weights}{\xvec}}=
\sign\prn{\valpha\prn{\weights}}\subfunc{\weights}{\xvec}$
\inote{The notations here are temporary, will be improved}.
\end{lemma}

\begin{proof}
The proof appears in \appref{app:implementationCostProof}.
\end{proof}

We can thus limit ourselves to positive coefficients only, 
and generalize $\valpha$ from a coefficient vector in $\R^{\subnetNum}$
to a positive measure on $\msrDomain$.
\inote{
Copy pasted from Rosset:
Let $\left(\Omega, \Sigma\right)$ be a measurable space,
where we require $\Sigma\supset\left\{\left\{\omega\right\}: \omega \in \Omega\right\}$,
\ie the sigma algebra $\Sigma$ contains all singletons (this is a very mild assumption,
which holds for example for the "standard" Borel sigma algebra).
Let ...
}

\begin{align}
\label{eq:inftyGeneralized}
\begin{split}
\hat{\oR_L(f)}
=
{\temp{\inf_{k, \alpha, \left\{\unit_i\right\}_{i=1}^{\subnetNum}}}~}
{\norm{\alpha}_{\order}^{\order}} 
 ~~\text{\st}~
\intop_{\Omega}
\subfunc{\unit}{\xvec}
\diff
\alpha\prn{\unit}
= 
f~,
\end{split}
\end{align}
where $\norm{\valpha}_{\order}^{\order}
=
\intop_{\Omega}{\abs{\alpha}^{\order}\diff{?}}$.

}

\removed
{
\newpage
\subsubsection{Previous informal definition}
\temp{Consider now the bridge penalty over parallel architectures with
infinite parallel networks...
\todo{discuss convex NN? add cites}}
%
%
\begin{align}
\begin{split}
\lifcost{L}{\infty}{f}=
{\temp{\inf_{\valpha}}~}
{\norm{\valpha}_{\order}^{\order}} 
 ~\text{\st}~
h_{\ouralpha{}{}}^{\infty} = f 
\end{split}
\end{align}
where we extend the notation of parallel architectures
to the infinite-width case
\begin{equation}
    \func{\valpha}[\xvec]{\infty}
    =
    \intop_{\sphere}
    \valpha\left(\unit\right)
    \subfunc{\unit}{\xvec}~,
\end{equation}
make $\ouralpha{}{}$ infinite as well,
\ie $\ouralpha{}{i}\in\lspace$,
define the norm
\begin{align}
    \norm{\ouralpha{}{}}_{\order}^{\order}
    =
    \intop_{\sphere}
    \abs{\ouralpha{}{}\prn{\unit}}^{\order},
\end{align}
and finally define $\sphere$ as in \eqref{eq:sphereDefinition}.
}

Next, we consider minimizing an $\Lp{2}$-regularized loss over a finite set of $\samples$ samples,
\ie
\begin{align}
\label{eq:normalizedLossMinimization}
\begin{split}
\inf_{\theta\in\Theta_L}~
& {
    \sum_{\smplIdx=1}^{\samples} 
    \loss\prn{\func{\params}^L\prn{\xvec_{\smplIdx}}, y_{\smplIdx}}
    +
    \lambda \cdot \Eucost_L(\params)
} 
~,
\end{split}
\end{align}
where $\loss$ is a differentiable convex instantaneous loss function.
From \thmref{prop:complexityRatio}, it follows that \eqref{eq:normalizedLossMinimization} is equivalent to minimizing an $\Lp{{\order}}$-regularized
loss over the same set \footnote{Take any minimizer $\params^*$ of \eqref{eq:normalizedLossMinimization} and check that it must attain minimum $\Eucost_L(\params)$ when constrained to perfectly fit $\left(\xvec_\smplIdx, h_{\params^*}(\xvec_\smplIdx)\right)_{\smplIdx = 1}^\samples$. The claim then follows from \thmref{prop:complexityRatio} and the definition of $\RP_L$.},
\ie
\begin{align}
\label{eq:finiteConvexLossMinimization}
\begin{split}
{\inf_{\params=\prn{
    k, \left\{\unit_i\right\}_{i=1}^{\subnetNum},\valpha}\in\Theta_L
}}~
{
    \sum_{\smplIdx=1}^{\samples} 
    \loss\prn{
        \func{\params}^L
        \prn{\xvec_{\smplIdx}}, y_{\smplIdx}}
    +
    \lambda \cdot 
    \norm{\valpha}_{\order}^{\order}
}
~~\st~
\forall{i\in\iter{\subnetNum}}: \unit_i \in \sphere
 ~. 
 \end{split}
\end{align}
\removed{
\begin{align}
\label{eq:convexLossMinimization}
\begin{split}
{\inf_{\valpha}~}
& {
    \sum_{\smplIdx=1}^{\samples} 
    \loss\prn{
        \func{\params}^L\prn{\xvec_{\smplIdx}}\inote{\infty}, y_{\smplIdx}}
    +
    \lambda 
    \norm{\valpha}_{\order}^{\order}
}
 ~. 
 \end{split}
\end{align}

We now establish that
the regularized loss minimization problem
over infinite parallel networks
\eqref{eq:convexLossMinimization}
is equivalent to a similar problem 
on finite architectures.
}

\removed{
Our next theorem links learning on finite sample sets, when performing regularization on all parameters, and on the last layer only.
\begin{thm}
[Equivalence of learning on finite sample sets]
\label{thm:equivalenceOnFiniteSets}
Both of the above problems \eqref{eq:normalizedLossMinimization}
and 
\eqref{eq:finiteConvexLossMinimization}
are equivalent.
\removed{is equivalent to the following 
regularized loss minimization problem
over a finite number of parallel networks $\subnetNum$,
as long as $\subnetNum \ge \samples$:
\begin{align}
\label{eq:finiteConvexLossMinimization}
\begin{split}
{\inf_{\params=\prn{
    k, \left\{\unit_i\right\}_{i=1}^{\subnetNum},\valpha}
}}~
{
    \sum_{\smplIdx=1}^{\samples} 
    \loss\prn{
        \func{\params}^L
        \prn{\xvec_{\smplIdx}}, y_{\smplIdx}}
    +
    \lambda 
    \norm{\valpha}_{\order}^{\order}
}
~~\st~
\forall{i\in\iter{\subnetNum}}: \unit_i \in \sphere
 ~. 
 \end{split}
\end{align}
}
\end{thm}

\begin{proof}
The proof appears in \appref{app:implementationCostProof}.
\end{proof}
}

\removed{
\natinote{I don't understand this corollary.  Aren't the two equivalent for any N.  This has nothing to do with sparsity.}
\inote{The original idea was that 
\eqref{eq:convexLossMinimization}
is defined on \textbf{infinite} networks 
and 
\eqref{eq:finiteConvexLossMinimization}
is equivalent but on finite networks. Now that we optimize here over $k$ as well,
there's no need for them to be separate.}
}

When learning on finite sample sets,
the optimal solutions can be shown to be 
supported on bounded vector sets.
More specifically,
\begin{thm}
\label{thm:lastLayerSupportBound}
For any lower semi-continuous loss $\ell$, if ~\eqref{eq:finiteConvexLossMinimization} is attainable, then it has an optimal solution 
where $\norm{\valpha}_0 \le \samples$  \removed{$\subnetNum^* \le \samples$}.
Moreover, when $L\ge 3$, \textbf{all} optimal solutions of \eqref{eq:finiteConvexLossMinimization} have $\norm{\valpha}_0 \le \samples$.
\end{thm}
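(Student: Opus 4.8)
The plan is to reduce the problem to a finite-dimensional one living in $\R^{\samples}$ and then exploit the concavity of the bridge penalty $t\mapsto t^{\order}$. First I would observe that, for any feasible $\params=(\subnetNum,\{\unit_i\}_{i=1}^{\subnetNum},\valpha)$, the empirical loss depends on the parameters only through the output vector $\hat y=\left(\func{\params}^L(\xvec_1),\ldots,\func{\params}^L(\xvec_{\samples})\right)\in\R^{\samples}$. Since each normalized subnetwork $\unit_i\in\sphere$ contributes the \emph{atom} $v_i=\left(\subfunc{\unit_i}{\xvec_1},\ldots,\subfunc{\unit_i}{\xvec_{\samples}}\right)\in\R^{\samples}$, we have $\hat y=\sum_{i=1}^{\subnetNum}\alpha_i v_i$. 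Thus, holding the directions $\unit_i$ fixed and fixing $\hat y$, the loss is frozen and minimizing \eqref{eq:finiteConvexLossMinimization} reduces to minimizing the separable penalty $\norm{\valpha}_{\order}^{\order}=\sum_i\abs{\alpha_i}^{\order}$ subject to the $\samples$ linear equality constraints $\sum_i\alpha_i v_i=\hat y$.

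For the first claim (existence of a sparse optimum, valid for any $L\ge 2$), I would start from an attained optimum and run a support-reduction argument in the spirit of \cite{rosset2007l1}. Whenever the number of active coordinates exceeds $\samples$, the active atoms $\{v_i:\alpha_i\ne 0\}$ are more than $\samples$ vectors in $\R^{\samples}$ and hence linearly dependent: there is $c\ne 0$, supported on the active set, with $\sum_i c_i v_i=0$. Moving $\valpha\mapsto\valpha+t c$ preserves both $\hat y$ and the loss, while $P(t)=\sum_i\abs{\alpha_i+t c_i}^{\order}$ is concave on the maximal interval around $0$ on which no active coordinate changes sign (each summand there is a nonnegative affine function raised to a power $\order\le 1$). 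Since $c$ is nonzero on the active set, some active coordinate reaches zero at a finite $t$, so at least one endpoint of this interval is finite; as a concave function tends to infinity on any infinite side and attains its minimum over a compact interval at an endpoint, its minimum is achieved at a finite endpoint where one more coordinate vanishes, and $P$ there does not exceed $P(0)$. Iterating drives the support down to $\norm{\valpha}_0\le\samples$ without increasing the objective or leaving the feasible set.

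For the stronger claim that \emph{all} optima are sparse when $L\ge 3$, I would invoke strict concavity. Suppose an optimum has more than $\samples$ active coordinates and take the same dependence direction $c$. At $t=0$ every active $\alpha_i$ is bounded away from the cusp of $\abs{\cdot}^{\order}$, so each summand with $c_i\ne 0$ is \emph{strictly} concave in $t$ near $0$ (its second derivative is $\tfrac{2}{L}(\tfrac{2}{L}-1)\abs{\alpha_i}^{\order-2}c_i^2<0$ precisely because $\order<1$); since $c\ne 0$ on the active set, $P$ is strictly concave near $0$, whence $P(0)>\tfrac12\bigl(P(-\epsilon)+P(\epsilon)\bigr)$ and one of $P(\pm\epsilon)$ is strictly below $P(0)$. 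This produces a feasible point (same $\unit_i\in\sphere$, only the last-layer coefficients moved) with identical loss but strictly smaller penalty, contradicting optimality. As $\order<1$ holds exactly for $L\ge 3$ (at $L=2$ the penalty is merely affine along $c$, so only the weak conclusion survives), this is precisely where the ``all optimal solutions'' statement applies.

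The main obstacle I anticipate is the careful bookkeeping of the non-convex, non-differentiable penalty at its kinks: I must track the sign pattern of each $\alpha_i+t c_i$ to certify piecewise (strict) concavity, guarantee that a finite breakpoint is actually reached, and confirm that the whole manipulation stays inside the constraint $\unit_i\in\sphere$ (which it does, since only $\valpha$ is perturbed). Everything else — the reduction to $\R^{\samples}$ and the observation that the loss is constant along $c$ — is what makes the concavity machinery directly applicable.
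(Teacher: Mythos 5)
Your proof is correct and follows essentially the same route as the paper: perturb the last-layer coefficients $\valpha$ along a null direction of the $\samples$ data constraints (which exists whenever more than $\samples$ coordinates are active, since the atoms live in $\R^{\samples}$), and exploit concavity of $z \mapsto z^{2/L}$ — strictly, via a Jensen-type midpoint inequality, when $L \ge 3$ — to contradict optimality; this is exactly the paper's argument for the ``all optimal solutions'' claim. The only difference is cosmetic: for the existence claim (covering $L=2$) the paper defers to \citet{rosset2007l1} and \citet{tibshirani2013lasso}, whereas you reprove that support-reduction step inline with the concave-endpoint argument, which is the same technique made self-contained.
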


\begin{proof}
The proof appears in \appref{app:implementationCostProof}.
\end{proof}

This helps us shed light on the behavior of loss minimization when regularizing all parameters, and not only the ones on the last layer.

\removed{
\begin{cor}
When $L\ge3$,
in all the optimal solutions of
\eqref{eq:normalizedLossMinimization}
we have that $\subnetNum^* \le \samples$.
\end{cor}

\begin{proof}
Since we show 
in \thmref{prop:complexityRatio}
how to build an optimal solution of \eqref{eq:finiteConvexLossMinimization}
from an optimal solution of \eqref{eq:normalizedLossMinimization}
(with an identical number of parallel networks),
and following 
\thmref{thm:lastLayerSupportBound}.
\end{proof}
}

\removed{
The next corollary connects the above two regularized loss
minimization problems.
\begin{cor}
\label{cor:lossEquivalence}
When the width of the finite architecture is 
at least the number of samples,
\ie $m\ge \samples$,
then there is an equivalence between 
minimizing \eqref{eq:normalizedLossMinimization}
and
\eqref{eq:convexLossMinimization}.
\end{cor}

\begin{proof}
The proof appears in \appref{app:implementationCostProof}.
\end{proof}
}

As the networks become deeper,
the regularization term 
in \eqref{eq:finiteConvexLossMinimization},
$\norm{\ouralpha{}{}}_{\order}^{\order}$,
converges to an $\Lp{0}$-regularizer on $\ouralpha{}{}$.
That is, this regularizer essentially
induces sparsity in the weights of the last linear layer.
Moreover, when $L$ is indeed large enough
and the regularizer practically
behaves like an $\Lp{0}$-regularizer,
all optimal solutions to 
\eqref{eq:finiteConvexLossMinimization}
should have the same number of non-zero
weights in the last layer.
This implies that all optimal solutions of
\eqref{eq:normalizedLossMinimization}
also have the same number of non-zero 
weights in their last layer
(otherwise our construction 
\eqref{eq:constructionOfCoef}
in the proof of \thmref{prop:complexityRatio}
will yield a suboptimal solution which is impossible).
This means the $\Lp{2}$-regularized loss minimization problem
\eqref{eq:normalizedLossMinimization}
will implicitly 
zero out as many sub-networks
in the parallel architecture as possible. This result is closely related to the result of 
\cite{gunasekar2018implicit}, which found a similar inductive bias in certain \textit{linear} convolutional neural nets. Here we show such a sparsity-inducing bias (which gets stronger with depth) also affects non-linear deep networks.


\subsection*{Proofs}
\label{app:implementationCostProof}

\paragraph{Proof for \thmref{prop:complexityRatio}}
$ $ \newline
\begin{proof}
In the following proof
we show that given an optimal solution of either
of the two problems,
one can construct a feasible solution
to the second one,
with an equal objective value.
We follow a proof by \citet{wei2018margin}, 
who used a similar construction to bind the max margin 
of $\Lp{2}$-regularized neural networks with one hidden layer,
with the max margin of convex neural networks 
with one (infinite) hidden layer
and an $\Lp{1}$-regularization over the last layer.

\begin{description}
  \item[$\bullet~~{
        \Eucost_L\prn{\params^*} \ge \norm{\valpha^*}^{\order}_{\order}
  }$:]
  { 
    Given a solution 
    $$\params^{*}=
    \left(
        \subnetNum^*,
        \weights_1^{*}, \dots, \weights_{\subnetNum^*}^{*},
        \optcoef
    \right)$$
    to \eqref{eq:parllelImplmntCost},
    we show how to construct a solution for \eqref{eq:lastLayerNormMin}.
    Start by setting the following solution,
    where $\forall{i}\in\iter{\subnetNum^*}$:
    \begin{align}
    \begin{split}
    \label{eq:alphaConstruction}
        \unit_i
        &=\left(
        \optulweights{1}{i},
        \dots,
        {\optulweights{L-1}{i}}
        \right)
        \\
        \ouralpha{}{i}
        &=
        \optcoef[i]\cdot
        \norm{\optlweights{1}{i}}_F
        \cdots
        \norm{\optlweights{L-1}{i}}_F~.
    \end{split}
    \end{align}
    %
    %
    We use the $1$-positive-homogeneity 
    of the activation functions to show 
    $\forall{i\in\iter{\subnetNum^*}}$
    it holds that
    \begin{align}
    \begin{split}
       \subfunc{ \unit_i^{*}}{\xvec}
        \prod_{l=1}^{L-1}
            {
                \norm{\optlweights{l}{i}}_F
            }    
       &=
            \subfunc{
                \left(
                    \optulweights{1}{i},
                    \dots,
                    \optulweights{L-1}{i}
                \right)}{\xvec}
            \cdot
            \prod_{l=1}^{L-1}
            {
                \norm{\optlweights{l}{i}}_F
            }
        \\
        &=
        \subfunc{
            \left(
            \norm{\optlweights{1}{i}}_F
            \optulweights{1}{i},
            \dots,
            \norm{\optlweights{L-1}{i}}_F
            \optulweights{L-1}{i}
            \right)\!
        }{\xvec}
        \\
        &=
            \subfunc{ \weights_i^{*}}{\xvec}~.
    \end{split}
    \end{align}
    Now note that the constructed solution is feasible,
    since  
    $\unit_i \in \sphere, \forall{i\in\iter{\subnetNum^*}}$ and
    $\forall{\xvec\in\featureSpace}$ it holds
    \begin{align}
    \begin{split}
        \func{\params=\prn{k^*, \left\{\unit_i\right\}_{i=1}^{\subnetNum^*,\valpha}}}^L
        \prn{\xvec}
        &=
        \sum_{i=1}^{\subnetNum^*}
        \ouralpha{}{i}
        \subfunc{\unit_i}{\xvec}
        =
        \sum_{i=1}^{\subnetNum^*}
        {
            \optcoef[i]\cdot
            \prod_{l=1}^{L-1}
            {
                \norm{\optlweights{l}{i}}_F
            }
            \cdot
            \subfunc{ \unit_i^{*}}
                {\xvec}
        }
        \\
        &=
        \sum_{i=1}^{\subnetNum^*}
        {
            \optcoef[i]
            \cdot
            \subfunc{ \weights_i^{*}}{\xvec}
        }
        =
        \func{\params^*}^L\prn{\xvec}
        =
        f\prn{\xvec}
        ~.
    \end{split}
    \end{align}
    We now show that the value of the new solution,
    \ie its norm, equals
    \begin{align}
    \begin{split}
    \label{eq:meansIneq}
        \norm{\valpha}_{\order}^{\order}
        &=
        \sum_{i=1}^{\subnetNum^*}
        {
            \abs
            {
                \optcoef[i]
                \cdot
                \prod_{l=1}^{L-1}
                {
                    \norm{\optlweights{l}{i}}_F
                }
            }^{\order}
        }
        =
        \sum_{i=1}^{\subnetNum^*}
        {
            \left(
                \abs
                {
                    \optcoef[i]
                }^{2}
                \cdot
                \prod_{l=1}^{L-1}
                {
                    \norm{\optlweights{l}{i}}_F^2
                }
            \right)^{\frac{1}{L}}
        }
        \\
        &\le
        \sum_{i=1}^{\subnetNum^*}
        {
            \frac{1}{L}
            \left(
                \abs
                {
                    \optcoef[i]
                }^{2}
                +
                \sum_{l=1}^{L-1}
                {
                    \norm{\optlweights{l}{i}}_F^2
                }
            \right)
        }
        \\
        &
        =
        \frac{1}{L}
        \left(
            \norm{\optcoef}_2^2
            +
            \sum_{i=1}^{\subnetNum^*}
            \sum_{\lyrIdx=1}^{L-1}
            \norm{\optlweights{\lyrIdx}{i}}_F^2
        \right)
        =
        \Eucost_L\prn{\params^*}~,
    \end{split}
    \end{align}
    where
    we used the inequality of arithmetic and geometric means.

    As a conclusion, 
    we get the required inequality
    \begin{align}
    \begin{split}
        \norm{\valpha^*}_{\order}^{\order} 
        \le
        \norm{\valpha}_{\order}^{\order} 
        \le
        \Eucost_L\prn{\params^*}~.
    \end{split}
    \end{align}
  }
  %
  %
  \item[$\bullet~~{
        \Eucost_L\prn{\params^*} \le \norm{\valpha^*}^{\order}_{\order}
  }$:]
  {
    When \eqref{eq:lastLayerNormMin}
    is attainable,
    and
    given an optimal solution
    $\params=(\subnetNum^*, \valpha^*, \left\{\unit_i^*\right\}_{i=1}^{\subnetNum^*})$,
    we show how to attain a solution to
    the implementation cost formulation
    \eqref{eq:parllelImplmntCost}.
    
    We construct a solution $\params$ where $\forall{i\in\iter{\subnetNum^*}}$
    \begin{align}
    \label{eq:constructionOfCoef}
    \coef[i]=&
    \sign\prn{\ouralpha{*}{i}}
    {
        \abs{\ouralpha{*}{i}}^{\nicefrac{1}{L}}
    }
    \\
    \weights_i=&
    {
        \abs{\ouralpha{*}{i}}^{\nicefrac{1}{L}}
    }
    \cdot
    \unit_i^*~.
    \end{align}
    
    By using the $1$-positive-homogeneity
    property of the activation functions
    (once for every matrix in~$\unit_i$),
    we get that for all ${i\in\iter{\subnetNum^*}}$:
    \begin{align}
    \begin{split}
        \subfunc{\weights_i}{\xvec}
        =
        \subfunc{
        \abs{\ouralpha{*}{i}}^{\frac{1}{L}}  
        \unit_i^*}{\xvec}
        = 
        \abs{\ouralpha{*}{i}}^{\frac{L-1}{L}}
        \subfunc{\unit_i^*}{\xvec}
    \end{split}
    \end{align}
    Now, 
    the constructed solution can be shown to be feasible,
    since $\forall{\xvec\in\featureSpace}$:
    \begin{align}
    \begin{split}
        \func{\params}^L\prn{\xvec}
        &=
        \sum_{i=1}^{\subnetNum^*}
        {
        \coef[i] \subfunc{\weights_i}{\xvec}
        }
        =
        \sum_{i=1}^{\subnetNum^*}
        {
            \sign\prn{\ouralpha{*}{i}}
             \abs{\ouralpha{*}{i}}^{
                \frac{1}{L}+\frac{L-1}{L}}
            \subfunc{\unit_i^*}{\xvec}
        }
        \\
        &=
        \sum_{i=1}^{\subnetNum^*}
        {
            \sign\prn{\ouralpha{*}{i}}
            \abs{\ouralpha{*}{i}}
            \subfunc{\unit_i^*}{\xvec}
        }
        %
        =
        \sum_{i=1}^{\subnetNum^*}
        {
            \ouralpha{*}{i}
            \subfunc{\unit_i^*}{\xvec}
        }
        \\
        &=
        \func{\params=\prn{
            \subnetNum^*,            \left\{\unit_i^*\right\}_{i=1}^{\subnetNum^*,\valpha^*}}}^L\prn{\xvec}
        =
        f\prn{\xvec}~.
    \end{split}
    \end{align}
    The value of this solution is
    \begin{align}
    \begin{split}
    \Eucost_L\prn{\params}
    &=
    \frac{1}{L}
    \left(
        \norm{\coef}_2^2
        +
        \sum_{i=1}^{\subnetNum^*}
        \sum_{\lyrIdx=1}^{L-1}
        \norm{\lweights{\lyrIdx}{i}}_F^2
    \right)
    \\
    &
    =
    \frac{1}{L}
    \sum_{i=1}^{\subnetNum^*}
    \left(
        \abs{
        \ouralpha{*}{i}
        }^{\order}
        +
        \abs{
        \ouralpha{*}{i}
        }^{\order}
        \cdot
        \sum_{\lyrIdx=1}^{L-1}
        \underbrace{
            \norm{\ulweights{\lyrIdx}{i}}_F^2
        }_{=1}
    \right)
    \\
    &=
    \frac{1}{L}
    \sum_{i=1}^{\subnetNum^*}
    \left[{
        \abs{\ouralpha{*}{i}}^{\order}
        \left(
            {
                1 + \prn{L-1}
            }
        \right)
    }\right]
    =
    \frac{L}{L}
    \sum_{i=1}^{\subnetNum^*}
        \abs{\ouralpha{*}{i}}^{\order}
    =
    \norm{\valpha^*}_{\order}^{\order}~.
    \end{split}
    \end{align}
    %
    
    As a conclusion, 
    \begin{align}
        \Eucost_L\prn{\params^*}
        \le
        \Eucost_L\prn{\params}
        =
        \norm{\valpha^*}_{\order}^{\order}~.
    \end{align}
  }
\end{description}

Overall we get the required equality:
    \begin{align}
        \Eucost_L\prn{\params^*}
        =
        \norm{\valpha^*}_{\order}^{\order}~.
    \end{align}
\end{proof}

\removed{
\paragraph{Proof for \thmref{thm:equivalenceOnFiniteSets}}
$ $ \newline
We first state the following lemma,
\begin{lemma}
[Equivalence of loss minimization on finite sample sets]
\label{lem:equivalenceOnFiniteSets}
The following 
two norm minimization problems are equivalent 
and hold
\begin{align*}
    \begin{aligned}
    {\inf_{\params}} & 
    \norm{\params}_{2}^{2}
    \\ 
    \st~& \func{\params}\prn{x_{\smplIdx}} = f\prn{x_{\smplIdx}},~\forall{\smplIdx}
    \\
    \\
    \end{aligned}
    \begin{aligned}
    = L\cdot
    {\inf_{
        k, 
        \left\{\unit_i\right\}_{i=1}^{\subnetNum},
        \valpha
    }} & \norm{\valpha}_{\order}^{\order}
    \\ 
    \st~& \func{
        \valpha, \left\{\unit_i\right\}_{i=1}^{\subnetNum}
    }\!\prn{x_{\smplIdx}} = f\prn{x_{\smplIdx}},~\forall{\smplIdx}
    \\
    & \unit_i\in\sphere,~\forall{i\in\iter{\subnetNum}}
    \end{aligned}
\end{align*}
\end{lemma}

\begin{proof}
This is the same statement as in \thmref{prop:complexityRatio}
but on a finite sample set rather than on the entire domain of $f$.
The proof of \thmref{prop:complexityRatio} works the same,
since we show there that our constructed solutions
always preserve feasibility
(whether we demand the networks are equal to the functions over 
the entire domain of $f$,
or whether we only demand it over a finite set of $N$ points).
\end{proof}

Now we are ready for the main proof,
\begin{proof}
[Proof for \thmref{thm:equivalenceOnFiniteSets}]
We start by using the fact that optimizing over the network parameters
is equivalent to optimizing over the network possible outputs,
when constraining the network parameters to yield such outputs.
This allows us to consider the following problem equivalent 
to \eqref{eq:normalizedLossMinimization},
\begin{align}
\begin{aligned}
{{\inf_{\params}}~}
& \prn{
    \sum_{\smplIdx=1}^{\samples} 
    \loss\prn{\func{\params}\prn{x_{\smplIdx}}, y_{\smplIdx}}
    +
    \frac{\lambda}{L}
    \norm{\params}_2^2
    }
\end{aligned}
=
\min_{o_1,\dots,o_N}
\prn
{
    \sum_{\smplIdx=1}^{\samples} 
    \loss\prn{o_{\smplIdx}, y_{\smplIdx}}
    +
    \frac{\lambda}{L}
    \begin{aligned}
        \\
        \temp{\inf_{\params}} & \norm{\params}_2^2
        \\ 
        \st & \func{\params}\prn{x_{\smplIdx}} = o_{\smplIdx},~\forall{n\in\iter{\samples}
        }
        \hspace{-.1in}
    \end{aligned}
}%
~,
\end{align}
and the following problem equivalent to \eqref{eq:finiteConvexLossMinimization},
\begin{align}
\begin{aligned}
{{\inf_{
    k, 
    \left\{\unit_i\right\}_{i=1}^{\subnetNum},
    \valpha
}}~}
& \prn{
    \sum_{\smplIdx=1}^{\samples} 
    \loss\prn{
        \func{
            k, \left\{\unit_i\right\}_{i=1}^{\subnetNum},
            \valpha
        }{x_{\smplIdx}}, y_{\smplIdx}}
    +
    \lambda 
    \norm{\valpha}_{\order}^{\order}
}
\\
\st &
\unit_i \in \sphere,~\forall{i\in\iter{\subnetNum}}
\\
=
\min_{o_1,\dots,o_N}
&
\prn
{
    \sum_{\smplIdx=1}^{\samples} 
    \loss\prn{o_{\smplIdx}, y_{\smplIdx}}
    +
    \lambda
    \begin{aligned}
        \\
        \\
        \temp{\inf_{
            \valpha, \left\{\unit_i\right\}_{i=1}^{\subnetNum}
        }} & \norm{\valpha}_{\order}^{\order}
        \\ 
        \st & \func{
            \valpha, \left\{\unit_i\right\}_{i=1}^{\subnetNum}
        }\prn{x_{\smplIdx}} = o_{\smplIdx},~\forall{n\in\iter{\samples}
        }
        \hspace{-.15in}
        \\
        & \unit_i \in \sphere,~\forall{i\in\iter{\subnetNum}}
        \hspace{-.15in}
    \end{aligned}
}
~.
\end{aligned}
\end{align}

Using the above  \lemref{lem:equivalenceOnFiniteSets}
where we set $f\prn{x_{\smplIdx}}=o_{\smplIdx}$,
we see that the optimal value of 
the inner optimization problems
are equal up to a factor of $L$,
and the theorem follows immediately.
\end{proof}
}

\removed{
\subsection{Proof for \lemref{lem:positivityConstraint}}
\inote{If the lemma is removed,
remove this as well.}
\begin{proof}[sketch]
\label{app:positivityConstraint}
\todo{finish}
Given an optimal solution $\valpha^*$ to \eqref{eq:bridgeMeasure},
construct a solution $\tilde{\valpha}$ for \eqref{eq:withPosConstraints},
where $\tilde{\valpha}\prn{\weights, s}=\abs{\valpha\prn{\weights}}$
if $s=\sign\prn{\valpha\prn{\weights}}$ and $0$ otherwise. 
Clearly, 
$\norm{\valpha}_{\order}
=\norm{\tilde{\valpha}}_{\order}$,
and $\tilde{h}_{\tilde{\valpha}, \left\{\unit_i\right\}_{i=1}^{\subnetNum}}^{\subnetNum} 
=
{h}_{{\valpha}, \left\{\unit_i\right\}_{i=1}^{\subnetNum}}^{\subnetNum}$.

Given an optimal solution $\tilde{\valpha}^*$ to \eqref{eq:withPosConstraints}
construct a solution $\hat{\valpha}$ for \eqref{eq:bridgeMeasure},
where $\hat{\valpha}_{\weights} 
= 
\tilde{\valpha}^{*}_{\weights,1} - \tilde{\valpha}^{*}_{\weights, -1}$.
Again, it is clear that 
$\tilde{h}_{\tilde{\valpha}, \left\{\unit_i\right\}_{i=1}^{\subnetNum}}^{\subnetNum} 
=
{h}_{{\hat\valpha}, \left\{\unit_i\right\}_{i=1}^{\subnetNum}}^{\subnetNum}$.
Moreover, $\forall{\weights\in\sphere}$:
\begin{align}
\abs{\hat{\valpha}_{\weights}}^{\order}
=
\abs{\tilde{\valpha}^{*}_{\weights,1} - \tilde{\valpha}^{*}_{\weights, -1}}^{\order}
\le
\prn{
    \abs{\tilde{\valpha}^{*}_{\weights,1}} + \abs{\tilde{\valpha}^{*}_{\weights, -1}}
}^{\order}
\le
\abs{\tilde{\valpha}^{*}_{\weights,1}}^{\order} + \abs{\tilde{\valpha}^{*}_{\weights, -1}}^{\order}~,
\end{align}
\inote{cite the lecture notes, or (better) find a paper proving the last inequality.}
which means $\norm{\hat{\valpha}}_{\order}^{\order}
\le
\norm{\tilde{\valpha}^{*}}_{\order}^{\order}$.
Another corollary: 
The last inequality must hold with equality \inote{explain},
and so one the two values must be $0$ for a given $\weights$ \inote{explain also}.
\end{proof}
}

\paragraph{Proof for \thmref{thm:lastLayerSupportBound}}
$ $ \newline
\begin{proof}
\removed{
To prove this Theorem,
it is sufficient to show that if the optimal solution
to \eqref{eq:finiteConvexLossMinimization} is attainable, then there is a finite
with a support size at most $N$.}

When $L=2 \Rightarrow \frac{2}{L} = 1$,
our minimization problem is a standard $\ell_1$-penalized
problem.
\citet[Theorem~2]{rosset2007l1} 
use Carathéodory's theorem
and prove that there exists
an optimal solution with a finite support size,
which is at most $\samples+1$ (using Carathéodory's theorem).
\citet[Lemma~14]{tibshirani2013lasso} 
demonstrate how an iterative procedure 
(using a technique similar to what we do below)
can zero out an (attainable) optimal solution with a finite support,
until an optimal solution with a support size at most $N$
is attained.

We therefore narrow our proof to deeper networks
where $L\ge3 \Rightarrow \frac{2}{L} < 1$
and the norm in the objective function is no longer convex
(but quasi-convex).
Following a proof by 
\citet{petrov2018support},
we are able to show that when $L\ge 3$,
not only there exists an optimal solution with a finite support
of size at most $N$,
but \textbf{all} optimal solution are such.

We start by assuming the contrary -- 
there exists an optimal solution $\params^*$,
with a set of unit vectors 
$\left\{\unit_1,\dots,\unit_{\samples+1}\right\}\subseteq\params^*$
such that  w.l.o.g.
$\norm{\valpha}_0 = \samples + 1$ (that is, $\ouralpha{*}{i}
\neq 0,
\forall{i\in\iter{\samples+1}}$).
\removed{For ease of notation,
we denote $\ouralpha{*}{i}\triangleq \ouralpha{*}{}\left(\unit_{i}\right)$.}
We wish to find a non-zero vector ,
$\vb \in \R^{\samples+1}$
such that 
$\forall{i\in\iter{\samples+1},n\in\iter{\samples}}:
\vb_{i}
\subfunc{\unit_i}{x_{\smplIdx}}=0$.
Notice we have only $\samples$ constraints 
and $\samples+1$ variables, 
meaning we get an homogeneous underdetermined system.
We can thus always choose such a vector 
$\vbeta \in \R^{\subnetNum^*}$
such that
\begin{align}
\vbeta_i
=
 \begin{cases}
   \vb_{i} & 
    {i\in\iter{\samples+1}}\\
   0 &\text{otherwise} \\ 
 \end{cases}
\end{align}
and
$\sum_{i=1}^{\samples+1}
    \vbeta_i
    \subfunc{\unit_i}{x_{\smplIdx}}=0, 
    \forall{n\in\iter{\samples}}$,
which means the solutions
$\ouralpha{*}{} + \rho\vbeta, \forall{\rho\in\mathbb{R}}$
have the same network output
as $\ouralpha{*}{}$.
Now choose $\rho>0$ small enough, such that 
$\forall{i\in\iter{\samples+1}}$:
\begin{align}
\label{eq:rhoCondition}
    \abs{\ouralpha{*}{i}}
    - 
    \rho\abs{\vbeta_{i}}
    =
    s_{i}\ouralpha{*}{i}
    - 
    \rho\abs{\vbeta_{i}}
    > 0~,
\end{align}
where $s_{i} \triangleq \sign{\ouralpha{*}{i}}$.
The function ${z}^{\frac{2}{L}}$ is concave $\forall{z\ge0}$,
and so we apply the Jensen inequality to get
\begin{align}
\begin{split}
    {\norm{\ouralpha{*}{}}}_{\order}^{\order}
    \ge &
    \sum_{i}^{\samples+1}
    \abs{\ouralpha{*}{i}} ^ {\frac{2}{L}}
     = 
    \sum_{i}^{\samples+1}
    \left(s_{i}\ouralpha{*}{i}\right) ^ {\frac{2}{L}}
    \\
     = &
    \sum_{i}^{\samples+1}
    \left(
        \frac{1}{2}
        \left(
        s_{i}\ouralpha{*}{i} + \rho\vbeta_{i}
        \right)
        +
        \frac{1}{2}
        \left(
        s_{i}\ouralpha{*}{i} - \rho\vbeta_{i}\right)
    \right)
    ^{\frac{2}{L}}
    \\
    >  & 
    \frac{1}{2}
    \sum_{i}^{\samples+1}
    \left(
    {\underbrace{
    \left(s_{i}\ouralpha{*}{i} + \rho\vbeta_{i}\right)
    }_{>0}}^ {\frac{2}{L}} 
    +
    {\underbrace{
    \left(s_{i}\ouralpha{*}{i} - \rho\vbeta_{i}\right)}_{
    > 0
    }} ^ {\frac{2}{L}}
    \right)
    \\
    =  & 
    \frac{1}{2}
    \sum_{i}^{\samples+1}
    \left(
    {
    \abs{\ouralpha{*}{i} + s_{i}\rho\vbeta_{i}}
    }^ {\frac{2}{L}} +
    {
    \abs{\ouralpha{*}{i} - s_{i}\rho\vbeta_{i}}} ^ {\frac{2}{L}}
    \right)
    \\
    =  & 
    \frac{1}{2}
    \sum_{i}^{\samples+1}
    \left(
    {
    \abs{\ouralpha{*}{i} + \rho\vbeta_{i}}
    }^ {\frac{2}{L}} +
    {
    \abs{\ouralpha{*}{i} - \rho\vbeta_{i}}} ^ {\frac{2}{L}}
    \right)
    \\
    = & 
    \frac{1}{2}
    \left(
    \norm{
        \ouralpha{*}{} + \rho\vbeta}_{\order}^{\order}
    +
    \norm{
        \ouralpha{*}{} - \rho\vbeta}_{\order}^{\order}
    \right)~.
\end{split}
\end{align}
Notice we used the strict Jensen inequality,
since neither the function $z^{\frac{2}{L}}$ is linear,
nor are the two terms equal 
(following our choice of $\rho$ in \eqref{eq:rhoCondition}).

Finally, we notice the above imply that one of the two solutions
$\ouralpha{*}{} \pm \rho\vbeta$ must have a strictly smaller norm than $\ouralpha{*}{}$.
As a result of our choice of $\vbeta$, 
all three solutions have the same loss,
\ie
\begin{align}
    \sum_{\smplIdx=1}^{\samples}
    \loss\prn{\func{\ouralpha{*}{}}(x_{\smplIdx}}, y_{\smplIdx})
    =
    \sum_{\smplIdx=1}^{\samples}
    \loss\prn{\func{\ouralpha{*}{}+\rho\vbeta}(x_{\smplIdx}), y_{\smplIdx}}
    =
    \sum_{\smplIdx=1}^{\samples}
    \loss\prn{\func{\ouralpha{*}{}-\rho\vbeta}(x_{\smplIdx}}, y_{\smplIdx})~.
\end{align}
The above necessarily mean that at least one of these
two solutions we constructed
has an objective value strictly smaller 
than the objective value of $\ouralpha{*}{}$,
in contradiction to its optimality.
\end{proof}

\section{Proof of Claim \ref{claim:vanishing Hessian} \label{sec:vanishing Hessian proof}}

\begin{proof}
If $\left\Vert \mathbf{x}\right\Vert >b$ then,
\begin{align*}
 & \int_{\mathbb{S}^{d-1}}d\wvec\left[\mathbf{w}^{\top}\mathbf{x}+b\right]_{+}\\
 & \overset{\left(1\right)}{=}\int_{\mathbb{S}^{d-1}}d\wvec\left[\left\Vert \mathbf{x}\right\Vert w_{1}+b\right]_{+}\\
 & \overset{\left(2\right)}{=}\left\Vert \mathbf{x}\right\Vert \int_{-1}^{1}dw_{1}\left[w_{1}+\frac{b}{\left\Vert \mathbf{x}\right\Vert }\right]_{+}\int_{\wvec^{\prime}\in\mathbb{S}^{d-1}: w^{\prime}_1=w_1 } \prod_{i=2}^{d}dw_{i}^{\prime}\\
 & \overset{\left(3\right)}{=}\left\Vert \mathbf{x}\right\Vert \int_{\max\left[-1,-\frac{b}{\left\Vert \mathbf{x}\right\Vert }\right]}^{1}dw_{1}\left(w_{1}+\frac{b}{\left\Vert \mathbf{x}\right\Vert }\right)S_{d-1}\left(1-w_{1}^{2}\right)^{\frac{d-1}{2}}\\
 & \overset{\left(4\right)}{=}\left\Vert \mathbf{x}\right\Vert S_{d-1}\int_{-\frac{b}{\left\Vert \mathbf{x}\right\Vert }}^{1}dw_{1}w_{1}\left(1-w_{1}^{2}\right)^{\frac{d-1}{2}}+S_{d-1}\int_{-\frac{b}{\left\Vert \mathbf{x}\right\Vert }}^{1}dw_{1}\left(1-w_{1}^{2}\right)^{\frac{d-1}{2}}\\
 & \overset{\left(5\right)}{=}\left\Vert \mathbf{x}\right\Vert S_{d-1}\int_{-\frac{b}{\left\Vert \mathbf{x}\right\Vert }}^{1}dw_{1}w_{1}\left(1-w_{1}^{2}\right)^{\frac{d-1}{2}}+C+f_{\mathrm{odd}}\left(\frac{b}{\left\Vert \mathbf{x}\right\Vert }\right)\\
 & =\left\Vert \mathbf{x}\right\Vert \frac{S_{d-1}}{d+1}\left(1-\frac{b^{2}}{\left\Vert \mathbf{x}\right\Vert ^{2}}\right)^{\frac{d+1}{2}}+C+f_{\mathrm{odd}}\left(\frac{b}{\left\Vert \mathbf{x}\right\Vert }\right)
\end{align*}
In $\left(1\right)$ we assume $d>1$ and, WLOG, that $\mathbf{x}$
is in direction $\left(1,0,\dots,0\right)$, in $\left(2\right)$
we integrate over the surface area of a $d-2$ dimensional sphere, in $\left(3\right)$ we denoted $S_{d}$ as the
surface area of the $d$-sphere, in $\left(4\right)$ we assume that $\left\Vert \mathbf{x}\right\Vert >b$,
and in $\left(5\right)$ we used the fact that the integral of an
even function is equal to the sum of a constant $C$ with and odd function
$f_{\mathrm{odd}}$. 
Therefore, for $\left\Vert \mathbf{x}\right\Vert >b$
\begin{align*}
h(\xvec) & =\int_{\mathbb{S}^{d-1}}d\wvec\left(\left[\mathbf{w}^{\top}\mathbf{x}+1\right]_{+}-2\left[\mathbf{w}^{\top}\mathbf{x}\right]_{+}+\left[\mathbf{w}^{\top}\mathbf{x}-1\right]_{+}\right)\\
 & =2\left\Vert \mathbf{x}\right\Vert \frac{S_{d-1}}{d+1}\left[\left(1-\frac{1}{\left\Vert \mathbf{x}\right\Vert ^{2}}\right)^{\frac{d+1}{2}}-1\right]\\
 & =S_{d-1}\frac{1}{\left\Vert \mathbf{x}\right\Vert }+O\left(\frac{1}{\left\Vert \mathbf{x}\right\Vert ^{3}}\right)\,.
\end{align*}
From here it is straightforward to see that 
\[
\nabla h(\xvec)\sim-\frac{\mathbf{x}}{\left\Vert \mathbf{x}\right\Vert ^{3}}
\]
\[
\nabla^{2}h(\xvec)\sim\frac{1}{\left\Vert \mathbf{x}\right\Vert ^{3}}\left(3\frac{\mathbf{x}\mathbf{x}^{\top}}{\left\Vert \mathbf{x}\right\Vert ^{2}}-\mathbf{I}\right)
\]
Therefore, for some constants $r_0,C$, and any norm,
\begin{align*}
\frac{1}{r^{d-1}}\int_{\left\Vert \xvec\right\Vert \leq r}d\mathbf{x}\left\Vert \nabla^{2}h(\xvec)\right\Vert  & \leq\frac{C}{r^{d-1}}\int_{r_0}^{r}u^{d-1}\frac{1}{u^{3}}du=\frac{C}{r^{d-1}}\int_{r_0}^{r}u^{d-4}du\\
 & =\frac{C}{r^{d-1}}r^{d-3}=\frac{C}{r^{2}}
\end{align*}
Which vanishes as $r\rightarrow\infty$.
\end{proof}

\end{document}